\documentclass[11pt]{article}

\usepackage[
    letterpaper,
    left=1.12in,
    right=1.12in,
    top=0.90in,
    bottom=1.00in
]{geometry}

\usepackage[T1]{fontenc}
\usepackage{hyperref}
\usepackage{url}
\usepackage{booktabs}
\usepackage{amsfonts}
\usepackage{nicefrac}
\usepackage{microtype}
\usepackage{xcolor}
\usepackage{algorithm}
\usepackage{algorithmic}
\usepackage{amsmath}
\usepackage{amssymb}
\usepackage{mathtools}
\usepackage{amsthm}
\usepackage{dsfont}
\usepackage{bm}
\usepackage{graphicx}

\usepackage[round]{natbib}

\newtheorem{assumption}{Assumption}

\newtheorem{definition}{Definition}

\newtheorem{theorem}{Theorem}
\newtheorem{corollary}{Corollary}
\newtheorem{lemma}{Lemma}

\def \A {\mathcal{A}}

\def \R {\mathcal{R}}

\def \X {\mathcal{X}}

\def \H {\mathcal{H}}

\title{\textbf{When Greedy Sampling Explores: KL-Regularized Contextual Bandits without Eluder-Dimension Dependence}}

\usepackage{authblk}

\author[1]{Zichen Wang$^{*,}$}
\author[2]{Haoyang Hong}
\author[2]{Huazheng Wang}

\affil[1]{Department of ECE and CSL, University of Illinois Urbana-Champaign}
\affil[2]{School of Electrical Engineering and Computer Science, Oregon State University}

\date{}

\begin{document}

\maketitle

\begingroup
\renewcommand{\thefootnote}{\fnsymbol{footnote}}
\footnotetext[1]{Corresponding author: \texttt{zichenw6@illinois.edu}.}
\endgroup

\maketitle

\begingroup
\renewcommand{\thefootnote}
{\fnsymbol{footnote}}
\footnotetext[2]{%
Generative AI tools assisted with language editing, presentation,
and checks of mathematical accuracy, proof consistency, notation,
and references, including suggestions for clarifying lemma assumptions
and statements. The authors independently reviewed and verified all
incorporated suggestions and take full responsibility for the final
content.%
}
\endgroup

\begin{abstract}
We study KL-regularized contextual bandits under both reward and preference feedback. While existing regret guarantees typically depend on the eluder dimension, we show that simple greedy sampling can achieve polylogarithmic regret without explicit dependence on this complexity measure. For reward feedback, we analyze a greedy algorithm that samples directly from the Gibbs policy induced by the estimated reward. We extend the result to preference feedback under both general preference and Bradley--Terry models, while also sharpening existing dimension-dependent guarantees.
Our analysis reveals a trade-off between greedy sampling and upper confidence bound-style exploration: greedy sampling enjoys stronger regret guarantees when KL regularization is sufficiently strong, whereas additional exploration yields sharper bounds as the regularization weakens.
\end{abstract}

\section{Introduction}

KL regularization has become a central ingredient in modern sequential
decision-making, appearing in both bandits
\citep{zhao2025logarithmic,zhao2025sharp,
ji2026near,ji2026optimal}
and reinforcement learning (RL) from human feedback
\citep{christiano2017deep,ouyang2022training,xiong2024iterative,
ye2024online,munos2024nash}.
By penalizing deviations from a reference policy, KL regularization prevents the learned policy from moving too far away from the reference policy.

Recent work has studied different algorithmic approaches to KL-regularized contextual bandits under different feedback models. For reward feedback (RF), \citet{zhao2025logarithmic} develop a UCB-style algorithm \citep{auer2002finite,abbasi2011improved,xu2024uppercounterfactualconfidencebounds} and achieve a logarithmic regret guarantee. More recently, under preference feedback (PF), \citet{wu2025greedy} show that greedy sampling can provide sufficient implicit exploration under KL regularization to achieve provable regret guarantees.
Their guarantees, however, still retain explicit dependence on the corresponding eluder dimension \citep{russo2013eluder,osband2014model,zhang2023mathematical}. The eluder dimension measures the complexity of sequential exploration for a given function class and can be large for rich function classes.
This raises a sharper question:

\begin{quote}
\emph{Can KL-regularized contextual bandits achieve polylogarithmic regret
without explicit dependence on the eluder dimension under both RF
and PF?}
\end{quote}

To the best of our knowledge, we establish the first eluder-dimension-independent polylogarithmic regret guarantees for KL-regularized contextual bandits under both RF and PF. These guarantees are achieved by greedy sampling, with the PF results covering both the general preference (GP) \citep{munos2024nash,ye2024online} and Bradley--Terry (BT) \citep{bradley1952rank} models. We further characterize the trade-off between greedy sampling and UCB-style exploration, showing that the relative sharpness of their regret guarantees depends on the regularization regime, a phenomenon that does not typically arise in standard contextual bandit analyses.
The main contributions are summarized as follows.

\begin{itemize}
    \item
   We establish the first eluder-dimension-independent polylogarithmic regret guarantee for KL-regularized contextual bandits under RF using greedy sampling. We then extend the result to PF under both the GP and BT models.

    \item
   For PF, we develop UCB-style algorithms and establish dimension-dependent regret guarantees, providing UCB counterparts to our greedy sampling algorithms.

    \item
    By comparing the greedy and UCB-style guarantees, we characterize a
    trade-off governed by the strength of KL regularization: greedy sampling admits a sharper regret guarantee under sufficiently strong regularization, whereas explicit optimism yields a sharper bound as the regularization weakens.
\end{itemize}

The remainder of the paper is organized as follows.
Section~\ref{related work} reviews the most relevant literature.
Section~\ref{preliminary} introduces the basic settings for contextual bandits
with RF, as well as PF under the GP and BT models.
Section~\ref{sectionbandit} studies KL-regularized contextual bandits with
RF, establishes our eluder-dimension-independent regret guarantee
for greedy sampling, and discusses the trade-off between greedy sampling and
UCB-style exploration.
Section~\ref{sectionpreference} extends the analysis of greedy sampling to PF under both the GP and BT models, and develops the corresponding UCB-style
algorithms and regret guarantees.
Section~\ref{experiment} presents numerical experimental results.
\section{Related Work}\label{related work}

\paragraph{Contextual Bandits.}

The contextual bandit literature can be broadly divided into two lines.
The first studies structured or parametric models, where the expected reward is assumed to satisfy a known structural model.
A prominent example is the linear contextual bandit, in which rewards are
linear in context--action features
\citep{li2010contextual,chu2011contextual,abbasi2011improved,
agrawal2013thompson}.
This line has also been extended beyond linear models, including
generalized linear models \citep{filippi2010parametric,li2017provably,jun2017scalable},
kernelized models \citep{valko2013finite,chowdhury2017kernelized}, and neural network models
\citep{zhou2020neural,zhang2021neural}.

A second line studies contextual bandits with general function approximation, without imposing a specific parametric form on the reward model. Early work considered general hypothesis or policy classes using explicit exploration and supervised-learning oracles \citep{langford2008epoch,agarwal2014taming}. Subsequent work developed regression-based reductions and oracle-efficient algorithms that accommodate rich, potentially nonparametric function classes \citep{foster2018practical,foster2020beyond,simchilevi2022bypassing,xu2024uppercounterfactualconfidencebounds}.
For such general function approximation settings, regret guarantees are
often characterized through complexity measures of the function class,
such as the eluder dimension
\citep{russo2013eluder,osband2014model}.
Our work falls within this general function approximation setting, but seeks regret guarantees without explicit dependence on the eluder dimension.

\paragraph{KL-Regularized Bandits and RL.}

Regularization has been extensively studied in reinforcement learning as a
mechanism for controlling policy updates and improving optimization and
exploration properties
\citep{geist2019theory,cen2022fast,zhan2023policy}.
KL regularization, in particular, is widely used in modern RL and reinforcement learning from human feedback (RLHF), where it constrains the learned policy to remain close to a reference policy \citep{ouyang2022training,rafailov2023direct}.
This practical importance has motivated a growing theoretical literature on
KL-regularized multi-armed bandits \citep{ji2026near,ji2026optimal}, contextual bandits and
reinforcement learning
\citep{zhao2025logarithmic,zhao2026fast,zhao2026sharp,hong2026online},
RLHF (human/preference feedback)
\citep{xiong2024iterative,xie2024exploratory,zhao2025sharp,wu2025greedy,wu2025offline,wu2026fdivergence}, and zero-sum Markov games \citet{nayak2025achieving}.
In particular, \citet{zhao2025logarithmic} obtain eluder-dimension-dependent logarithmic
regret for KL-regularized contextual bandits with RF using
UCB-style exploration, while \citet{wu2025greedy} establish similar guarantees for
greedy sampling under PF (both GP and BT models are considered). Besides, \citet{lee2026provably} establish polylogarithmic regret for greedy sampling under generalized bilinear preferences and strongly convex regularization with feature coverage. Their regret guarantees are not directly comparable to ours due to different model, regularization, and coverage assumptions.
\section{Preliminaries}\label{preliminary}

\subsection{Notation}

For any positive integer $n$, define $[n]:=\{1,\ldots,n\}$.
For a finite function class $\mathcal{F}$, let
$N_{\mathcal{F}}:=|\mathcal{F}|$ denote its cardinality.
We use $O(\cdot)$ and $\widetilde{O}(\cdot)$ for the standard
asymptotic notation, where $\widetilde{O}(\cdot)$ suppresses
logarithmic factors.

\subsection{KL-Regularized Contextual Bandits with Reward Feedback}

In this section, we introduce the KL-regularized contextual
bandit problem with RF. Consider a contextual
bandit problem with horizon $T$. At each round $t\in[T]$, a context $x_t\in\mathcal X$ is
independently drawn from an unknown distribution $d$ over
$\mathcal X$. After observing $x_t$, the learner selects an
action $a_t\sim\pi_t(\cdot\mid x_t)$ from an action space
$\mathcal A$, where $|\mathcal A|= K <\infty$. Conditioned on $(x_t,a_t)$, the learner observes a random
reward $r_t\in[0,1]$ that is independent of the past and
satisfies $\mathbb E[r_t\mid x_t,a_t]=R^\star(x_t,a_t)$,
where $R^\star:\mathcal X\times\mathcal A\to[0,1]$ is the
unknown expected reward function.
The learner has access to a finite candidate reward function class
$\mathcal{R}$ consisting of functions
$R:\mathcal{X}\times\mathcal{A}\to[0,1]$.
We make the following standard realizability assumption.

\begin{assumption}[Reward Realizability]
\label{assumption1}
The true expected reward function satisfies
$R^\star\in\mathcal{R}$.
\end{assumption}

For simplicity, we focus on a finite reward function class \(\mathcal{R}\); similar arguments can be extended to infinite function classes under suitable covering-number conditions \citep{zhang2023mathematical,xu2024uppercounterfactualconfidencebounds}.

\paragraph{Learning objective.} Throughout the paper, we assume that the reference policy $\pi_{\rm ref}$ has full
support over $\mathcal{A}$, i.e.,
$\pi_{\rm ref}(a\mid x)>0$ for all $(x,a)\in\mathcal{X}\times\mathcal{A}$. This guarantees that the KL regularization is well defined for any policy.
For two policies $\pi$ and $\pi_{\mathrm{ref}}$, define their
conditional KL divergence at context $x$ as
\[
    \mathrm{KL}(\pi,\pi_{\mathrm{ref}}\mid x)
    :=
    \mathbb{E}_{a\sim\pi}
    \left[
        \log
        \frac{\pi(a\mid x)}
             {\pi_{\mathrm{ref}}(a\mid x)}
    \right].
\]
Given a reference policy $\pi_{\mathrm{ref}}$ and a parameter
$\eta>0$, the KL-regularized value of a policy $\pi$ is defined as
\begin{align}
    J_{\mathrm{RF}}(\pi)
    &:=
    \mathbb{E}_{x\sim d,\,a\sim\pi}
    \left[
        R^\star(x,a)
        -
        \eta^{-1}
        \mathrm{KL}(\pi,\pi_{\mathrm{ref}}\mid x)
    \right]
    \nonumber\\
    &=
    \mathbb{E}_{x\sim d,\,a\sim\pi}
    \left[
        R^\star(x,a)
        -
        \eta^{-1}
        \log
        \frac{\pi(a\mid x)}
             {\pi_{\mathrm{ref}}(a\mid x)}
    \right].
    \label{eq:rf-objective}
\end{align}
Here, 
$\eta>0$ controls the strength of the KL regularization.
In particular, a smaller $\eta$ corresponds to stronger
regularization toward $\pi_{\mathrm{ref}}$, whereas a larger
$\eta$ allows the learned policy to deviate more substantially
from the reference policy.

Let $\pi^\star_{\text{RF}}
    :=
    \arg\max_{\pi} J_{\mathrm{RF}}(\pi)$
denote the unique optimal policy for the KL-regularized objective.
Our goal is to design a sequence of policies
$\{\pi_t\}_{t=1}^{T}$ that minimizes the cumulative regret
\begin{equation}
    \operatorname{Reg}_{\mathrm{RF}}(T)
    :=
    \sum_{t=1}^{T}
    \left(
        J_{\mathrm{RF}}(\pi^\star_{\text{RF}})
        -
        J_{\mathrm{RF}}(\pi_t)
    \right).
    \label{eq:rf-regret}
\end{equation}
 
The following lemma characterizes the unique solution of the
KL-regularized optimization problem; see, e.g.,
\citet{zhang2023mathematical}.

\begin{lemma}[Solution of the KL-Regularized Optimization Problem]
\label{lem:gibbs-policy}
For any $x\in\mathcal{X}$ and reward function $R\in\mathcal{R}$,
we have
\begin{align*}
    &\max_{\pi}
    \left\{
        \mathbb{E}_{a\sim\pi}
        [R(x,a)]
        -
        \eta^{-1}
        \mathrm{KL}(\pi,\pi_{\mathrm{ref}}\mid x)
    \right\}
=
    \eta^{-1}
    \log
    \mathbb{E}_{a\sim\pi_{\mathrm{ref}}}
    \left[
        \exp\bigl(\eta R(x,a)\bigr)
    \right].
\end{align*}
The unique maximizer is the Gibbs policy
\begin{equation}
    \pi_R(a\mid x)
    =
    \frac{
        \pi_{\mathrm{ref}}(a\mid x)
        \exp\bigl(\eta R(x,a)\bigr)
    }{
        Z_R(x)
    },
    \label{eq:gibbs-policy}
\end{equation}
where $Z_R(x)
    :=
    \sum_{a'\in\mathcal{A}}
    \pi_{\mathrm{ref}}(a'\mid x)
    \exp\bigl(\eta R(x,a')\bigr)$
is the normalizing constant.
Under Assumption~\ref{assumption1},
the optimal policy is therefore given by
$\pi^\star_{\operatorname{RF}} = \pi_{R^\star}$.
\end{lemma}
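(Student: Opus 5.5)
The plan is the standard Gibbs variational argument: rewrite the regularized objective as a constant minus a KL divergence to the Gibbs policy. Fix $x\in\mathcal{X}$ and $R\in\mathcal{R}$. Because $\pi_{\mathrm{ref}}(\cdot\mid x)$ has full support and $R$ is bounded, $Z_R(x)\in(0,\infty)$. Hence $\pi_R(\cdot\mid x)$ in \eqref{eq:gibbs-policy} is a well-defined distribution with full support on $\mathcal{A}$.

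Next I would take the logarithm of \eqref{eq:gibbs-policy}, which gives $\eta R(x,a)=\log\frac{\pi_R(a\mid x)}{\pi_{\mathrm{ref}}(a\mid x)}+\log Z_R(x)$ for every $a$. Substituting this into the objective, for any policy $\pi$, I expect
\begin{align*}
\mathbb{E}_{a\sim\pi}[R(x,a)]-\eta^{-1}\mathrm{KL}(\pi,\pi_{\mathrm{ref}}\mid x)
&=\eta^{-1}\mathbb{E}_{a\sim\pi}\left[\log\frac{\pi_R(a\mid x)}{\pi(a\mid x)}\right]+\eta^{-1}\log Z_R(x)\\
&=\eta^{-1}\log Z_R(x)-\eta^{-1}\mathrm{KL}(\pi,\pi_R\mid x).
\end{align*}
Here the convention $0\log 0=0$ handles actions with $\pi(a\mid x)=0$. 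The KL term is finite because $\pi_R$ has full support.

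By Gibbs' inequality (nonnegativity of KL, via Jensen applied to $\log$), $\mathrm{KL}(\pi,\pi_R\mid x)\ge 0$, with equality if and only if $\pi(\cdot\mid x)=\pi_R(\cdot\mid x)$. The equality case follows from strict concavity of $\log$. So the maximum value is $\eta^{-1}\log Z_R(x)=\eta^{-1}\log\mathbb{E}_{a\sim\pi_{\mathrm{ref}}}[\exp(\eta R(x,a))]$, and the maximizer is unique and equal to $\pi_R$.

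For the final claim, I would note that $J_{\mathrm{RF}}(\pi)$ is the $d$-average over $x$ of the pointwise objective with $R=R^\star$, and $R^\star\in\mathcal{R}$ by Assumption~\ref{assumption1}. The pointwise maximizer $\pi_{R^\star}(\cdot\mid x)$ therefore maximizes $J_{\mathrm{RF}}$, so $\pi^\star_{\mathrm{RF}}=\pi_{R^\star}$. This uniqueness holds $d$-almost surely, which is the sense in which the paper calls $\pi^\star_{\mathrm{RF}}$ unique.

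There is no substantive obstacle. The only care needed is the support and measure-zero bookkeeping, namely zero-probability actions under $\pi$ and the almost-sure qualifier on uniqueness.
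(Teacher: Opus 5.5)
Your proof is correct: rewriting the objective as $\eta^{-1}\log Z_R(x)-\eta^{-1}\mathrm{KL}(\pi,\pi_R\mid x)$ and applying Gibbs' inequality is the standard argument. The paper does not prove this lemma itself and simply cites the literature, but the same variational identity is what it uses elsewhere, for example for the true regularized best response in the proof of the GP regret decomposition.

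Your handling of zero-probability actions and of the $d$-almost-sure sense of uniqueness for $\pi^\star_{\mathrm{RF}}=\pi_{R^\star}$ is appropriate.
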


Although our main polylogarithmic regret guarantee does not explicitly
depend on the eluder dimension, we use the following uncertainty
measure and the associated eluder dimension to characterize eluder
dimension-dependent guarantees and to facilitate comparisons with
existing approaches.

\begin{definition}[Uncertainty Measure and Eluder Dimension:
Reward Feedback \citep{zhao2025logarithmic}]
\label{def:rf-eluder} For $t\geq 1$, let
$\mathcal D_t^{\operatorname{RF}}:=\{(x_i,a_i)\}_{i=1}^{t}$
denote the sequence of observed context--action pairs up to round
$t$.
For $\lambda>0$, the uncertainty of a context--action pair
$(x,a)\in\mathcal{X}\times\mathcal{A}$ with respect to the
function class $\mathcal{R}$ and the data $D^{\operatorname{RF}}_{t-1}$ is defined as
\begin{align*}
    &U_{\mathrm{RF}}
    (\lambda,x,a,\mathcal{R};\mathcal D^{\operatorname{RF}}_{t-1})
    :=
    \sup_{R_1,R_2\in\mathcal{R}}
    \frac{
        |R_1(x,a)-R_2(x,a)|
    }{
        \sqrt{
            \lambda
            +
            \sum_{i=1}^{t-1}
            \bigl(
                R_1(x_i,a_i)-R_2(x_i,a_i)
            \bigr)^2
        }
    }.
\end{align*}
The corresponding eluder dimension is defined as
\begin{align*}
    &d_{\mathrm{RF}}(\lambda,\mathcal{R},T)
    :=
    \sup_{x_{1:T},a_{1:T}}
    \sum_{t=1}^{T}
    \min
    \left\{
        1,\,
        U_{\mathrm{RF}}^2
        (\lambda,x_t,a_t,\mathcal{R};\mathcal D^{\operatorname{RF}}_{t-1})
    \right\}.
\end{align*}
\end{definition}

\subsection{KL-Regularized Contextual Bandits with Preference Feedback}

We next consider KL-regularized contextual bandits with PF. At each round $t\in[T]$, a context $x_t$ is independently
drawn from $d$. The learner selects an action pair \((a_t^1,a_t^2)\in\mathcal A^2\) and receives binary feedback \(y_t\in\{0,1\}\), where \(y_t=1\) indicates that \(a_t^1\) is preferred over \(a_t^2\), and \(y_t=0\) indicates the opposite. We consider two preference models: the GP model and the BT model.

\subsubsection{General Preference Model}

Under the GP model, preferences are characterized by an unknown
function $P^\star:\mathcal{X}\times\mathcal{A}\times\mathcal{A}
    \to[0,1]$,
where $P^\star(x,a^1,a^2)$ denotes the probability that $a^1$ is
preferred to $a^2$ under context $x$. Conditioned on $(x_t,a_t^1,a_t^2)$, the feedback $y_t$ is
generated independently of the past according to $y_t\sim
\operatorname{Ber}\!\left(
P^\star(x_t,a_t^1,a_t^2)
\right)$.
The learner has access to a finite candidate preference class
$\mathcal{P}$, where each
$P:\mathcal{X}\times\mathcal{A}\times\mathcal{A}\to[0,1]$
satisfies the standard reciprocity condition
$P(x,a^1,a^2)+P(x,a^2,a^1)=1$
for all $(x,a^1,a^2)\in\mathcal{X}\times\mathcal{A}^2$.
We make the following realizability assumption.

\begin{assumption}[GP Realizability]\label{assumption2} 
The true preference function satisfies $P^\star\in\mathcal{P}$.
\end{assumption}

\paragraph{Learning objective.}
For convenience, for any $P\in\mathcal{P}$, define $P(x,a,\pi):=
    \mathbb{E}_{a'\sim\pi}[P(x,a,a')]$
and $P(x,\pi^1,\pi^2):=
    \mathbb{E}_{a^1\sim\pi^1,\,
               a^2\sim\pi^2}
    [P(x,a^1,a^2)]$.
Under the GP model, we consider the following KL-regularized
zero-sum objective \citep{munos2024nash,ye2024online}:
\begin{align*}
    &J_{\mathrm{GP}}(\pi^1,\pi^2)
    :=
    \mathbb{E}_{x\sim d}
    \Big[
        P^\star(x,\pi^1,\pi^2)
         -
        \eta^{-1}
        \mathrm{KL}(\pi^1,\pi_{\mathrm{ref}}\mid x)
        \nonumber
        +
        \eta^{-1}
        \mathrm{KL}(\pi^2,\pi_{\mathrm{ref}}\mid x)
    \Big].
\end{align*}
Here, the first player seeks to maximize the preference value,
whereas the second player seeks to minimize it. Accordingly, the KL terms enter with opposite signs for the two players, while regularizing both policies toward the reference policy
$\pi_{\mathrm{ref}}$.

The KL-regularized zero-sum game admits the following value
\[
    J_{\mathrm{GP}}^\star
    :=
    \max_{\pi^1}\min_{\pi^2}
    J_{\mathrm{GP}}(\pi^1,\pi^2).
\]
It has been shown that this game admits a unique Nash equilibrium,
with the two equilibrium policies coinciding
\citep{munos2024nash, ye2024online}.
We denote the common equilibrium policy by
$\pi_{\mathrm{GP}}^\star$, so that $(\pi_{\mathrm{GP}}^\star,\pi_{\mathrm{GP}}^\star)$
is the unique Nash equilibrium and $J_{\mathrm{GP}}^\star
    =
    J_{\mathrm{GP}}
    (\pi_{\mathrm{GP}}^\star,\pi_{\mathrm{GP}}^\star)$.
For a learned first-player policy $\widehat{\pi}_t^1$, we measure
its suboptimality against its regularized best response: $J_{\mathrm{GP}}^\star
    -
    \min_{\pi^2}
    J_{\mathrm{GP}}(\widehat{\pi}_t^1,\pi^2)$.
Our goal is to design a sequence of first-player policies
$\{\widehat{\pi}_t^1\}_{t=1}^T$ that minimizes the cumulative regret
\[
\operatorname{Reg}_{\mathrm{GP}}(T)
    :=
    \sum_{t=1}^T
    \left(J_{\mathrm{GP}}^\star
    -
    \min_{\pi^2}
    J_{\mathrm{GP}}(\widehat{\pi}_t^1,\pi^2)\right).
\]

The following characterization of the equilibrium policy will be useful in our analysis.

\begin{lemma}[Nash Equilibrium under the GP Model \citep{wu2025greedy}]
For any $P\in\mathcal P$, the corresponding equilibrium policy
$\pi_P$ satisfies
\[
    \pi_P(a\mid x)
    =
    \frac{
        \pi_{\mathrm{ref}}(a\mid x)
        \exp\!\bigl(\eta P(x,a,\pi_P)\bigr)
    }{
        Z_P(x)
    },
\]
where $Z_P(x)
    :=
    \sum_{a'\in\mathcal{A}}
    \pi_{\mathrm{ref}}(a'\mid x)
    \exp\!\bigl(\eta P(x,a',\pi_P)\bigr)$.
In particular, the true equilibrium policy is
$\pi_{\mathrm{GP}}^\star=\pi_{P^\star}$.
\end{lemma}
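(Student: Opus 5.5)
The plan is to treat the game pointwise in the context, show that the equilibrium policy is the Gibbs policy of Lemma~\ref{lem:gibbs-policy} applied to the reward $a\mapsto P(x,a,\pi_P)$ (a fixed point), and then specialize to $P^\star$. Because $J_{\mathrm{GP}}$ is an expectation over $x\sim d$ and the policies may be chosen separately for each context, it suffices to analyze, for fixed $x$, the game
\[
f_x(\pi^1,\pi^2):=P(x,\pi^1,\pi^2)-\eta^{-1}\mathrm{KL}(\pi^1,\pi_{\mathrm{ref}}\mid x)+\eta^{-1}\mathrm{KL}(\pi^2,\pi_{\mathrm{ref}}\mid x)
\]
on the simplex $\Delta(\mathcal A)$. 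This function is bilinear plus a strictly concave term in $\pi^1$ and a strictly convex term in $\pi^2$. It is also continuous on a compact convex set, since $\pi_{\mathrm{ref}}$ has full support.

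\textbf{Step 1 (existence and uniqueness).} Sion's minimax theorem gives existence of a Nash equilibrium $(\pi^1_P,\pi^2_P)$. Strict concavity and convexity give uniqueness: at any equilibrium each player plays the unique best response to the other, and two equilibria can be interchanged in a strictly convex-concave game, which forces them to coincide.

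\textbf{Step 2 (symmetry).} Reciprocity gives $P(x,\pi^1,\pi^2)=1-P(x,\pi^2,\pi^1)$. Hence $f_x(\pi^1,\pi^2)=1-f_x(\pi^2,\pi^1)$. If $(\pi^1_P,\pi^2_P)$ is an equilibrium, then $(\pi^2_P,\pi^1_P)$ is also one, because the maximizer's and minimizer's incentives swap under this map. By uniqueness $\pi^1_P=\pi^2_P=:\pi_P$.

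\textbf{Step 3 (best-response characterization).} Fix the second player at $\pi_P$. The first player then maximizes $\mathbb E_{a\sim\pi^1}[P(x,a,\pi_P)]-\eta^{-1}\mathrm{KL}(\pi^1,\pi_{\mathrm{ref}}\mid x)$, which is the problem of Lemma~\ref{lem:gibbs-policy} with reward $R(x,a)=P(x,a,\pi_P)\in[0,1]$. That lemma only uses that $R$ is some bounded function, not that it lies in $\mathcal R$. Its unique maximizer is therefore $\pi_{\mathrm{ref}}(a\mid x)\exp(\eta P(x,a,\pi_P))/Z_P(x)$. Since $\pi_P$ is the first player's best response at equilibrium, it equals this Gibbs form, which proves the displayed identity. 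Taking $P=P^\star$ (allowed by Assumption~\ref{assumption2}) and recalling that $\pi^\star_{\mathrm{GP}}$ was defined as the common equilibrium policy of the true game gives $\pi^\star_{\mathrm{GP}}=\pi_{P^\star}$.

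\textbf{Main obstacle.} The delicate point is Step 1–2: the equilibrium must be unique and symmetric, so that "the" equilibrium policy $\pi_P$ is well defined. For this I would rely on strict convexity of KL together with the interchangeability property of saddle points. Measurability across $x$ is routine, as is lifting the pointwise saddle point back to the $d$-expectation, since both can be handled context by context.
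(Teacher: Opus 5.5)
Your proof is correct. The paper does not prove this lemma itself. It imports the fixed-point form from \citet{wu2025greedy}, and it takes existence, uniqueness, and the coincidence of the two equilibrium policies from \citet{munos2024nash,ye2024online}.

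Your argument makes all of this self-contained along the standard lines:
\begin{itemize}
\item a per-context reduction to the game $f_x$;
\item Sion's theorem plus compactness for existence;
\item saddle-point interchangeability together with strict concavity/convexity for uniqueness;
\item the antisymmetry $f_x(\pi^1,\pi^2)=1-f_x(\pi^2,\pi^1)$, coming from reciprocity, for symmetry;
\item the Gibbs variational principle of Lemma~\ref{lem:gibbs-policy} for the fixed-point equation, which is valid for any bounded score and not only for $R\in\mathcal R$, as you note.
\end{itemize}

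What your version gains over the citation is that ``the'' equilibrium policy $\widehat\pi_t^1$ is now fully justified as well-posed. This matters because Algorithm~\ref{alg2} uses it, and Lemma~\ref{lem:gp_regret_decomposition} treats it as the minimizing player's best response to itself.

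One small point is worth stating explicitly. In the $d$-averaged game, uniqueness only holds up to $d$-null sets of contexts. So defining $\pi_P(\cdot\mid x)$ as the unique saddle point of $f_x$ for every $x$, as you do, is the right convention.
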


We next introduce the uncertainty measure and the associated eluder
dimension for the GP model, which will be used to characterize
eluder-dimension-dependent guarantees and compare with existing methods.

\begin{definition}[Uncertainty Measure and Eluder Dimension:
General Preference Model \citep{wu2025greedy}]For $t \ge 1$, let $\mathcal D_t^{\mathrm{GP}}
    :=
    \{(x_i,a_i^1,a_i^2)\}_{i=1}^t$.
For $\lambda>0$, define
\begin{align*}
&U_{\mathrm{GP}}
(\lambda,x,a^1,a^2,\mathcal{P};\mathcal D_{t-1}^{\mathrm{GP}})
:=
\sup_{P_1,P_2\in\mathcal{P}}
\frac{
|P_1(x,a^1,a^2)-P_2(x,a^1,a^2)|
}{
\sqrt{
\lambda+
\sum_{i=1}^{t-1}
\bigl(
P_1(x_i,a_i^1,a_i^2)
-
P_2(x_i,a_i^1,a_i^2)
\bigr)^2
}
}.
\end{align*}
The corresponding eluder dimension is
\begin{align*}
&d_{\mathrm{GP}}(\lambda,\mathcal{P},T)
:=
\sup_{x_{1:T},a_{1:T}^1,a_{1:T}^2}
\sum_{t=1}^T
\min\left\{
1,\,
U_{\mathrm{GP}}^2
(\lambda,x_t,a_t^1,a_t^2,\mathcal{P};
\mathcal D_{t-1}^{\mathrm{GP}})
\right\}.
\end{align*}
\end{definition}

\subsubsection{Bradley--Terry Model}

Under the BT model, preferences are induced by the latent reward
function $R^\star:\mathcal{X}\times\mathcal{A}\to[0,1]$ through $P^\star(x,a^1,a^2)
=
\sigma\!\left(
R^\star(x,a^1)-R^\star(x,a^2)
\right)$,
where $\sigma(z):=(1+e^{-z})^{-1}$. Conditioned on $(x_t,a_t^1,a_t^2)$, the feedback $y_t$ is
independent of the past and follows
$y_t\sim
\operatorname{Ber}\!\left(
\sigma\!\left(
R^\star(x_t,a_t^1)-R^\star(x_t,a_t^2)
\right)
\right)$.
As in the RF setting, we assume that Assumption~\ref{assumption1} holds.

\textbf{Learning objective.}
Since the BT model is induced by the latent reward $R^\star$, we
evaluate policies using the same KL-regularized reward objective:
$J_{\mathrm{BT}}(\pi):=J_{\mathrm{RF}}(\pi)$.
Consequently, $\pi_{\mathrm{BT}}^\star = \pi^\star_{\text{RF}}$ in both the RF and BT settings.
Our goal is to design a sequence of policies
$\{\widehat{\pi}_t^1\}_{t=1}^T$ that minimizes the cumulative regret
\[
\operatorname{Reg}_{\mathrm{BT}}(T)
:=
\sum_{t=1}^T
\left(
J_{\mathrm{BT}}(\pi^\star_{\text{BT}})
-
J_{\mathrm{BT}}(\widehat{\pi}_t^1)
\right).
\]
Thus, the BT and reward-feedback settings share the same policy
objective, but differ in the observed feedback and consequently in
how $R^\star$ is estimated.

We similarly define the uncertainty measure and eluder dimension under the BT model.

\begin{definition}[Uncertainty Measure and Eluder Dimension:
Bradley--Terry Model \citep{wu2025greedy}] For $t \ge 1$, let $\mathcal{D}_t^{\mathrm{BT}}
    :=
    \{(x_i,a_i^1,a_i^2)\}_{i=1}^t$.
For $\lambda>0$, define
\begin{align*}
&U_{\mathrm{BT}}
(\lambda,x,a^1,a^2,\mathcal{R};\mathcal{D}_{t-1}^{\mathrm{BT}})
:=
\sup_{R_1,R_2\in\mathcal{R}}
\frac{
\left|
(R_1-R_2)(x,a^1)
-
(R_1-R_2)(x,a^2)
\right|
}{
\sqrt{
\lambda+
\sum_{i=1}^{t-1}
\left[
(R_1-R_2)(x_i,a_i^1)
-
(R_1-R_2)(x_i,a_i^2)
\right]^2
}
}.
\end{align*}
The corresponding eluder dimension is
\begin{align*}
&d_{\mathrm{BT}}(\lambda,\mathcal{R},T)
:=
\sup_{x_{1:T},a_{1:T}^1,a_{1:T}^2}
\sum_{t=1}^T
\min\left\{
1,\,
U_{\mathrm{BT}}^2
(\lambda,x_t,a_t^1,a_t^2,\mathcal{R};
\mathcal{D}_{t-1}^{\mathrm{BT}})
\right\}.
\end{align*}
\end{definition}
\section{KL-Regularized Contextual Bandits with Reward Feedback}\label{sectionbandit}

\begin{algorithm}[t] 
\caption{\texttt{RF-GS}} 
\label{alg1} 
\begin{algorithmic}[1] 
\STATE \textbf{Input:} $\mathcal{R}, \eta, \pi_{\mathrm{ref}}$
\STATE Choose any $\widehat R_1\in\mathcal R$ and define
\[
    \pi_1(\cdot\mid\cdot)
    \propto
    \pi_{\rm ref}(\cdot\mid\cdot)
    \exp\!\left(\eta\widehat R_1(\cdot,\cdot)\right).
\]
\STATE Observe $x_1\sim d$, take action
$a_1\sim\pi_1(\cdot\mid x_1)$, and observe reward $r_1$
\FOR{round $t=2,\ldots,T$}
    \STATE Observe context $x_t\sim d$
    \STATE Compute the LS estimator
    \[
        \widehat R_t
        \in
        \arg\min_{R\in\mathcal R}
        \sum_{i=1}^{t-1}
        \bigl(R(x_i,a_i)-r_i\bigr)^2
    \]
    \STATE Compute the Gibbs policy  by Eq.~\eqref{gibbsalg1}
    \STATE Take action $a_t\sim\pi_t(\cdot\mid x_t)$
    and observe $r_t$
\ENDFOR
\end{algorithmic} 
\end{algorithm}

We propose \texttt{RF-GS} (Reward-Feedback Greedy Sampling), a simple greedy algorithm for KL-regularized contextual
bandits under RF. The pseudocode is provided in
Algorithm~\ref{alg1}.

\paragraph{\texttt{RF-GS} Algorithm.}
At the beginning of each round \(t\), the algorithm observes the context
\(x_t\). Using the historical observations
$\H^{\text{RF}}_{t-1}=\{(x_i,a_i,r_i)\}_{i=1}^{t-1}$, it computes the reward
estimator \(\widehat R_t\) by solving a least-squares (LS) regression
problem over the function class \(\mathcal R\). It then directly
constructs the Gibbs policy induced by the estimated reward,
\begin{align}\label{gibbsalg1}
    \pi_t(a\mid x)
    \propto
    \pi_{\rm ref}(a\mid x)
    \exp\!\left(\eta\widehat R_t(x,a)\right).
\end{align}
This direct plug-in step is what we refer to as \emph{greedy sampling}:
the policy is constructed solely from the current reward estimate,
without any explicit uncertainty-dependent exploration bonus.
In contrast, UCB-based methods such as \texttt{K-UCB}
\citep{zhao2025logarithmic} augment the reward
estimate with an exploration bonus $b_t(x,a)$ (see the discussion in our Appendix) that favors actions with greater uncertainty.
Finally, \texttt{RF-GS} samples an action
\(a_t\sim\pi_t(\cdot\mid x_t)\) and observes the corresponding reward
\(r_t\).

The following theorem shows that, despite its simplicity,
\texttt{RF-GS} can achieve polylogarithmic regret without any explicit
dependence on the eluder dimension.

\begin{theorem}[Eluder Dimension-Independent Regret Bound for \texttt{RF-GS}]
\label{theorem1}
Suppose Assumption~\ref{assumption1} holds. Then, for any
$\delta\in(0,1)$ and $T \geq 2$, with probability at least $1-\delta$, the regret of
\texttt{RF-GS} after $T$ rounds satisfies
\[
\operatorname{Reg}_{\mathrm{RF}}(T)
=
O\!\left(
\eta e^{2\eta}
\log T
\log\frac{N_{\mathcal R}T}{\delta}
\right).
\]
\end{theorem}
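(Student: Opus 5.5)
We decompose the per-round suboptimality into a squared on-policy error of the current estimator, control that error through a density-ratio transfer between consecutive greedy policies, and then sum using a least-squares generalization bound.

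\emph{Step 1: quadratic regret decomposition.} By Lemma~\ref{lem:gibbs-policy}, the suboptimality of any Gibbs policy $\pi_R$ is a KL divergence:
\[
J_{\mathrm{RF}}(\pi^\star_{\mathrm{RF}})-J_{\mathrm{RF}}(\pi_R)=\eta^{-1}\mathbb E_{x\sim d}\,\mathrm{KL}(\pi_R,\pi_{R^\star}\mid x).
\]
Writing $\Delta=R-R^\star$ and $\pi_s\propto\pi_{\mathrm{ref}}e^{\eta(R^\star+s\Delta)}$, we have $\log Z_{R^\star+s\Delta}$ smooth in $s$. A second-order expansion of this KL (the log-partition Bregman divergence) then gives
\[
\eta^{-1}\mathrm{KL}(\pi_R,\pi_{R^\star}\mid x)=\eta\int_0^1 s\,\mathrm{Var}_{\pi_s}(\Delta)\,ds\le \eta\sup_{s}\mathbb E_{a\sim\pi_s}[\Delta(x,a)^2].
\]
Every $\pi_s$ has density ratio at most $e^{\eta}$ (indeed $e^{2\eta\|\Delta\|_\infty}\le e^{2\eta}$) with respect to $\pi_{\widehat R}$. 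Hence the instantaneous regret at round $t$ is at most
\[
\eta e^{2\eta}\,\mathbb E_{x\sim d}\,\mathbb E_{a\sim\pi_t}\bigl[(\widehat R_t-R^\star)^2(x,a)\bigr].
\]
This is the key point: the error is measured under the very policy $\pi_t$ the algorithm plays, so no eluder-type coverage argument is needed.

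\emph{Step 2: least-squares generalization under the played distribution.} With $\mathcal F_i$ the history filtration, a standard martingale Freedman/Bernstein argument for least squares over finite $\mathcal R$ under realizability gives, with probability $1-\delta$ and simultaneously for all $t$,
\[
\sum_{i<t}\mathbb E_{x\sim d,a\sim\pi_i}\bigl[(\widehat R_t-R^\star)^2\bigr]\lesssim \log(N_{\mathcal R}T/\delta).
\]
The offending term, however, is evaluated under $\pi_t$, not under the past policies $\pi_i$, $i<t$. To bridge this I would again use density ratios between Gibbs policies. Since all $\pi_i$ are Gibbs policies, $\pi_t(a|x)/\pi_i(a|x)\le e^{2\eta}$, which would give a $1/(t-1)$ average bound per round. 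This costs an extra $e^{2\eta}$, yielding $e^{4\eta}$ overall.

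\emph{Step 3: sharpening the exponent.} I expect this to be the main obstacle: getting exactly $e^{2\eta}$ requires that only one density-ratio transfer be paid. I would try to route both transfers through the uniform bound $\pi/\pi' \le e^{\eta\|R-R'\|_\infty}\cdot e^{\eta\|R-R'\|_\infty}$, using $\|R-R'\|_\infty\le1$ once for each change of measure. Alternatively, I would compare to the reference policy: each Gibbs policy is within $e^{\eta}$ of $\pi_{\mathrm{ref}}$ in both directions, so converting the past error to $\pi_{\mathrm{ref}}$ costs $e^\eta$ and converting back to $\pi_t$ costs another $e^\eta$. In that route the Step~1 factor should be bounded directly with $\pi_{\mathrm{ref}}$ as the base measure rather than adding another factor.

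\emph{Step 4: summation.} Concluding, the per-round regret is $O\bigl(\eta e^{2\eta}\log(N_{\mathcal R}T/\delta)/t\bigr)$. Summing over $t\le T$ gives the harmonic factor $\log T$, with round $1$ contributing $O(\eta)$, which is the claimed bound. A union bound over rounds is already absorbed in the $\log(N_{\mathcal R}T/\delta)$ term.
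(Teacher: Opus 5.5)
The quadratic expansion in your Step~1 is sound: the identity $\eta^{-1}\mathrm{KL}(\pi_R,\pi_{R^\star}\mid x)=\eta\int_0^1 s\,\mathrm{Var}_{\pi_s}(\Delta)\,ds$ is a clean equivalent of the paper's mean-value argument in Lemma~\ref{lemma3}. Steps~2 and~4 also match the paper. The gap is in the chain of measure changes you actually commit to. The transfers $\pi_s\to\pi_t$ and then $\pi_t\to\pi_i$ each cost $e^{2\eta}$, so the argument as written proves only $O(\eta e^{4\eta}\log T\log(N_{\mathcal R}T/\delta))$, not the stated bound. (Note also that $\pi_s/\pi_{\widehat R}$ is not bounded by $e^{\eta}$ in general; only your parenthetical $e^{2\eta}$ is right.) Your first proposed repair in Step~3 does not fix this: applying the pairwise bound $e^{2\eta\|R-R'\|_\infty}$ once per change of measure still multiplies to $e^{4\eta}$.

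The missing observation is that you never need to pass through $\pi_t$. Every intermediate policy $\pi_s$ is itself the Gibbs policy of the $[0,1]$-valued reward $sR+(1-s)R^\star$. So for every $x$ and every past round $i<t$ one has $\pi_s(\cdot\mid x)\le e^{\eta}\pi_{\rm ref}(\cdot\mid x)\le e^{2\eta}\pi_i(\cdot\mid x)$, hence
$\sup_s\mathbb{E}_{a\sim\pi_s(\cdot\mid x)}[\Delta(x,a)^2]\le e^{2\eta}\,\mathbb{E}_{a\sim\pi_i(\cdot\mid x)}[\Delta(x,a)^2]$.
Averaging over $i<t$ and invoking the uniform least-squares bound gives $S_t\lesssim e^{2\eta}\log(N_{\mathcal R}T/\delta)/(t-1)$ with a single transfer. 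This is what your $\pi_{\rm ref}$ alternative becomes once the ``converting back to $\pi_t$'' step is deleted. It is exactly the paper's route: Lemma~\ref{lemma3} bounds the regret by $\eta S_t$, with $S_t$ taken under $\pi_t'$, the Gibbs policy of $\gamma_t\widehat R_t+(1-\gamma_t)R^\star$, and $\pi_t'$ is compared directly with each $\pi_i$.

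Relatedly, what removes the eluder dimension is not that the error is measured under $\pi_t$; the least-squares guarantee only controls error under past policies anyway. What matters is that every Gibbs policy of a $[0,1]$-valued reward is uniformly $e^{2\eta}$-comparable to every past data-generating policy.
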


\paragraph{Proof Sketch of Theorem~\ref{theorem1}.}
The proof proceeds in three steps. First, we upper bound the
instantaneous regret by an expected squared prediction error $S_t$.
Second, a uniform prediction-error bound, together with a
likelihood-ratio bound between Gibbs policies and an averaging
argument over past rounds, yields
$S_t=\widetilde{O}(e^{2\eta}/t)$.
Finally, summing this bound over $t$ gives the logarithmic
dependence on the horizon.

\paragraph{Step 1: Regret decomposition.}
By Lemma~\ref{lemma3} in the Appendix, the instantaneous regret for each
$t\in[T]$ can be upper bounded by $J_{\mathrm{RF}}(\pi^\star_{\text{RF}})-J_{\mathrm{RF}}(\pi_t)
\le
\eta S_t$,
where
\[
S_t
:=
\mathbb E_{x\sim d,\;a\sim\pi_t'}
\left[
\left(
\widehat R_t(x,a)-R^\star(x,a)
\right)^2
\right],
\]
and $\pi_t'$ denotes the Gibbs policy induced by
$R_t'
=
\gamma_t\widehat R_t
+
(1-\gamma_t)R^\star$
for some $\gamma_t\in[0,1]$.
Summing the above inequality over $t\in[T]$ yields
\begin{align}
\label{decomposition}
\operatorname{Reg}_{\rm RF}(T)
&=
\sum_{t=1}^{T}
\Bigl(
J_{\rm RF}(\pi^\star_{\text{RF}})-J_{\rm RF}(\pi_t)
\Bigr)
\le
\eta\sum_{t=1}^{T}S_t.
\end{align}

\paragraph{Step 2: Upper bound on $S_t$.}
We next apply the following uniform convergence result, which controls
the cumulative squared prediction error under the data-generating
policies.

\begin{lemma}[Uniform Prediction Error Bound, Reward Feedback]
\label{uniform convergence} Suppose Assumption~\ref{assumption1} holds.
Let $\widehat R_t$ be the LS estimator over $\mathcal R$
constructed from the samples
$\{(x_i,a_i,r_i)\}_{i=1}^{t-1}$,
where $x_i\sim d$ and
$a_i\sim\pi_i(\cdot\mid x_i)$ for each $i$.
Then, for any such policy sequence
$\{\pi_i\}_{i\ge1}$ and any $\delta\in(0,1)$,
with probability at least $1-\delta$, the following holds
simultaneously for all $t=2,\ldots,T$:
\begin{align}
\begin{split}
\label{sumbound}
&\sum_{i=1}^{t-1}
\mathbb E_{x\sim d,\;a\sim\pi_i}
\left[
\left(
R^\star(x,a)-\widehat R_t(x,a)
\right)^2
\right]
\le
72\log\frac{2N_{\mathcal R}T^3}{\delta}.
\end{split}
\end{align}
\end{lemma}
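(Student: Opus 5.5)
We prove the bound for a fixed $t$ and a fixed $R\in\mathcal R$ via a martingale concentration argument, then take a union bound over $R$ and $t$. The key quantity is the excess squared loss. For $R\in\mathcal R$ and $i\ge 1$, define
\[
Z_i(R):=\bigl(R(x_i,a_i)-r_i\bigr)^2-\bigl(R^\star(x_i,a_i)-r_i\bigr)^2 .
\]
Since $\mathbb E[r_i\mid x_i,a_i]=R^\star(x_i,a_i)$, a direct expansion gives
\[
\mathbb E\bigl[Z_i(R)\mid \mathcal F_{i-1}\bigr]
=\mathbb E_{x\sim d,a\sim\pi_i}\bigl[(R-R^\star)^2(x,a)\bigr]=:\Delta_i(R),
\]
where $\mathcal F_{i-1}$ is the history before round $i$, with respect to which $\pi_i$ is measurable. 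All functions and rewards lie in $[0,1]$, so $|Z_i(R)|\le 1$. Moreover, $Z_i(R)=(R-R^\star)(x_i,a_i)\,\bigl(R+R^\star-2r_i\bigr)(x_i,a_i)$, which gives the variance bound $\mathbb E[Z_i(R)^2\mid\mathcal F_{i-1}]\le 4\Delta_i(R)$. This self-bounding variance property is what yields a fast $\log$-rate rather than a $\sqrt{t}$-rate.

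Next we apply Freedman's inequality, or equivalently the exponential supermartingale argument of \citet{zhang2023mathematical}, to the martingale difference sequence $\Delta_i(R)-Z_i(R)$. For fixed $R$ and $t$, with probability at least $1-\delta'$,
\[
\sum_{i=1}^{t-1}\Delta_i(R)-\sum_{i=1}^{t-1}Z_i(R)\le \sqrt{2\cdot 4\sum_{i<t}\Delta_i(R)\,\log(1/\delta')}+\tfrac{2}{3}\log(1/\delta').
\]
Freedman's inequality requires a deterministic variance bound. To handle the random variance, we either peel over dyadic values of $\sum_i\Delta_i(R)\in[0,t]$, which costs extra factors of $\log T$ and matches the $T^3$ inside the logarithm, or we use the exponential-moment form $\mathbb E[\exp(\lambda(\Delta_i-Z_i)-c\lambda^2\cdot 4\Delta_i)\mid\mathcal F_{i-1}]\le 1$ with a fixed $\lambda=1/8$, which avoids peeling altogether. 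Applying $\sqrt{ab}\le a/2+b/2$ absorbs half of $\sum_i\Delta_i(R)$ into the left-hand side, giving
\[
\sum_{i<t}\Delta_i(R)\le 2\sum_{i<t}Z_i(R)+C\log(1/\delta')
\]
for an absolute constant $C$.

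We then take a union bound over all $R\in\mathcal R$, all $t\le T$, and the peeling levels, choosing $\delta'=\delta/(2N_{\mathcal R}T^3)$ or similar. The result holds simultaneously for every $R\in\mathcal R$ and every $t$, so in particular it holds for the data-dependent choice $R=\widehat R_t$. This uniformity is what lets us handle the data-dependent estimator. Because $\widehat R_t$ is the least-squares minimizer and $R^\star\in\mathcal R$ by Assumption~\ref{assumption1}, we have $\sum_{i<t}Z_i(\widehat R_t)\le 0$. It follows that $\sum_{i<t}\Delta_i(\widehat R_t)\le C\log(2N_{\mathcal R}T^3/\delta)$, and tracking the constants gives $72$.

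The main obstacle is the concentration step. Freedman's inequality must be set up with the self-bounding variance $4\Delta_i(R)$ and the range bound handled correctly. The cleanest route is probably the fixed-$\lambda$ exponential supermartingale with Ville's inequality, which applies uniformly in $t$ and sidesteps peeling. The constant $72$ is then bookkeeping.
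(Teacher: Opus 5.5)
Your proposal is correct and follows the paper's proof essentially step for step. The paper also works with the excess squared loss and uses the self-bounding bound $\operatorname{Var}[Y_{R,i}\mid\mathcal H^{\mathrm{RF}}_{i-1}]\le 4\,\mathbb E[Y_{R,i}\mid\mathcal H^{\mathrm{RF}}_{i-1}]$ (citing Lemma 4.2 of Agarwal et al.). It applies a Freedman inequality whose failure probability carries the $\log_2 t$ peeling factor, absorbs the square-root term by completing the square (this is where $72$ comes from), takes a union bound over $R\in\mathcal R$ and over $t$ with $\delta_t=\delta/(2t^3)$, and finally sets $R=\widehat R_t$ and uses least-squares optimality.

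Your fixed-$\lambda$ supermartingale variant with Ville's inequality also works. With $\lambda=1/8$ it gives $\sum_{i<t}\Delta_i(R)\le 2\sum_{i<t}Z_i(R)+16\log(N_{\mathcal R}/\delta)$ uniformly in $t$, which is a smaller constant. The one slip is cosmetic: the increments $\Delta_i-Z_i$ can approach $2$, so the Bernstein range term should be $\tfrac43\log(1/\delta')$ rather than $\tfrac23\log(1/\delta')$.
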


We now relate the expectation in $S_t$, which is taken under the
intermediate Gibbs policy $\pi_t'$, to the prediction errors under the
data-generating policies $\{\pi_i\}_{i=1}^{t-1}$.
For $t=1$, we trivially have $S_1\le1$.
For any $t\ge2$ and $i<t$, since both $\pi_t'$ and $\pi_i$ are Gibbs
policies induced by $[0,1]$-valued reward functions, their likelihood
ratio satisfies
\[
\frac{\pi_t'(a\mid x)}{\pi_i(a\mid x)}
\le
e^{2\eta},
\
\forall (x,a).
\]
Therefore,
\[
S_t
\le
e^{2\eta}
\mathbb E_{x\sim d,\;a\sim\pi_i}
\left[
\left(
R^\star(x,a)-\widehat R_t(x,a)
\right)^2
\right].
\]
Averaging the above inequality over $i=1,\ldots,t-1$ and applying
Eq.~\eqref{sumbound} yields
\begin{align}
\label{Sbound}
S_t
\le
\frac{72e^{2\eta}}{t-1}
\log\frac{2N_{\mathcal R}T^3}{\delta},
\qquad
\forall t=2,\ldots,T.
\end{align}

\paragraph{Step 3: Summation over time.}
Summing the bound on $S_t$ over $t$ and using
$\sum_{t=2}^{T}(t-1)^{-1}\le1+\log T$ gives
\begin{align}
\label{SumSbound}
\sum_{t=1}^{T}S_t
\le
1+
72e^{2\eta}(1+\log T)
\log\frac{2N_{\mathcal R}T^3}{\delta}.
\end{align}
Finally, substituting Eq.~\eqref{SumSbound} into
Eq.~\eqref{decomposition} yields
\[
\operatorname{Reg}_{\mathrm{RF}}(T)
=
O\!\left(
\eta e^{2\eta}
\log T
\log\frac{N_{\mathcal R}T}{\delta}
\right).
\]
The complete proof is provided in the Appendix.

\paragraph{Eluder-Dimension-Dependent Regret Bound for \texttt{RF-GS}.}
We can also derive an eluder-dimension-dependent regret bound for
\texttt{RF-GS} using a standard uncertainty-based analysis.

\begin{corollary}[Dimension-Dependent Regret Bound for \texttt{RF-GS}]
\label{corollary1}
Fix any $\lambda>0$ satisfying
$\lambda\le 8\log\frac{2N_{\mathcal R}T}{\delta}$.
Then, under Assumption~\ref{assumption1}, for any $\delta\in(0,1)$, with probability
at least $1-\delta$, the regret of \texttt{RF-GS} satisfies
\[
    \operatorname{Reg}_{\rm RF}(T)
    =
    O\!\left(
        \eta e^{2\eta}
        \left(
            d_{\rm RF}(\lambda,\mathcal R,T)
            +\log\frac{1}{\delta}
        \right)
        \log\frac{N_{\mathcal R}T}{\delta}
    \right).
\]
\end{corollary}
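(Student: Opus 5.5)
We reuse Step~1 of the proof of Theorem~\ref{theorem1} unchanged: by Lemma~\ref{lemma3}, $\operatorname{Reg}_{\rm RF}(T)\le\eta\sum_{t=1}^T S_t$. The difference lies in how $S_t$ is controlled. Instead of comparing $\pi_t'$ with the average of all past policies via Lemma~\ref{uniform convergence}, we convert the expectation in $S_t$ into a realized quantity along the trajectory and then apply the usual eluder-dimension summation.

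\emph{Step A (change of measure to the played policy).} Both $\pi_t'$ and $\pi_t$ are Gibbs policies of $[0,1]$-valued rewards, so $\pi_t'(a\mid x)\le e^{2\eta}\pi_t(a\mid x)$. This gives $S_t\le e^{2\eta}\,\mathbb E_{x\sim d,a\sim\pi_t}[(\widehat R_t-R^\star)^2(x,a)]$. We then pass from expectations to realizations. Set $Z_t:=\min\{1,(\widehat R_t-R^\star)^2(x_t,a_t)\}\in[0,1]$. Its conditional mean given $\H^{\rm RF}_{t-1}$ is the expectation above, because $(\widehat R_t-R^\star)^2\le1$. A Freedman-type inequality for nonnegative bounded variables gives, with probability at least $1-\delta/2$,
\[
\sum_{t}\mathbb E_{t-1}[Z_t]\le 2\sum_t Z_t+O(\log(1/\delta)).
\]
This is the source of the additive $\log\frac1\delta$ term in the statement.

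\emph{Step B (realized error via uncertainty).} For each $t$, we have
\[
(\widehat R_t-R^\star)^2(x_t,a_t)\le U_{\rm RF}^2(\lambda,x_t,a_t,\mathcal R;\mathcal D^{\rm RF}_{t-1})\Bigl(\lambda+\sum_{i<t}(\widehat R_t-R^\star)^2(x_i,a_i)\Bigr),
\]
by taking $R_1=\widehat R_t$ and $R_2=R^\star$ in the supremum. The standard in-sample least-squares guarantee, a union bound over $\mathcal R$ and $t$ applied to the same martingale argument underlying Lemma~\ref{uniform convergence}, gives $\sum_{i<t}(\widehat R_t-R^\star)^2(x_i,a_i)\le 8\log\frac{2N_{\mathcal R}T}{\delta}$ with probability $1-\delta/2$. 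The assumption $\lambda\le 8\log\frac{2N_{\mathcal R}T}{\delta}$ then makes the bracket at most $16\log\frac{2N_{\mathcal R}T}{\delta}=:\beta$, which is at least $1$. Since $Z_t\le1$ as well, $Z_t\le\min\{1,\beta U^2\}\le\beta\min\{1,U^2\}$. Summing over $t$ yields $\sum_tZ_t\le\beta\, d_{\rm RF}(\lambda,\mathcal R,T)$.

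\emph{Step C (combine).} A union bound over the two events gives
\[
\sum_t S_t\le e^{2\eta}\bigl(2\beta d_{\rm RF}+O(\log\tfrac1\delta)\bigr)=O\bigl(e^{2\eta}(d_{\rm RF}+\log\tfrac1\delta)\log\tfrac{N_{\mathcal R}T}{\delta}\bigr).
\]
Multiplying by $\eta$ gives the claimed bound.

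The main obstacle is Step~B's in-sample concentration bound with the precise constant $8$, which is what ties it to the $\lambda$ condition. If the paper's appendix only provides the expected version in Lemma~\ref{uniform convergence}, the fallback is to derive the in-sample version directly. The plan is to apply Freedman's inequality to the martingale $\sum_i\bigl[(R(x_i,a_i)-r_i)^2-(R^\star(x_i,a_i)-r_i)^2\bigr]$ for each fixed $R$, use the optimality of $\widehat R_t$, and union bound over $\mathcal R$ and $t\le T$. Any constant mismatch would affect only the constant inside $\beta$, not the rate.
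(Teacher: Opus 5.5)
Your proof is correct and follows essentially the paper's argument. It uses Lemma~\ref{lemma3}, the $e^{2\eta}$ change of measure to $\pi_t$, the in-sample least-squares bound, and the same $[0,1]$-valued predictable-to-realized martingale bound combined with the eluder-dimension definition. The paper simply imports the in-sample bound as Lemma~\ref{lemmac1} at level $\delta/2$, which gives exactly your constant $8$, so the obstacle you flag needs no re-derivation.

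The only difference is the order of steps. The paper first bounds $(\widehat R_t-R^\star)^2(x,a)\le\beta\min\{1,U_{\rm RF}^2\}$ pointwise and then concentrates $\min\{1,U_{\rm RF}^2(\lambda,x_t,a_t,\mathcal R;\mathcal D^{\rm RF}_{t-1})\}$. You instead concentrate the realized squared error and then bound it along the trajectory. As a side effect, your $\log\frac1\delta$ term comes out additive rather than multiplied by $\log\frac{N_{\mathcal R}T}{\delta}$.
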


The proof differs from that of Theorem~\ref{theorem1} only in the
control of $\sum_{t=1}^T S_t$.
On the high-probability event from Lemma~\ref{lemmac1} in the Appendix, for any $(x,a)\in\X\times\A$,
\begin{align*}
&\bigl(\widehat R_t(x,a)-R^\star(x,a)\bigr)^2
\le
16\log\frac{2N_{\mathcal R}T}{\delta}
\min\!\left\{
1,
U_{\rm RF}^2(\lambda,x,a,\mathcal R;\mathcal D^{\operatorname{RF}}_{t-1})
\right\}.
\end{align*}
Using the likelihood-ratio bound
$\pi_t'(a\mid x)/\pi_t(a\mid x)\le e^{2\eta}$, together with the
definition of $d_{\rm RF}(\lambda,\mathcal R,T)$ and a standard
predictable-to-realized concentration argument, we obtain
\[
\sum_{t=1}^T S_t
=
O\!\left(
e^{2\eta}
\left(
d_{\rm RF}(\lambda,\mathcal R,T)
+\log\frac{1}{\delta}
\right)
\log\frac{N_{\mathcal R}T}{\delta}
\right).
\]
Combining this with the regret decomposition in
Eq.~\eqref{decomposition} gives the stated result.
The complete proof is provided in the Appendix.

Compared with Corollary~\ref{corollary1}, Theorem~\ref{theorem1}
replaces the uncertainty-complexity term
$d_{\rm RF}(\lambda,\mathcal R,T)+\log(1/\delta)$
with the logarithmic factor $\log T$, while retaining the same
dependence on $\eta$.

\paragraph{Trade-offs Between Greedy Sampling and UCB Exploration.}
We next compare the available regret guarantees of \texttt{RF-GS}
and \texttt{K-UCB} under different strengths of KL regularization.
When $e^{2\eta}\log T
\lesssim d_{\mathrm{RF}}(\lambda,\mathcal R,T)$,
the regret bound of \texttt{RF-GS}, $O\!\left(
\eta e^{2\eta}\log T
\log\frac{N_{\mathcal R}T}{\delta}
\right)$
(Theorem~\ref{theorem1}), is sharper than the dimension-dependent
bound of \texttt{K-UCB}, $O\!\left(
\eta d_{\rm RF}(\lambda,\mathcal R,T)
\log\frac{N_{\mathcal R}T}{\delta}
\right)$
(see Theorem~4.1 in
\citet{zhao2025logarithmic}).
Intuitively, for small $\eta$, the KL regularization is strong and the
Gibbs policy $\pi^\star_{\text{RF}}(a\mid x)
=
\pi_{\rm ref}(a\mid x)\exp(\eta R^\star(x,a))/Z_{R^\star}(x)$
remains relatively close to the reference policy. In this regime,
the stochasticity inherited from the reference policy can provide
effective exploration without an explicit optimism bonus.

For larger $\eta$, the $e^{2\eta}$ factor weakens the \texttt{RF-GS}
guarantee, and \texttt{K-UCB} admits the sharper bound $O\!\left(
\eta d_{\rm RF}(\lambda,\mathcal R,T)
\log\frac{N_{\mathcal R}T}{\delta}
\right)$.
Intuitively, as $\eta$ increases, the effect of KL regularization weakens and
the problem approaches the standard contextual bandit setting, where the
implicit exploration of greedy sampling is no longer sufficient and additional exploration can provide a sharper regret guarantee.

\texttt{K-UCB} also admits the $\eta$-independent bound
$O\!\left(
\sqrt{
T\left(
d_{\mathrm{RF}}(\lambda,\mathcal R,T)
+\log\frac{1}{\delta}
\right)
\log\frac{N_{\mathcal R}T}{\delta}
}
\right)$,
as established in Corollary~\ref{corollary2} in the Appendix. This bound becomes
the sharper \texttt{K-UCB} guarantee when $\eta$ is sufficiently large. Intuitively, for very large $\eta$, the KL regularization becomes negligible
and the problem approaches a standard contextual bandit problem with a \(\sqrt T\)-type regret rate, so the regret
bound naturally recovers the usual contextual-bandit scaling.
The available \texttt{K-UCB} guarantee can be summarized as $\widetilde O\!\left(
\min\left\{
\eta d_{\mathrm{RF}}(\lambda,\mathcal R,T),
\sqrt{d_{\mathrm{RF}}(\lambda,\mathcal R,T)T}
\right\}
\right)$.
Thus, as the KL regularization becomes weaker ($\eta$ becomes larger), the comparison exhibits
a clear transition: the eluder-dimension-independent guarantee of
\texttt{RF-GS} is sharper in the strongly regularized regime, whereas
\texttt{K-UCB} becomes sharper once the exponential dependence on
$\eta$ dominates. As $\eta$ increases further, the regret guarantee of
\texttt{K-UCB} eventually saturates at the standard $\sqrt{T}$-type
rate, which is independent of $\eta$.

\section{KL-Regularized Contextual Bandits with Preference Feedback}
\label{sectionpreference}

\begin{algorithm}[t]
\caption{\texttt{ORLHF-GS} \citep{wu2025greedy}}
\label{alg2}
\begin{algorithmic}[1]
\STATE \textbf{Input:} $\eta$, $\pi_{\mathrm{ref}}$, $\mathcal P$, $\mathcal R$

\STATE \textbf{Initialize:}
Choose any $\widehat P_1\in\mathcal P$ and
$\widehat R_1\in\mathcal R$

\FOR{$t=1,\ldots,T$}

    \STATE Observe context $x_t\sim d$

    \IF{\textbf{GP model}}
        \IF{$t\ge 2$}
            \STATE Compute the MLE
            \[
            \widehat P_t
            \in
            \arg\max_{P\in\mathcal P}
            \sum_{i=1}^{t-1}
            \Big[
            y_i\log P(x_i,a_i^1,a_i^2)
            +
            (1-y_i)\log P(x_i,a_i^2,a_i^1)
            \Big]
            \]
        \ENDIF
        \STATE Set $\widehat\pi_t^1$ to the NE policy associated with
        $\widehat P_t$

    \ELSIF{\textbf{BT model}}
        \IF{$t\ge 2$}
            \STATE Compute the MLE
            \[
            \widehat R_t
            \in
            \arg\max_{R\in\mathcal R}
            \sum_{i=1}^{t-1}
            \Big[
            y_i
            \log\sigma\!\big(
            R(x_i,a_i^1)-R(x_i,a_i^2)
            \big)
            +
            (1-y_i)
            \log\sigma\!\big(
            R(x_i,a_i^2)-R(x_i,a_i^1)
            \big)
            \Big]
            \]
        \ENDIF
        \STATE Construct the Gibbs policy
        \[
        \widehat\pi_t^1(a\mid x)
        \propto
        \pi_{\mathrm{ref}}(a\mid x)
        \exp\!\left(
        \eta\widehat R_t(x,a)
        \right)
        \]
    \ENDIF

    \STATE Sample
    $a_t^1\sim\widehat\pi_t^1(\cdot\mid x_t)$ and
    $a_t^2\sim\pi_{\mathrm{ref}}(\cdot\mid x_t)$

    \STATE Observe preference feedback $y_t$

\ENDFOR
\end{algorithmic}
\end{algorithm}

In this section, we show that the greedy algorithm
\texttt{ORLHF-GS} (Online RLHF with Greedy Sampling), proposed by
\citet{wu2025greedy}, also achieves an
eluder-dimension-independent regret bound.
The detailed procedure is summarized in Algorithm~\ref{alg2}.

\paragraph{\texttt{ORLHF-GS} Algorithm.}
\texttt{ORLHF-GS} applies greedy sampling in the PF setting. Compared with the RF setting,
the learner samples an action pair $(a_t^1,a_t^2)$ rather than a single
action, where $a_t^1$ is drawn from the current learned policy $\widehat{\pi}_t^1$ and
$a_t^2$ from the reference policy.
At each round, the learner updates the preference or reward model using previously collected observations via maximum likelihood estimation (MLE), and then constructs the corresponding greedy policy.

\subsection{General Preference Model}

The following theorem establishes two complementary regret
guarantees for \texttt{ORLHF-GS} under the GP model.

\begin{theorem}[Regret Bounds under the General Preference Model for \texttt{ORLHF-GS}]\label{theorem2}
Under Assumption~\ref{assumption2}, for any $\delta\in(0,1)$ and $T \geq 2$, with probability
at least $1-\delta$,
\[
\operatorname{Reg}_{\operatorname{GP}}(T)
=
O\!\left(
\eta e^{3\eta}\log T
\log\frac{N_{\mathcal P}T}{\delta}
\right).
\]
Moreover, for any $\lambda>0$ satisfying
$\lambda\le \log(2N_{\mathcal P}T/\delta)$, with probability
at least $1-\delta$,
\begin{align*}
\operatorname{Reg}_{\operatorname{GP}}(T)
=
O\!\Bigg(
&
\eta e^{\eta}
\left(
d_{\operatorname{GP}}(\lambda,\mathcal P,T)
+
\log\frac{1}{\delta}
\right) 
\log\frac{N_{\mathcal P}T}{\delta}
\Bigg).
\end{align*}
\end{theorem}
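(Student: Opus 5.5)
The plan follows the three-step template used for Theorem~\ref{theorem1}: a regret decomposition into a squared prediction error, a uniform MLE bound over past data, and a likelihood-ratio transfer. Two features are specific to the GP model. The estimated policy $\widehat\pi_t^1=\pi_{\widehat P_t}$ is defined implicitly as a fixed point. The data are collected under the product $\widehat\pi_i^1\otimes\pi_{\rm ref}$, not under a single Gibbs policy.

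\textbf{Step 1 (regret decomposition).} For fixed $x$, the second player's regularized best response to $\widehat\pi_t^1$ is the Gibbs policy $\pi^2\propto\pi_{\rm ref}\exp(-\eta P^\star(x,\widehat\pi_t^1,\cdot))$. The suboptimality $J^\star_{\rm GP}-\min_{\pi^2}J_{\rm GP}(\widehat\pi_t^1,\pi^2)$ is then a difference of log-partition functions. A second-order expansion of these functions, as in Lemma~\ref{lemma3}, with the fixed-point identities $\pi_{P}\propto \pi_{\rm ref}\exp(\eta P(x,\cdot,\pi_P))$ for $P=P^\star,\widehat P_t$, should give a bound of the form
\[
J^\star_{\rm GP}-\min_{\pi^2}J_{\rm GP}(\widehat\pi_t^1,\pi^2)\lesssim \eta\, \mathbb E_{x\sim d}\,\mathbb E_{a\sim\tilde\pi_t,\,a'\sim\pi'_t}\bigl[(\widehat P_t-P^\star)(x,a,a')^2\bigr]=:\eta S_t .
\]
Here $\tilde\pi_t,\pi_t'$ are policies of Gibbs form with exponents in $[0,\eta]$, such as interpolations between $\pi_{P^\star}$ and $\pi_{\widehat P_t}$. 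Any $[0,1]$-valued exponent $\eta f(x,a)$ yields a likelihood ratio of at most $e^{\eta}$ against $\pi_{\rm ref}$ and at most $e^{2\eta}$ against another such Gibbs policy. The reciprocity condition is used to symmetrize the error terms so that both arguments of $P$ are controlled.

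\textbf{Step 2 (uniform MLE bound and change of measure).} The standard MLE analysis for a finite class is run: a martingale exponential inequality, a union bound over $\mathcal P$ and $t\le T$, and the relation between Hellinger and squared error for Bernoulli distributions. This should give, with probability $1-\delta$, for all $t$,
\[
\sum_{i<t}\mathbb E_{x\sim d,\,a\sim\widehat\pi_i^1,\,a'\sim\pi_{\rm ref}}\bigl[(\widehat P_t-P^\star)(x,a,a')^2\bigr]\lesssim\log\frac{N_{\mathcal P}T}{\delta},
\]
which is the analogue of Lemma~\ref{uniform convergence}. The measure in $S_t$ is then transferred to the data measure. For the first action, $\tilde\pi_t/\widehat\pi_i^1\le e^{2\eta}$. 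For the second action, $\pi_t'/\pi_{\rm ref}\le e^{\eta}$. Together these give the factor $e^{3\eta}$. Averaging over $i<t$ yields $S_t\lesssim e^{3\eta}\log(N_{\mathcal P}T/\delta)/(t-1)$, and summing over $t$ gives the first bound.

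\textbf{Step 3 (dimension-dependent bound).} Here the argument is not averaged over past policies. Instead, a confidence-set argument in the spirit of Lemma~\ref{lemmac1} gives the pointwise bound $(\widehat P_t-P^\star)^2(x,a,a')\lesssim\log(N_{\mathcal P}T/\delta)\min\{1,U^2_{\rm GP}\}$. The transfer then compares with the current sampling distribution $\widehat\pi_t^1\otimes\pi_{\rm ref}$. This requires only the second-action ratio $e^{\eta}$, provided Step 1 is arranged so that the first-action measure is $\widehat\pi_t^1$ itself, for example by expanding around $\widehat P_t$ with the exact fixed point. Finally, a Freedman-type predictable-to-realized concentration converts $\sum_t\mathbb E[\min\{1,U^2\}]$ into $d_{\rm GP}+\log(1/\delta)$.

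\textbf{Main obstacle.} The hardest step is Step 1, because of the fixed-point nature of $\pi_P$. The difference $P(x,a,\pi_P)-P^\star(x,a,\pi_{P^\star})$ involves both the model error and the policy shift, so a naive expansion produces a first-order term. I would first try to exploit the zero-sum structure and the symmetry $\pi^1=\pi^2$ at equilibrium, together with reciprocity, to show that the first-order terms cancel. What should remain is a quadratic in $\widehat P_t-P^\star$ under measures whose likelihood ratios are controlled as above. Securing a first-action measure equal to $\widehat\pi_t^1$, which is needed for the $e^{\eta}$ factor in Step 3, is the most delicate part of this.
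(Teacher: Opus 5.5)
\textbf{The gap is Step~1, which is where the substance of the theorem lies.} Write $G_t:=J^\star_{\mathrm{GP}}-\min_{\pi^2}J_{\mathrm{GP}}(\widehat\pi_t^1,\pi^2)$. You assert that a second-order expansion ``should give'' $G_t\lesssim\eta S_t$, using fixed-point identities for both $P^\star$ and $\widehat P_t$ and interpolations between $\pi_{P^\star}$ and $\pi_{\widehat P_t}$. But you leave the cancellation of the first-order term open, so neither bound is actually established. Nothing in the sketch secures a first-action measure equal to $\widehat\pi_t^1$ either, and your Step~3 needs exactly that to get $\eta e^{\eta}$ rather than $\eta e^{3\eta}$.

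The missing idea is that $\pi_{P^\star}$ never has to enter. The essential use of reciprocity is not symmetrizing error terms. It is that $P(x,\pi,\pi)=1/2$ for every $P\in\mathcal P$ and every policy $\pi$. Hence $J_{\mathrm{GP}}(\pi,\pi)=1/2$ for all $\pi$, and $J^\star_{\mathrm{GP}}=J_{\mathrm{GP}}(\widehat\pi_t^1,\widehat\pi_t^1)$. The regret therefore compares $\widehat\pi_t^1$ only with its true best response $\widetilde\pi_t^2\propto\pi_{\rm ref}\exp(-\eta P^\star(x,\widehat\pi_t^1,\cdot))$. In fact $G_t=\eta^{-1}\mathbb E_{x\sim d}[\mathrm{KL}(\widehat\pi_t^1,\widetilde\pi_t^2\mid x)]$ exactly.

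\textbf{How the paper finishes.} It uses that $\widehat\pi_t^1$ is its own regularized best response under $\widehat P_t$ to get the linear bound
$G_t\le\mathbb E_{x}\sum_{a^2}(\widehat\pi_t^1-\widetilde\pi_t^2)(a^2\mid x)\,\mathbb E_{a^1\sim\widehat\pi_t^1}[(P^\star-\widehat P_t)(x,a^1,a^2)]$.
It then lower bounds the KL by $\frac{e^{-\eta}}{2}\sum_a(\widehat\pi_t^1-\widetilde\pi_t^2)^2(a\mid x)/\pi_{\rm ref}(a\mid x)$, valid because both policies are at most $e^{\eta}\pi_{\rm ref}$. A weighted Cauchy--Schwarz and self-bounding step gives $G_t\le2\eta e^{\eta}S_t$, with $S_t$ taken under exactly $\widehat\pi_t^1\otimes\pi_{\rm ref}$.

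\textbf{Your expansion route can also be completed once set up this way.} Put $r^\star(x,a):=P^\star(x,a,\widehat\pi_t^1)$ and $\widehat r(x,a):=\widehat P_t(x,a,\widehat\pi_t^1)$, both $[0,1]$-valued. Then:
\begin{itemize}
\item $\widehat\pi_t^1=\pi_{\widehat r}$ is just the fixed-point identity;
\item reciprocity gives $\widetilde\pi_t^2=\pi_{r^\star}$;
\item $\mathbb E_{a\sim\widehat\pi_t^1}[r^\star(x,a)-\widehat r(x,a)]=\tfrac12-\tfrac12=0$.
\end{itemize}
So $G_t=\eta^{-1}\mathbb E_x[\log Z_{r^\star}(x)-\log Z_{\widehat r}(x)]$, which has exactly the structure handled by the mean-value computation in the proof of Lemma~\ref{lemma3}. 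That computation gives
$G_t\le\eta\,\mathbb E_{x\sim d,\,a^2\sim\pi'}[(\mathbb E_{a^1\sim\widehat\pi_t^1}[(P^\star-\widehat P_t)(x,a^1,a^2)])^2]$,
where $\pi'$ is a Gibbs policy induced by a $[0,1]$-valued reward; reciprocity is used once more to flip the arguments inside the square. After Jensen in $a^1$, the first action is under $\widehat\pi_t^1$ itself. From there your Step~2 accounting ($e^{2\eta}$ for the first action, $e^{\eta}$ for the second) and your Step~3 (only the second-action factor $e^{\eta}$) go through as you described.
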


Theorem~\ref{theorem2} improves upon and complements the
dimension-dependent analysis of \citet{wu2025greedy} in two respects. While their main theorem suppresses the
detailed dependence on $\eta$, the full bound derived in Appendix~D.1
(p.~30) is $O\!\left(
\left(
\eta e^{3\eta}+\eta^3 e^{9\eta}
\right)
d_{\operatorname{GP}}(\lambda,\mathcal P,T)
\log\frac{N_{\mathcal P}T}{\delta}
\right)$.
In contrast, Theorem~\ref{theorem2} establishes the
eluder-dimension-independent bound $O\!\left(
\eta e^{3\eta}
\log T
\log\frac{N_{\mathcal P}T}{\delta}
\right)$,
which removes both the explicit dependence on
$d_{\operatorname{GP}}(\lambda,\mathcal P,T)$ and the
$\eta^3e^{9\eta}$ term. Moreover, even under an
eluder-dimension-based analysis, our bound improves the
dependence on the regularization parameter from
$\eta e^{3\eta}+\eta^3e^{9\eta}$ to $\eta e^\eta$.

The key technical ingredient behind these improvements is the refined
instantaneous regret decomposition in
Lemma~\ref{lem:gp_regret_decomposition} in the Appendix. As in
\citet{wu2025greedy}, a central difficulty is the mismatch between the learned equilibrium policy $\widehat{\pi}^1_t$ and its true regularized best response $\widetilde{\pi}_t^2
:=
\arg\min_{\pi^2}
J_{\operatorname{GP}}
\left(
\widehat{\pi}_t^1,\pi^2
\right)$. Our analysis exploits the structure of the KL-regularized
game more directly: it expresses the instantaneous regret in terms of
the KL divergence between these two policies and then uses the
curvature of the KL regularizer, together with the Gibbs-policy
boundedness relative to the reference policy, to control this
divergence by the squared preference-prediction error. This yields the
sharper instantaneous bound
\begin{align}
\begin{split}
\nonumber
J_{\operatorname{GP}}^\star
    -
    J_{\operatorname{GP}}
    \left(
        \widehat\pi_t^1,
        \widetilde\pi_t^2
    \right)
\le 
2\eta e^\eta
\mathbb E_{
x\sim d,\,
a^1\sim\widehat\pi_t^1,\,
a^2\sim\pi_{\rm ref}
}
\left[
\bigl(
P^\star(x,a^1,a^2)-\widehat P_t(x,a^1,a^2)
\bigr)^2
\right].
\end{split}
\end{align}
This decomposition is the common ingredient behind both the
eluder-dimension-independent guarantee and the sharper
dimension-dependent bound.

For completeness, we also develop a UCB-style exploration
algorithm for the GP model, termed \texttt{GP-UCB}.
The algorithm and its analysis are deferred to the Appendix.
The following corollary summarizes its regret guarantee.

\begin{corollary}[Regret Bound for \texttt{GP-UCB}]
\label{cor:gp_ucb}
Under Assumption~\ref{assumption2}, for any $\delta\in(0,1)$ and any $\lambda>0$
satisfying $\lambda
    \le
    \log\frac{2N_{\mathcal P}T}{\delta}$,
\texttt{GP-UCB} satisfies, with
probability at least $1-\delta$,
\begin{align}
    \operatorname{Reg}_{\operatorname{GP}}(T)
    =
    \widetilde{O}\!\left(
        \min\left\{
        \eta d_{\operatorname{GP}}(\lambda,\mathcal P,T),
        \sqrt{
        d_{\operatorname{GP}}(\lambda,\mathcal P,T)T
        }
        \right\}
    \right).\nonumber
\end{align}
\end{corollary}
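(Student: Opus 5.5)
We need to prove the \texttt{GP-UCB} corollary. The algorithm is deferred, so we fix a natural design: at round $t$, compute the MLE $\widehat P_t$ and the confidence set $\mathcal P_t=\{P:\sum_{i<t}(P-\widehat P_t)^2(x_i,a_i^1,a_i^2)\le \beta^2\}$ with $\beta^2=O(\log(2N_{\mathcal P}T/\delta))$. We then form the optimistic preference $\overline P_t(x,a,a')=\widehat P_t(x,a,a')+b_t(x,a,a')$, with bonus $b_t=\min\{1,\beta U_{\rm GP}(\lambda,x,a,a',\mathcal P;\mathcal D^{\rm GP}_{t-1})\}$. The policy $\widehat\pi^1_t$ is the Gibbs best response $\widehat\pi^1_t(a\mid x)\propto\pi_{\rm ref}(a\mid x)\exp(\eta\,\overline P_t(x,a,\pi^{\rm opp}_t))$ against a suitable opponent. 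We play $a_t^1\sim\widehat\pi^1_t$, $a_t^2\sim\pi_{\rm ref}$, mirroring \texttt{ORLHF-GS}.

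\textbf{Step 1: confidence event.} We use the standard MLE concentration for finite $\mathcal P$, the GP analogue of Lemma~\ref{lemmac1}. With probability $1-\delta$, for all $t$, $\sum_{i<t}(\widehat P_t-P^\star)^2(x_i,a^1_i,a^2_i)\le\beta^2$. Combined with the definition of $U_{\rm GP}$ and $\lambda\le\log(2N_{\mathcal P}T/\delta)$, this gives $|\widehat P_t-P^\star|(x,a,a')\le b_t(x,a,a')$ pointwise. Hence $\overline P_t\ge P^\star$ (after clipping to $[0,1]$), so optimism holds.

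\textbf{Step 2: instantaneous regret.} We obtain two bounds.

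\emph{Linear bound.} Optimism and the variational form of Lemma~\ref{lem:gibbs-policy} give $J^\star_{\rm GP}-\min_{\pi^2}J_{\rm GP}(\widehat\pi^1_t,\pi^2)\le \mathbb E[\overline P_t-P^\star]\le 2\mathbb E_{x,a^1\sim\widehat\pi^1_t,a^2\sim\pi_{\rm ref}}[b_t]$. This step is an upper-bounding of the max--min value by the optimistic value, as in the K-UCB analysis.

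\emph{Quadratic bound.} We adapt Lemma~\ref{lem:gp_regret_decomposition} (the regret-to-KL/curvature argument) with $\overline P_t$ in place of $\widehat P_t$. The first-order term is nonpositive by optimism, so only the curvature term $O(\eta)\,\mathbb E[(\overline P_t-P^\star)^2]\le O(\eta)\,\mathbb E[b_t^2]$ survives.

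The key point is that with optimism one can avoid the $e^{\eta}$ factor in the quadratic bound. The reason is that the regret of a Gibbs policy induced by $\overline P_t\ge P^\star$ is controlled by $\eta$ times the squared gap under $\widehat\pi^1_t$ itself, with no change of measure. This is the analogue of the K-UCB argument of \citet{zhao2025logarithmic}: a mean-value expansion along $\gamma\overline P_t+(1-\gamma)P^\star$, where monotonicity in $\gamma$ shows the worst case is attained at the played policy.

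\textbf{Step 3: summation.} The expectations are under the actual sampling distribution of $(x_t,a_t^1,a_t^2)$. A Freedman-type predictable-to-realized argument therefore gives $\sum_t\mathbb E[b_t^2]\lesssim\sum_t\min\{1,\beta^2U_{\rm GP}^2(\cdot;\mathcal D^{\rm GP}_{t-1})\}+\log(1/\delta)\lesssim\beta^2 d_{\rm GP}+\log(1/\delta)$. This yields $\widetilde O(\eta d_{\rm GP})$ from the quadratic bound. For the linear bound, Cauchy--Schwarz gives $\sum_t\mathbb E[b_t]\le\sqrt{T\sum_t\mathbb E[b_t^2]}=\widetilde O(\sqrt{d_{\rm GP}T})$. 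Taking the minimum of the two bounds proves the corollary.

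\textbf{Main obstacle.} The hardest part is the quadratic bound in Step 2 in the game setting. The regularized best response $\widetilde\pi^2_t$ differs from the optimistic opponent, so the KL-curvature argument of Lemma~\ref{lem:gp_regret_decomposition} must be redone with optimistic rather than estimated preferences. I would first try defining the opponent as the NE of $\overline P_t$, i.e.\ choosing $\widehat\pi^1_t$ as the fixed point $\pi_{\overline P_t}$. I would then check whether the symmetric structure still lets the best-response gap be written as a KL term bounded by $\eta\,\mathbb E_{\widehat\pi^1_t\times\pi_{\rm ref}}[b_t^2]$. If an $e^{\eta}$ factor reappears, it would only affect the $\eta d_{\rm GP}$ branch up to constants absorbed in $\widetilde O$ if one tolerates it; otherwise I would fall back to the K-UCB-style monotonicity argument.
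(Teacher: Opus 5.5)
There is a genuine gap, and it comes from the sampling rule $a_t^2\sim\pi_{\rm ref}$: with it, neither branch can be obtained without an $e^{\eta}$ factor. (Note also that the paper's \texttt{GP-UCB} is a different algorithm from the one you designed; see below.)

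\textbf{The linear branch.} The comparator's KL terms cancel only if you evaluate the optimistic objective at the true regularized best response $\widetilde\pi^2_t=\arg\min_{\pi^2}J_{\rm GP}(\widehat\pi^1_t,\pi^2)$. So optimism actually gives $G_t\le\mathbb E_{x\sim d,\,a^1\sim\widehat\pi^1_t,\,a^2\sim\widetilde\pi^2_t}[\overline P_t-P^\star]$, not an expectation with $a^2\sim\pi_{\rm ref}$. Since $\widetilde\pi^2_t$ depends on $P^\star$ and is never played, relating this to your data costs the ratio $\widetilde\pi^2_t/\pi_{\rm ref}\le e^{\eta}$. That yields $e^{\eta}\sqrt{d_{\rm GP}T}$ and loses exactly the $\eta$-independence the corollary asserts.

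\textbf{The quadratic branch.} You flag this as the main obstacle and leave it open, and it faces the same mismatch. Lemma~\ref{lem:gp_regret_decomposition} reaches $a^2\sim\pi_{\rm ref}$ only by paying $e^{\eta}$ through the Hessian lower bound. This is why \texttt{ORLHF-GS}, whose sampling coincides with yours, only gets $\eta e^{\eta}d_{\rm GP}$ in Theorem~\ref{theorem2}.

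The K-UCB monotonicity argument does not rescue this. It needs the played policy's reward to dominate the comparator's reward pointwise. Here $\widehat\pi^1_t$ is Gibbs in $\overline P_t(x,\cdot,\pi^{\rm opp}_t)$, while the comparator reward is $P^\star(x,\cdot,\widetilde\pi^2_t)$ with a different opponent, so no such ordering is guaranteed. The bonus also breaks reciprocity, on which the identity $G_t=\eta^{-1}\mathbb E_x[\mathrm{KL}(\widehat\pi^1_t,\widetilde\pi^2_t\mid x)]$ rests. And the symmetric fixed point of a non-reciprocal $\overline P_t$ is in general not an equilibrium.

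Finally, $e^{\eta}$ cannot be absorbed into $\widetilde O$, which hides only logarithmic factors. The regime where $\eta d_{\rm GP}$ is the smaller term allows $\eta$ as large as $\sqrt{T/d_{\rm GP}}$.

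\textbf{The missing idea.} You need worst-case control over $a^2$, together with a sampling rule that realizes it. The paper's \texttt{GP-UCB} uses no optimistic preference at all. $\widehat\pi^1_t$ remains the equilibrium of the reciprocal MLE $\widehat P_t$, and only the second action changes, to $a_t^2\in\arg\max_{a^2}\min\{1,U_{\rm GP}^2(\lambda,x_t,a_t^1,a^2,\mathcal P;\mathcal D^{\rm GP}_{t-1})\}$. In the inner-product form of the regret from the proof of Lemma~\ref{lem:gp_regret_decomposition}, it bounds $\sum_{a^2}(\widehat\pi^1_t-\widetilde\pi^2_t)(a^2\mid x)f(a^2)\le\|\widehat\pi^1_t-\widetilde\pi^2_t\|_1\max_{a^2}|f(a^2)|$. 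The unknown $\widetilde\pi^2_t$ is thus handled with no change of measure.

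Pinsker's inequality and the KL identity then give $G_t\le2\eta\,\mathbb E_x[\max_{a^2}\mathbb E_{a^1\sim\widehat\pi^1_t}(P^\star-\widehat P_t)^2]$ with no $e^{\eta}$. Using $\|\cdot\|_1\le2$ instead gives the $\eta$-free branch. Because $a_t^2$ maximizes the uncertainty, $\mathbb E_{x,a^1}[\max_{a^2}\min\{1,U_{\rm GP}^2\}]$ is exactly the conditional expectation of the realized uncertainty. The predictable-to-realized argument and the definition of $d_{\rm GP}$ then close both branches.
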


Corollary~\ref{cor:gp_ucb} reveals a trade-off similar to that in the
RF setting. In the strongly regularized regime, when $e^{3\eta}\log T
\lesssim
d_{\mathrm{GP}}(\lambda,\mathcal P,T)$,
the eluder-dimension-independent guarantee of \texttt{ORLHF-GS}
 is sharper than the
dimension-dependent \texttt{GP-UCB} guarantee.
As $\eta$ increases, the exponential dependence in the greedy
bound eventually dominates, and UCB-style exploration
provides the sharper guarantee. For sufficiently large
$\eta$, the available \texttt{GP-UCB} bound further saturates at
the standard
$\widetilde O(\sqrt{d_{\operatorname{GP}}T})$
rate, which is independent of $\eta$.

\subsection{Bradley--Terry Model}

We next turn to the BT model. Since the BT
model is induced by a latent reward function, its analysis is closely
connected to the reward-feedback setting. 

\begin{theorem}[Regret Bound under the Bradley--Terry Model for \texttt{ORLHF-GS}]
\label{theorem3}
Under Assumption~\ref{assumption1}, for any $\delta\in(0,1)$ and $T \geq 2$, with probability at
least $1-\delta$, \texttt{ORLHF-GS} under the BT model satisfies
\[
\operatorname{Reg}_{\operatorname{BT}}(T)
=
O\!\left(
\eta e^{2\eta}
\log T
\log\frac{N_{\mathcal R}T}{\delta}
\right).
\]
\end{theorem}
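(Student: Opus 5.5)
The plan is to reproduce the three-step argument of Theorem~\ref{theorem1}, with one change: the BT model identifies the reward only up to a context-dependent shift, so every squared error is measured after centering under $\pi_{\rm ref}$.

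\textbf{Step 1 (shift-invariant regret decomposition).} Write $e_t(x,a):=\widehat R_t(x,a)-R^\star(x,a)$ and $c_t(x):=\mathbb E_{a'\sim\pi_{\rm ref}}[e_t(x,a')]$. For any function $c(x)$, the Gibbs policy induced by $\widehat R_t-c$ coincides with $\widehat\pi^1_t$, since a shift depending only on $x$ cancels in the normalization of Eq.~\eqref{eq:gibbs-policy}. I will apply the argument behind Lemma~\ref{lemma3} to the shifted estimate $\widehat R_t-c_t$. That argument is a second-order expansion of $\eta^{-1}\log Z$ along the segment from $R^\star$ to the estimate, and it does not need the $[0,1]$ range; only bounded differences enter. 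This gives
\[
J_{\rm BT}(\pi^\star_{\rm BT})-J_{\rm BT}(\widehat\pi^1_t)\le \eta S_t,\qquad S_t:=\mathbb E_{x\sim d,\,a\sim\pi'_t}\bigl[(e_t(x,a)-c_t(x))^2\bigr].
\]
Here $\pi'_t$ is the Gibbs policy of $\gamma_t\widehat R_t+(1-\gamma_t)R^\star$ for some $\gamma_t\in[0,1]$, shifts again being irrelevant. Since $\pi'_t$ and $\widehat\pi^1_i$ are Gibbs policies of $[0,1]$-valued rewards, the likelihood ratio satisfies $\pi'_t/\widehat\pi^1_i\le e^{2\eta}$ exactly as before.

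\textbf{Step 2 (MLE guarantee on reward differences).} Next I prove a BT analogue of Lemma~\ref{uniform convergence}. The plan is to follow the standard route for finite-class MLE:
\begin{itemize}
\item a log-likelihood-ratio martingale argument, union-bounded over $\mathcal R$ and over $t\le T$, bounds the cumulative squared Hellinger distance (hence the squared difference of Bernoulli means) under the sampling distributions;
\item conditional expectations are used directly, via an exponential-moment bound, rather than realized sums.
\end{itemize}
This should give, with probability $1-\delta$ and simultaneously for all $t$,
\[
\sum_{i<t}\mathbb E_{x,\,a^1\sim\widehat\pi^1_i,\,a^2\sim\pi_{\rm ref}}\Bigl[\bigl(\sigma(\Delta\widehat R_t)-\sigma(\Delta R^\star)\bigr)^2\Bigr]=O\!\left(\log\frac{N_{\mathcal R}T}{\delta}\right),
\]
where $\Delta R:=R(x,a^1)-R(x,a^2)\in[-1,1]$. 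Because $\sigma'\ge\sigma'(1)>0$ on $[-1,1]$, the mean value theorem converts this into the same bound, up to an absolute constant, for $\sum_{i<t}\mathbb E[(e_t(x,a^1)-e_t(x,a^2))^2]$.

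\textbf{Step 3 (decoupling and summation).} Since $a^2\sim\pi_{\rm ref}$ independently of $a^1$ and $c_t(x)$ is the $\pi_{\rm ref}$-mean of $e_t(x,\cdot)$, the cross term vanishes:
\[
\mathbb E\bigl[(e_t(a^1)-e_t(a^2))^2\bigr]=\mathbb E_{a^1\sim\widehat\pi^1_i}\bigl[(e_t-c_t)^2\bigr]+\mathrm{Var}_{\pi_{\rm ref}}(e_t)\ \ge\ \mathbb E_{a\sim\widehat\pi^1_i}\bigl[(e_t-c_t)^2\bigr].
\]
Changing measure from $\widehat\pi^1_i$ to $\pi'_t$ with the factor $e^{2\eta}$ and averaging over $i<t$ gives
\[
S_t=O\!\left(\frac{e^{2\eta}}{t-1}\log\frac{N_{\mathcal R}T}{\delta}\right).
\]
For $t=1$ I use the trivial bound $S_1\le 4$. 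Summing over $t$ and multiplying by $\eta$ yields the claimed $O(\eta e^{2\eta}\log T\log(N_{\mathcal R}T/\delta))$.

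\textbf{Main obstacle.} I expect Step 1 to be the main obstacle: checking that Lemma~\ref{lemma3} still holds with the centered error, including that the intermediate policy keeps the $e^{2\eta}$ ratio. This works because shifts leave Gibbs policies unchanged, so the intermediate policy is the Gibbs policy of a convex combination of $[0,1]$-valued rewards. The second delicate point is making the MLE bound in Step 2 hold in expectation rather than on the realized samples. For that I would first try the standard Hellinger bound on the conditional expectation of the log-likelihood-ratio exponential.
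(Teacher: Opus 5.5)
Your proposal is correct and follows essentially the same route as the paper. You center the estimate by its $\pi_{\rm ref}$-mean error and rerun the mean-value argument of Lemma~\ref{lemma3} (as in Lemma~\ref{regretdecompoBT}); you bound pairwise reward-difference errors through the MLE guarantee and the sigmoid inverse-Lipschitz property on $[-1,1]$ (as in Lemma~\ref{uniformconvergenceBT}); then you change measure at cost $e^{2\eta}$ and sum the harmonic series.

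There are two minor differences. You perform the variance decomposition over $a^2\sim\pi_{\rm ref}$ after the change of measure, whereas the paper applies it as a Jensen step inside the decomposition lemma. You also obtain the in-expectation MLE bound directly from a Hellinger supermartingale, while the paper combines the realized-sum bound of Lemma~\ref{lemma3wu} with the predictable-to-realized argument of Lemma~\ref{uniformconvergnce_preference}.
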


Theorem~\ref{theorem3} strengthens the existing BT analysis of
\citet{wu2025greedy} by removing the explicit dependence
on the BT eluder dimension. Tracking the explicit $\eta$-dependence
in their proof yields a dimension-dependent regret bound of order $\widetilde{O}\!\left(
\eta e^{2\eta}
d_{\operatorname{BT}}(\lambda,\mathcal R,T)
\right)$ (see their Appendix~D.2
(p.~31)).
In contrast, Theorem~\ref{theorem3} replaces
$d_{\operatorname{BT}}(\lambda,\mathcal R,T)$ by only a logarithmic
dependence on $T$, while retaining the same dependence on $\eta$.

For completeness, we also consider the corresponding UCB-style
exploration rule for the BT model, termed \texttt{BT-UCB};
its construction and analysis are deferred to the Appendix.

\begin{corollary}[Regret Bound for \texttt{BT-UCB}]
\label{cor:bt_ucb}
Under Assumption~\ref{assumption1}, for any $\delta\in(0,1)$ and any $\lambda>0$
satisfying $\lambda
    \le
    4e^2\log\frac{2N_{\mathcal R}T}{\delta}$,
\texttt{BT-UCB} satisfies,
with probability at least $1-\delta$,
\begin{align}
    \operatorname{Reg}_{\operatorname{BT}}(T)
    =
    \widetilde{O}\left(
        \min\left\{
        \eta d_{\operatorname{BT}}(\lambda,\mathcal R,T),
        \sqrt{
        d_{\operatorname{BT}}(\lambda,\mathcal R,T)T
        }
        \right\}
    \right).
\nonumber
\end{align}
\end{corollary}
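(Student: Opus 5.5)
The statement immediately above is Corollary~\ref{cor:bt_ucb}. The paper defers the construction of \texttt{BT-UCB} to the Appendix, so I take it to be the natural analogue of \texttt{K-UCB}. At round $t$ it computes the BT MLE $\widehat R_t$ and forms an optimistic reward $\widehat R_t(x,a)+b_t(x,a)$. The bonus is $b_t(x,a)=\beta_t\min\{1,U_{\mathrm{BT}}(\lambda,x,a,a_{\rm ref};\mathcal D^{\mathrm{BT}}_{t-1})\}$, built from the BT uncertainty relative to the comparator. The first action is drawn from the Gibbs policy of the optimistic reward, and $a_t^2\sim\pi_{\rm ref}$.

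The proof then has four steps.

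\textbf{Step 1: confidence set.} I first establish an MLE confidence set. A standard log-likelihood-ratio martingale argument with a union bound over $\mathcal R$ gives $\sum_{i<t}[(\widehat R_t-R^\star)(x_i,a_i^1)-(\widehat R_t-R^\star)(x_i,a_i^2)]^2\le 4e^2\log(2N_{\mathcal R}T/\delta)$. The factor $e^2$ comes from $\sigma'\ge 1/(4e)$ on $[-1,1]$, which converts KL or Hellinger error in the preference into squared error in reward differences. This is exactly why the condition $\lambda\le 4e^2\log(2N_{\mathcal R}T/\delta)$ appears. The definition of $U_{\mathrm{BT}}$ then yields a pointwise bound on the differenced error by $\beta_t U_{\mathrm{BT}}$ with $\beta_t^2=O(e^2\log(N_{\mathcal R}T/\delta))$.

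\textbf{Step 2: shift invariance.} Gibbs policies are invariant to adding any function $c(x)$ to the reward. So I may recentre $\widehat R_t$ by the $\pi_{\rm ref}$-average, or by the comparator, and only differences $R(x,a^1)-R(x,a^2)$ matter. This is what makes the BT uncertainty the relevant one.

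\textbf{Step 3: two regret bounds.}

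For the $\eta d_{\mathrm{BT}}$ bound, I use the \citet{zhao2025logarithmic} style optimism decomposition. Because the optimistic reward dominates $R^\star$ up to the shift, the instantaneous regret is at most $\eta\,\mathbb E_{x,a\sim\pi_t}[b_t(x,a)^2]$, with no $e^{2\eta}$ change of measure. The reason is that optimism lets the intermediate policy be replaced by $\pi_t$ itself via monotonicity of the log-partition function. I then convert the expectation to realized pairs $(x_t,a_t^1,a_t^2)$ using a predictable-to-realized concentration inequality for $[0,1]$-bounded terms. This costs $O(\log(1/\delta))$, and the resulting sum is bounded by $\beta_T^2 d_{\mathrm{BT}}$.

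For the $\sqrt{d_{\mathrm{BT}}T}$ bound, I drop the KL curvature. Optimality of the Gibbs policy for the optimistic reward gives instantaneous regret $\le \mathbb E[2b_t]$. Cauchy--Schwarz then gives $\sqrt{T\sum_t b_t^2}=\widetilde O(\sqrt{d_{\mathrm{BT}}T})$, again after the same concentration step.

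\textbf{Step 4: combine.} Both bounds hold on the same event, so the regret is at most their minimum.

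\textbf{Main obstacle.} The hard part is making the bonus defined with the reference-action comparator control the error at the sampled pair. In expectation, $a_t^2\sim\pi_{\rm ref}$ rather than a fixed action. I would try to handle this by defining the bonus as $\sqrt{\mathbb E_{a'\sim\pi_{\rm ref}}[\min\{1,U^2_{\mathrm{BT}}(\lambda,x,a,a')\}]}$. Jensen's inequality then shows that optimism holds in $\pi_{\rm ref}$-average, which is all the shift-invariant decomposition needs. The realized $a_t^2$ is then handled by the martingale concentration step.
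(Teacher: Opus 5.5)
\textbf{The proposal analyzes a different algorithm from the paper's \texttt{BT-UCB}.} In the paper, $\widehat R_t$ and $\widehat\pi_t^1$ are exactly those of \texttt{ORLHF-GS}: the greedy Gibbs policy of the MLE, with no bonus. The only change is the comparator, $a_t^2\in\arg\max_{a^2\in\mathcal A}\min\{1,U_{\mathrm{BT}}^2(\lambda,x_t,a_t^1,a^2,\mathcal R;\mathcal D_{t-1}^{\mathrm{BT}})\}$, instead of $a_t^2\sim\pi_{\rm ref}$.

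As a result, the engine of both your bounds is unavailable: optimism of the (shifted) reward that generates $\widehat\pi_t^1$. Optimism is what lets you replace the intermediate Gibbs policy by the played one without paying $e^{2\eta}$. It is also what gives instantaneous regret at most $2\mathbb E[b_t]$. In the actual algorithm the played policy carries no bonus. Without optimism, the mean-value route of Lemma~\ref{lemma3} reintroduces the $e^{2\eta}$ change of measure, as in Corollary~\ref{corollary1}. Your concentration step also depends on $a_t^2\sim\pi_{\rm ref}$ matching the $\pi_{\rm ref}$-average inside your bonus, and that is not the paper's sampling rule. So, as a proof of the stated corollary, the proposal does not go through.

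For the variant you define, the argument is internally sound. Take $\xi_t:=R^\star-\widehat R_t$ and the shift $c_t(x):=\mathbb E_{a'\sim\pi_{\rm ref}}[\xi_t(x,a')]$. Your averaged bonus then gives $0\le\widehat R_t+c_t+b_t-R^\star\le 2b_t$ pointwise. Moreover, $\mathbb E_{x\sim d,a\sim\pi_t}[b_t^2]$ equals $\beta_t^2$ times the conditional expectation of the realized $\min\{1,U_{\mathrm{BT}}^2(\lambda,x_t,a_t^1,a_t^2,\mathcal R;\mathcal D_{t-1}^{\mathrm{BT}})\}$. You were also right that the fixed-comparator bonus would fail.

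\textbf{The missing idea is that no optimism is needed once the comparator is chosen to be maximally uncertain.} The paper writes the regret exactly as $G_t=\eta^{-1}\mathbb E_{x\sim d}[\mathrm{KL}(\widehat\pi_t^1,\pi^\star_{\mathrm{BT}}\mid x)]=\eta^{-1}\mathbb E_{x}[\log\mathbb E_{a\sim\widehat\pi_t^1}\exp(\eta\xi_t(x,a))-\eta\,\mathbb E_{a\sim\widehat\pi_t^1}\xi_t(x,a)]$. This is a centered log-moment-generating function under the played policy. Hoeffding's lemma then gives $G_t\le\frac{\eta}{8}\mathbb E_x[(\max_a\xi_t(x,a)-\min_a\xi_t(x,a))^2]$, with no change of measure at all.

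For every fixed $a^1$, the range is at most $2\max_{a^2}|\xi_t(x,a^1)-\xi_t(x,a^2)|$. Averaging over $a^1\sim\widehat\pi_t^1$ gives $G_t\le\frac{\eta}{2}\mathbb E_{x,a^1\sim\widehat\pi_t^1}[\max_{a^2}(\xi_t(x,a^1)-\xi_t(x,a^2))^2]$. After the MLE confidence bound, this is at most $6e^2\eta\log\frac{2N_{\mathcal R}T}{\delta}\,\mathbb E_{x,a^1\sim\widehat\pi_t^1}[\max_{a^2}\min\{1,U_{\mathrm{BT}}^2\}]$. Since the algorithm plays the maximizing $a^2$, that expectation is precisely the conditional expectation of the realized uncertainty at $(x_t,a_t^1,a_t^2)$. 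The predictable-to-realized argument then sums it to $O(d_{\mathrm{BT}}+\log(1/\delta))$.

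The $\sqrt{T}$ bound uses optimality of $\widehat\pi_t^1$ for $\widehat R_t$ itself to get $G_t\le\mathbb E_x[\max_a\xi_t-\min_a\xi_t]$. It then applies the same range-to-worst-pair step, Jensen, and Cauchy--Schwarz.

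In short, the regret of the greedy KL policy is governed by the range of the reward error, and exploring through the comparator controls exactly that range. Your scheme instead puts exploration into the policy, and therefore needs optimism to avoid the $e^{2\eta}$ transfer.
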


The eluder-dimension-independent regret bound for \texttt{ORLHF-GS} under the BT model matches the order of its RF counterpart in Theorem \ref{theorem1}. Likewise, the \texttt{BT-UCB} guarantee has the same form as the \texttt{K-UCB} guarantee discussed in Section \ref{sectionbandit}, with the corresponding eluder dimension replaced by \(d_{\mathrm{BT}}(\lambda,\mathcal R,T)\). Thus, the comparison between the greedy and UCB regret guarantees exhibits the same dependence on \(\eta\) in the BT and RF settings.
\section{Experiments}\label{experiment}

In this section, we use a simple synthetic experiment to illustrate the greedy--UCB trade-off discussed in Section \ref{sectionbandit}.

\begin{figure}[t]
    \centering
    \includegraphics[width=0.6\linewidth]{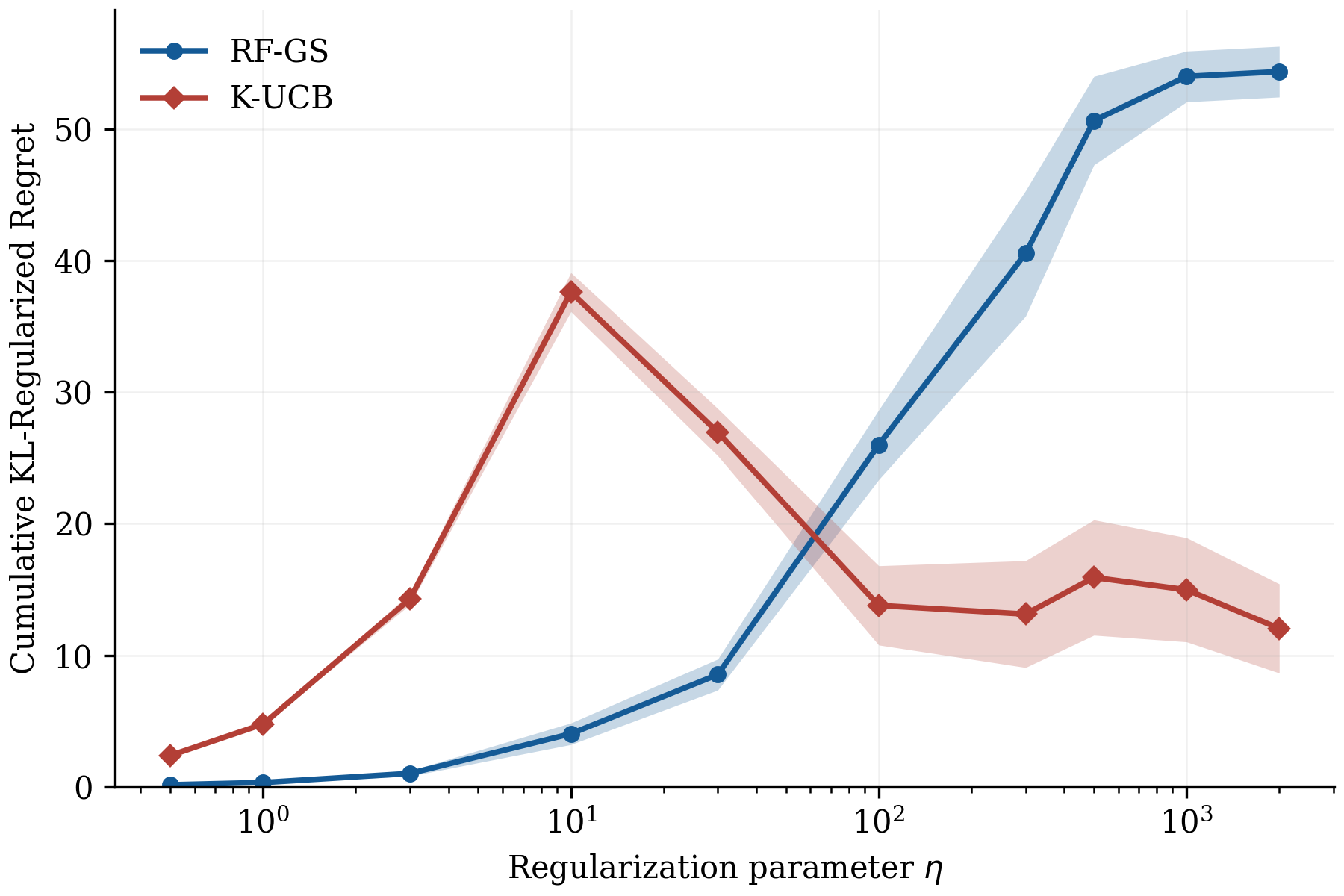}
    \caption{
Mean cumulative KL-regularized regret of \texttt{RF-GS} and
\texttt{K-UCB} across different values of $\eta$. 
}
    \label{fig:large-regret}
\end{figure}

\paragraph{Experimental setup.}
We consider a synthetic RF contextual bandit under KL regularization with three contexts (i.e., $|\X| = 3$),
three actions (i.e., $|\A| = 3$) and a function class with five functions (i.e., $|\R| = 5$). We compare the \texttt{RF-GS} (Algorithm \ref{alg1}) with \texttt{K-UCB} \citep{zhao2025logarithmic}. Each algorithm is run for $T=5000$ rounds, and the results are
averaged over $10$ independent random seeds, with
shaded regions indicating one standard deviation. To investigate the effect of KL
regularization, we vary the regularization parameter over
\begin{align}
\begin{split}
\nonumber
\eta \in
\{&0.5,1,3,10,30,100,300,500,1000,2000\}.
\end{split}
\end{align}

\paragraph{Results.}
Fig.~\ref{fig:large-regret} reveals a clear transition between the two
exploration strategies as the strength of KL regularization varies. For small
and moderate $\eta$, \texttt{RF-GS} achieves substantially lower regret than
\texttt{K-UCB}, consistent with our theory that strong KL regularization provides
sufficient implicit exploration for greedy sampling. As $\eta$ increases, the KL regularization weakens and the Gibbs policy becomes more concentrated. Consequently, \texttt{RF-GS} performs less exploration and may remain concentrated on suboptimal actions. In contrast, \texttt{K-UCB} explicitly explores uncertain actions through its exploration bonus, which becomes increasingly beneficial in this regime. Consequently, \texttt{K-UCB} eventually
outperforms \texttt{RF-GS}. Overall, the empirical results closely match the
theoretical tradeoff established in Section \ref{sectionbandit}.

\section{Conclusion}
We studied KL-regularized contextual bandits under both RF and PF, including the general preference and Bradley--Terry models. We showed that, under sufficiently strong KL regularization, greedy sampling can provide sufficient implicit exploration without relying on additional exploration bonuses. This leads to polylogarithmic regret guarantees that have no explicit dependence on the eluder dimension, contrasting with standard UCB-style guarantees whose complexity typically scales with this dimension. Our results also reveal a trade-off between greedy sampling and optimism-based exploration. When the KL regularization is strong, the learned policy remains sufficiently close to the stochastic reference policy, allowing its inherent randomness to drive exploration and making greedy sampling favorable. As the regularization weakens and the policy becomes more concentrated, this implicit exploration diminishes, and additional exploration can provide a sharper regret guarantee. These findings clarify when a simple greedy strategy can be theoretically effective and how the strength of KL regularization determines the preferred exploration mechanism.

\bibliographystyle{apalike}
\bibliography{reference}

\clearpage
\onecolumn
\newpage

\appendix

\newgeometry{
    left=0.85in,
    right=0.85in,
    top=0.85in,
    bottom=0.90in
}

\section{Notations}

We summarize the main notation used throughout the paper in the following
Table.

\begin{table}[h]
\centering
\small
\begin{tabular}{p{0.27\linewidth} p{0.66\linewidth}}
\hline
\textbf{Notation} & \textbf{Description} \\
\hline

$\mathcal X$
& Context space. \\

$\A$ & Action space.\\

$x_t$
& Context at round $t$. \\

$a_t$ & Action at round $t$ under RF.\\

$a_t^1,a_t^2$
& Pair of actions selected at round $t$ under PF. \\

$r_t$
& Reward observed at round $t$. \\

$y_t$
& Binary PF observed at round $t$. \\

$d$
& Distribution over contexts. \\

$\pi_{\rm ref}$
& Reference policy with full support. \\

$\eta$
& KL-regularization parameter; smaller $\eta$ corresponds to stronger regularization. \\

$\mathcal R,\mathcal P$
& Candidate reward and preference function classes, with
$N_{\mathcal R}=|\mathcal R|$ and $N_{\mathcal P}=|\mathcal P|$. \\

$R^\star,P^\star$
& True reward function and true preference function. \\

$\widehat R_t,\widehat P_t$
& Reward and preference estimators at round $t$. \\

$J_{\rm RF},J_{\rm GP},J_{\rm BT}$
& KL-regularized objectives under the RF, GP, and BT settings. \\

$\pi^\star_{\text{RF}},\pi_{\rm GP}^\star,\pi_{\text{BT}}^\star$
& Optimal RF policy, equilibrium GP policy, and optimal BT policy. \\

$\pi_R$
& Gibbs policy induced by the reward function $R$. \\

$\pi_P$
& Symmetric Nash-equilibrium policy induced by the preference model $P$. \\

$\operatorname{Reg}_{\rm RF}(T),
\operatorname{Reg}_{\rm GP}(T),
\operatorname{Reg}_{\rm BT}(T)$
& Cumulative regret under the RF, GP, and BT settings. \\

$U_{\rm RF},U_{\rm GP},U_{\rm BT}$
& Uncertainty measures under the RF, GP, and BT settings. \\

$d_{\rm RF},d_{\rm GP},d_{\rm BT}$
& Corresponding eluder dimensions. \\

\hline
\end{tabular}
\end{table}

\newpage

\section{Useful Properties of Gibbs Policies}\label{propertysec}

For any reward function $R:\mathcal X\times\mathcal A\to\mathbb [0,1]$,
the Gibbs policy induced by $R$ is defined as
\[
\pi_R(a|x)
=
\frac{\pi_\text{ref}(a\mid x)\exp(\eta R(x,a))}
{Z_R(x)},
\]
where $Z_R(x)
=
\sum_{a'}
\pi_{\rm ref}(a'\mid x)\exp\left(\eta R(x,a')\right)$ is the normalizing constant. It satisfies the following properties:

\begin{itemize}
    \item \textbf{Boundedness with respect to the reference policy.}
    The Gibbs policy $\pi_R(a\mid x)$ satisfies
$e^{-\eta}
\le
\frac{\pi_{R}(a\mid x)}
     {\pi_{\text{ref}}(a\mid x)}
\le
e^{\eta}$,
for all $(x,a)\in\mathcal X\times\mathcal A$.
\item \textbf{Pairwise boundedness.}
For any two Gibbs policies $\pi_{R_1}$ and $\pi_{R_2}$ induced by reward functions
$R_1,R_2:\mathcal X\times\mathcal A\to[0,1]$,
$e^{-2\eta}
\le
\frac{\pi_{R_1}(a\mid x)}
     {\pi_{R_2}(a\mid x)}
\le
e^{2\eta}$,
for all $(x,a)\in\mathcal X\times\mathcal A$.
\item \textbf{Bounds on the normalizing constant.}
The normalizing constant
$Z_R(x)$
satisfies
$1
\le
Z_R(x)
\le
e^\eta$,
for all $x\in\mathcal X$.
\end{itemize}

\section{Proofs for Section \ref{sectionbandit}}

\subsection{Proof of Theorem \ref{theorem1}}

The proof of Theorem~\ref{theorem1} relies on two key lemmas. Lemma~\ref{lemma3} establishes the instantaneous KL-regularized regret decomposition used in Step~1 of the proof sketch, while Lemma~\ref{uniform convergence} provides the uniform convergence guarantee of the LS estimator required in Step~2. Combining these two ingredients yields the desired result.

\begin{lemma}[Instantaneous Regret Decomposition]
\label{lemma3}
For all $t\in[T]$, the instantaneous regret in the RF setting satisfies
\[
J_{\rm RF}(\pi^\star_{\operatorname{RF}})-J_{\rm RF}(\pi_t)
\le
\eta
\mathbb E_{x\sim d,\;a\sim \pi_t'}
\!\left[
\bigl(R^\star(x,a)-\widehat R_t(x,a)\bigr)^2
\right],
\]
where $\pi_t'$ is the Gibbs policy induced by
$R_t'=\gamma_t\widehat R_t+(1-\gamma_t)R^\star$
for some $\gamma_t\in[0,1]$.
\end{lemma}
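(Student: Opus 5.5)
The plan is to work pointwise in the context $x$ and reduce the regret to the second-order remainder of the log-partition function. For a fixed $x$ and any reward $R$ with values in $[0,1]$, define $\Phi_x(R):=\eta^{-1}\log Z_R(x)$. By Lemma~\ref{lem:gibbs-policy}, the optimal value at $x$ is $\Phi_x(R^\star)$. For the Gibbs policy $\pi_t=\pi_{\widehat R_t}$ we have $\log\frac{\pi_t(a\mid x)}{\pi_{\rm ref}(a\mid x)}=\eta\widehat R_t(x,a)-\log Z_{\widehat R_t}(x)$. Substituting this into the objective gives the per-context value of $\pi_t$:
\[
\mathbb E_{a\sim\pi_t}\bigl[R^\star(x,a)-\widehat R_t(x,a)\bigr]+\Phi_x(\widehat R_t).
\]
Hence the instantaneous regret at $x$ equals
\[
\Phi_x(R^\star)-\Phi_x(\widehat R_t)-\bigl\langle \nabla\Phi_x(\widehat R_t),\,R^\star-\widehat R_t\bigr\rangle,
\]
because the gradient of $\Phi_x$ with respect to the vector $(R(x,a))_a$ is exactly $\pi_R(\cdot\mid x)$. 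So the regret is a Bregman divergence of the log-partition function, taken around $\widehat R_t$.

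Next I apply Taylor's theorem with Lagrange remainder along the segment $R_s=\widehat R_t+s(R^\star-\widehat R_t)$, with $\Delta:=R^\star-\widehat R_t$. Since $\Phi_x$ is a scaled log-sum-exp, its Hessian is $\eta\bigl(\mathrm{diag}(\pi_R)-\pi_R\pi_R^\top\bigr)$. This gives
\[
\Phi_x(R^\star)-\Phi_x(\widehat R_t)-\langle\pi_{\widehat R_t},\Delta\rangle
=\tfrac{\eta}{2}\,\mathrm{Var}_{a\sim\pi_{R_{s}}(\cdot\mid x)}\bigl[\Delta(x,a)\bigr]
\le \tfrac{\eta}{2}\,\mathbb E_{a\sim\pi_{R_s}}\bigl[\Delta(x,a)^2\bigr]
\]
for some $s\in(0,1)$. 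Setting $\gamma=1-s$, the intermediate reward is $R_s=\gamma\widehat R_t+(1-\gamma)R^\star$, which is $[0,1]$-valued, so its Gibbs policy is of the claimed form. This yields the bound with constant $\eta/2$, which is even better than the stated $\eta$.

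The main obstacle is that $s$ depends on $x$, whereas the lemma asks for a single $\gamma_t$ with an expectation over $x\sim d$. To fix this, I apply the mean value theorem to the scalar function $g(s):=\mathbb E_{x\sim d}\bigl[\Phi_x(R_s)\bigr]$ instead of pointwise. I expand $g(1)-g(0)-g'(0)=\tfrac12 g''(s^\ast)$ for a single $s^\ast\in(0,1)$, with
\[
g''(s)=\eta\,\mathbb E_{x}\,\mathrm{Var}_{\pi_{R_s}}\bigl[\Delta\bigr].
\]
Differentiating under the expectation is justified because everything is bounded and smooth, and $g$ is $C^2$ in $s$. The quantity $g(1)-g(0)-g'(0)$ is exactly the total regret after integrating over $x$. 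This produces a common $\gamma_t=1-s^\ast$ and completes the argument.
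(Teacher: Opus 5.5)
Your proof is correct and follows essentially the same route as the paper. Both arguments write the regret as the remainder of the log-partition function along the segment between $\widehat R_t$ and $R^\star$. Both then get a single $\gamma_t$ by applying a one-dimensional mean value theorem to an $x$-averaged scalar function, ending with a variance under the interpolated Gibbs policy.

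The only difference is the form of the remainder. The paper applies the first-order mean value theorem to $\gamma\mapsto\mathbb E_{x\sim d}[\mathcal J_{R_\gamma}(x)]$, which is effectively the Cauchy form and gives $\eta\gamma_t\,\mathbb E_x\operatorname{Var}$ before bounding $\gamma_t\le 1$. Your Lagrange-form expansion yields the slightly sharper constant $\eta/2$.
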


\begin{proof}[Proof of Theorem~\ref{theorem1}]
Define $S_t
:=
\mathbb E_{x\sim d,\;a\sim\pi_t'}
\!\left[
\left(
\widehat R_t(x,a)-R^\star(x,a)
\right)^2
\right]$.
By Lemma~\ref{lemma3}, the cumulative regret admits the decomposition
\begin{equation}
\label{regretdecomposition}
\operatorname{Reg}_{\rm RF}(T)
=
\sum_{t=1}^{T}
\Bigl(
J_{\rm RF}(\pi^\star_{\text{RF}})-J_{\rm RF}(\pi_t)
\Bigr)
\le
\eta\sum_{t=1}^{T}S_t.
\end{equation}

We next derive an upper bound on $S_t$. By
Lemma~\ref{uniform convergence}, on an event of probability at least
$1-\delta$, the following holds simultaneously for all $t\ge2$:
\begin{equation}
\label{past-distribution-bound}
\sum_{i=1}^{t-1}
\mathbb E_{x\sim d,\;a\sim\pi_i}
\!\left[
\left(
R^\star(x,a)-\widehat R_t(x,a)
\right)^2
\right]
\le
72\log\frac{2N_{\mathcal R}t^3}{\delta},
\end{equation}
where $\pi_i$ denotes the policy used to generate the $i$-th sample.
It remains to transfer the above bound from the data-generating policies
$\{\pi_i\}_{i=1}^{t-1}$ to the intermediate Gibbs policy $\pi_t'$
appearing in Lemma~\ref{lemma3}. Fix $t\ge2$ and any $i<t$.
Since $\pi_i$ and $\pi_t'$ are Gibbs policies induced by $[0,1]$-valued reward
functions. Therefore
\[
S_t
=
\mathbb E_{x\sim d,\;a\sim\pi_t'}
\!\left[
\left(
R^\star(x,a)-\widehat R_t(x,a)
\right)^2
\right]
\le
e^{2\eta}
\mathbb E_{x\sim d,\;a\sim\pi_i}
\!\left[
\left(
R^\star(x,a)-\widehat R_t(x,a)
\right)^2
\right].
\]
Since the above inequality holds for every
$i=1,\ldots,t-1$, averaging over $i$ yields
\[
S_t
\le
\frac{e^{2\eta}}{t-1}
\sum_{i=1}^{t-1}
\mathbb E_{x\sim d,\;a\sim\pi_i}
\!\left[
\left(
R^\star(x,a)-\widehat R_t(x,a)
\right)^2
\right].
\]
Combining this inequality with Eq.~\eqref{past-distribution-bound} gives
\begin{equation}
\label{Sbound2}
S_t
\le
\frac{72e^{2\eta}}{t-1}
\log\frac{2N_{\mathcal R}t^3}{\delta},
\qquad
\forall t\ge2.
\end{equation}

For $t=1$, since both $\widehat R_1$ and $R^\star$ are
$[0,1]$-valued, we trivially have $S_1\le1$.
Therefore, using Eq.~\eqref{Sbound2},
\[
\sum_{t=1}^{T}S_t
\le
1
+
72e^{2\eta}
\log\frac{2N_{\mathcal R}T^3}{\delta}
\sum_{t=2}^{T}\frac{1}{t-1}.
\]
Applying the harmonic-series bound
$\sum_{t=2}^{T}\frac1{t-1}
\le
1+\log T$,
we obtain
\[
\sum_{t=1}^{T}S_t
\le
1
+
72e^{2\eta}
(1+\log T)
\log\frac{2N_{\mathcal R}T^3}{\delta}.
\]
Substituting the above estimate into Eq.~\eqref{regretdecomposition} yields
\[
\operatorname{Reg}_{\rm RF}(T)
\le
\eta
+
72\eta e^{2\eta}
(1+\log T)
\log\frac{2N_{\mathcal R}T^3}{\delta}.
\]
This completes the proof.
\end{proof}

\subsection{Proof of Lemma~\ref{lemma3}}

\begin{proof}[Proof of Lemma~\ref{lemma3}]
For any reward function
\(R:\mathcal X\times\mathcal A\to[0,1]\), define
\(\Delta_R(x,a):=R(x,a)-R^\star(x,a)\).
By the definition of the KL-regularized value,
\[
\begin{aligned}
J_{\rm RF}(\pi^\star_{\text{RF}})-J_{\rm RF}(\pi_R)
&=
\mathbb E_{x\sim d,\;a\sim\pi^\star}
\left[
R^\star(x,a)
-\eta^{-1}
\log\frac{\pi^\star_{\text{RF}}(a\mid x)}
{\pi_{\rm ref}(a\mid x)}
\right]
-
\mathbb E_{x\sim d,\;a\sim\pi_R}
\left[
R^\star(x,a)
-\eta^{-1}
\log\frac{\pi_R(a\mid x)}
{\pi_{\rm ref}(a\mid x)}
\right].
\end{aligned}
\]
Since \(\pi^\star_{\text{RF}}=\pi_{R^\star}\), we have
\(\log \frac{\pi^\star_{\text{RF}}(a\mid x)}{\pi_{\rm ref}(a\mid x)}
=
\eta R^\star(x,a)-\log Z_{R^\star}(x)\) and
\(\log \frac{\pi_R(a\mid x)}{\pi_{\rm ref}(a\mid x)}
=
\eta R(x,a)-\log Z_R(x)\).
Substituting these identities gives
\[
\begin{aligned}
J_{\rm RF}(\pi^\star_{\text{RF}})-J_{\rm RF}(\pi_R)
&=
\mathbb E_{x\sim d,\;a\sim\pi^\star_{\text{RF}}}
\left[
R^\star(x,a)
-\eta^{-1}
\bigl(
\eta R^\star(x,a)-\log Z_{R^\star}(x)
\bigr)
\right]
\\
&\quad-
\mathbb E_{x\sim d,\;a\sim\pi_R}
\left[
R^\star(x,a)
-\eta^{-1}
\bigl(
\eta R(x,a)-\log Z_R(x)
\bigr)
\right]
\\
&=
\eta^{-1}
\mathbb E_{x\sim d}
\left[
\log Z_{R^\star}(x)-\log Z_R(x)
\right]
+
\mathbb E_{x\sim d,\;a\sim\pi_R}
\left[
R(x,a)-R^\star(x,a)
\right].
\end{aligned}
\]
Define
\(\mathcal J_R(x)
:=
\log Z_R(x)
-
\eta
\sum_{a'\in\mathcal A}
\pi_R(a'\mid x)\Delta_R(x,a')\).
Since \(\Delta_{R^\star}(x,a)=0\) for all \((x,a)\), we have
\(\mathcal J_{R^\star}(x)=\log Z_{R^\star}(x)\), and therefore
\[
J_{\rm RF}(\pi^\star_{\text{RF}})-J_{\rm RF}(\pi_R)
=
\eta^{-1}
\mathbb E_{x\sim d}
\left[
\mathcal J_{R^\star}(x)-\mathcal J_R(x)
\right].
\]

We next view \(\mathcal J_R(x)\) as a function of
\(\Delta_R(x,a)\). Since
\[
Z_R(x)
=
\sum_{a'\in\mathcal A}
\pi_{\rm ref}(a'\mid x)
\exp\!\left(
\eta
\bigl(
R^\star(x,a')+\Delta_R(x,a')
\bigr)
\right),
\]
we have
\[
\begin{aligned}
\frac{\partial \log Z_R(x)}
{\partial \Delta_R(x,a)}
&=
\frac{1}{Z_R(x)}
\frac{\partial Z_R(x)}
{\partial \Delta_R(x,a)}
\\
&=
\frac{
\eta\pi_{\rm ref}(a\mid x)
\exp\!\left(
\eta
\bigl(
R^\star(x,a)+\Delta_R(x,a)
\bigr)
\right)
}{
Z_R(x)
}
\\
&=
\eta\pi_R(a\mid x).
\end{aligned}
\]
Moreover,
\[
\begin{aligned}
\frac{\partial \pi_R(a'\mid x)}
{\partial \Delta_R(x,a)}
&=
\frac{\partial}{\partial \Delta_R(x,a)}
\left(
\frac{
\pi_{\rm ref}(a'\mid x)
\exp\!\bigl(\eta R(x,a')\bigr)
}{
Z_R(x)
}
\right)
\\
&=
\frac{
\eta\mathbf 1\{a'=a\}
\pi_{\rm ref}(a'\mid x)
\exp\!\bigl(\eta R(x,a')\bigr)
}{
Z_R(x)
}
-
\frac{
\pi_{\rm ref}(a'\mid x)
\exp\!\bigl(\eta R(x,a')\bigr)
}{
Z_R(x)^2
}
\frac{\partial Z_R(x)}
{\partial \Delta_R(x,a)}
\\
&=
\eta\pi_R(a'\mid x)\mathbf 1\{a'=a\}
-
\eta\pi_R(a'\mid x)\pi_R(a\mid x)
\\
&=
\eta\pi_R(a'\mid x)
\bigl(
\mathbf 1\{a'=a\}-\pi_R(a\mid x)
\bigr).
\end{aligned}
\]
Therefore,
\[
\begin{aligned}
\frac{\partial \mathcal J_R(x)}
{\partial \Delta_R(x,a)}
&=
\eta\pi_R(a\mid x)
-\eta\pi_R(a\mid x)
-\eta
\sum_{a'\in\mathcal A}
\Delta_R(x,a')
\frac{\partial \pi_R(a'\mid x)}
{\partial \Delta_R(x,a)}
\\
&=
-\eta^2
\sum_{a'\in\mathcal A}
\pi_R(a'\mid x)
\bigl(
\mathbf 1\{a'=a\}
-\pi_R(a\mid x)
\bigr)
\Delta_R(x,a')
\\
&=
-\eta^2
\pi_R(a\mid x)\Delta_R(x,a)
+
\eta^2
\pi_R(a\mid x)
\sum_{a'\in\mathcal A}
\pi_R(a'\mid x)\Delta_R(x,a').
\end{aligned}
\]

To apply the mean value theorem, define the interpolation
\(R_\gamma:=\gamma R+(1-\gamma)R^\star\) for
\(\gamma\in[0,1]\), and let
\(g(\gamma):=
\mathbb E_{x\sim d}
[\mathcal J_{R_\gamma}(x)]\).
Since \(R_0=R^\star\) and \(R_1=R\), by the one-dimensional mean value theorem there exists some
\(\gamma\in[0,1]\) such that, with
\(R':=R_\gamma=\gamma R+(1-\gamma)R^\star\),
\[
\mathbb E_{x\sim d}
\left[
\mathcal J_{R^\star}(x)-\mathcal J_R(x)
\right]
=
-g'(\gamma).
\]
Furthermore,
\[
\Delta_{R_\gamma}(x,a)
=
\gamma\Delta_R(x,a),
\qquad
\frac{d}{d\gamma}\Delta_{R_\gamma}(x,a)
=
\Delta_R(x,a).
\]
In particular, since \(R'=R_\gamma\), we have
\(\Delta_{R'}(x,a)=\gamma\Delta_R(x,a)\). Therefore,
\[
\begin{aligned}
&\eta^{-1}
\mathbb E_{x\sim d}
\left[
\mathcal J_{R^\star}(x)-\mathcal J_R(x)
\right]
\\
&=
-\eta^{-1}
\mathbb E_{x\sim d}
\left[
\sum_{a\in\mathcal A}
\frac{
\partial \mathcal J_{R'}(x)
}{
\partial \Delta_{R'}(x,a)
}
\Delta_R(x,a)
\right]
\\
&=
\eta\gamma
\mathbb E_{x\sim d}
\left[
\sum_{a\in\mathcal A}
\pi_{R'}(a\mid x)
\Delta_R(x,a)^2
\right]
\\
&\quad-
\eta\gamma
\mathbb E_{x\sim d}
\left[
\sum_{a_1,a_2\in\mathcal A}
\pi_{R'}(a_1\mid x)
\pi_{R'}(a_2\mid x)
\Delta_R(x,a_1)
\Delta_R(x,a_2)
\right]
\\
&=
\eta\gamma
\mathbb E_{x\sim d}
\left[
\operatorname{Var}_{a\sim\pi_{R'}(\cdot\mid x)}
\bigl(\Delta_R(x,a)\bigr)
\right].
\end{aligned}
\]
Since \(\gamma\in[0,1]\), it follows that
\[
\begin{aligned}
&\eta^{-1}
\mathbb E_{x\sim d}
\left[
\mathcal J_{R^\star}(x)-\mathcal J_R(x)
\right]
\\
&\le
\eta\gamma
\mathbb E_{x\sim d,\;a\sim\pi_{R'}}
\left[
\Delta_R(x,a)^2
\right]
\\
&\le
\eta
\mathbb E_{x\sim d,\;a\sim\pi_{R'}}
\left[
\Delta_R(x,a)^2
\right].
\end{aligned}
\]
The term being subtracted is nonnegative, since
\[
\begin{aligned}
&\sum_{a_1,a_2\in\mathcal A}
\pi_{R'}(a_1\mid x)
\pi_{R'}(a_2\mid x)
\Delta_R(x,a_1)
\Delta_R(x,a_2)
=
\left(
\sum_{a\in\mathcal A}
\pi_{R'}(a\mid x)
\Delta_R(x,a)
\right)^2
=
\left(
\mathbb E_{a\sim\pi_{R'}}
[\Delta_R(x,a)]
\right)^2
\ge0.
\end{aligned}
\]
Hence,
\[
\eta^{-1}
\mathbb E_{x\sim d}
\left[
\mathcal J_{R^\star}(x)-\mathcal J_R(x)
\right]
\le
\eta
\mathbb E_{x\sim d,\;a\sim\pi_{R'}}
\left[
\Delta_R(x,a)^2
\right].
\]
Combining this inequality with the previous expression for
\(J_{\rm RF}(\pi^\star_{\text{RF}})-J_{\rm RF}(\pi_R)\) yields
\[
J_{\rm RF}(\pi^\star_{\text{RF}})-J_{\rm RF}(\pi_R)
\le
\eta
\mathbb E_{x\sim d,\;a\sim\pi_{R'}}
\left[
\left(
R^\star(x,a)-R(x,a)
\right)^2
\right].
\]

Finally, taking \(R=\widehat R_t\), there exists
\(\gamma_t\in[0,1]\) such that
\(R_t'
=
\gamma_t\widehat R_t+(1-\gamma_t)R^\star\).
Since
\(\pi_{\widehat R_t}=\pi_t\) and
\(\pi_{R_t'}=\pi_t'\), we obtain
\[
J_{\rm RF}(\pi^\star_{\text{RF}})-J_{\rm RF}(\pi_t)
\le
\eta
\mathbb E_{x\sim d,\;a\sim\pi_t'}
\left[
\left(
R^\star(x,a)-\widehat R_t(x,a)
\right)^2
\right].
\]
This completes the proof.
\end{proof}

\subsection{Proof of Lemma \ref{uniform convergence}}

\begin{proof}[Proof of Lemma \ref{uniform convergence}]
Fix any $\delta\in(0,1)$. For each $t\ge2$, define
$\delta_t:=\frac{\delta}{2t^3}$.
We first establish a uniform bound over $R\in\mathcal R$ for each
fixed $t$, and then apply a union bound over $t$.

Fix any $t\ge2$ and $R\in\mathcal R$. For each $i<t$, define
$Y_{R,i}
:=
\left(R(x_i,a_i)-r_i\right)^2
-
\left(R^\star(x_i,a_i)-r_i\right)^2$.
Further define
$M_{R,i}
:=
\mathbb E[Y_{R,i}\mid\mathcal H^{\text{RF}}_{i-1}]
-
Y_{R,i}$.
Then $\{M_{R,i}\}_{i\ge1}$ is a martingale difference sequence with
respect to $\{\mathcal H_i^{\text{RF}}\}_{i\ge0}$. Moreover, $\operatorname{Var}
\left[
M_{R,i}\mid\mathcal H^{\text{RF}}_{i-1}
\right]
=
\operatorname{Var}
\left[
Y_{R,i}\mid\mathcal H^{\text{RF}}_{i-1}
\right]$.
Since $R$, $R^\star$, and $r_i$ are all $[0,1]$-valued,
$|Y_{R,i}|\le1$, and hence $|M_{R,i}|\le2$.

Applying Freedman's inequality (Lemma~\ref{freedman}) with confidence
level $\delta_t/N_{\mathcal R}$ yields that, with probability at least
$1-\log_2(t-1)\delta_t/N_{\mathcal R}$,
\[
\begin{aligned}
\sum_{i=1}^{t-1}M_{R,i}
&\le
4\sqrt{
\sum_{i=1}^{t-1}
\operatorname{Var}
\left[
Y_{R,i}\mid\mathcal H^{\text{RF}}_{i-1}
\right]
\log\frac{N_{\mathcal R}}{\delta_t}
}
+
4\log\frac{N_{\mathcal R}}{\delta_t}.
\end{aligned}
\]
Substituting the definition of $M_{R,i}$ gives
\[
\begin{aligned}
\sum_{i=1}^{t-1}
\mathbb E
\left[
Y_{R,i}\mid\mathcal H^{\text{RF}}_{i-1}
\right]
-
\sum_{i=1}^{t-1}Y_{R,i}
&\le
4\sqrt{
\sum_{i=1}^{t-1}
\operatorname{Var}
\left[
Y_{R,i}\mid\mathcal H^{\text{RF}}_{i-1}
\right]
\log\frac{N_{\mathcal R}}{\delta_t}
}
+
4\log\frac{N_{\mathcal R}}{\delta_t}.
\end{aligned}
\]
Taking a union bound over all $R\in\mathcal R$, with probability at
least $1-\log_2(t-1)\delta_t$, the above inequality holds
simultaneously for all $R\in\mathcal R$.

By Lemma~\ref{lemma6},
$\operatorname{Var}
\left[
Y_{R,i}\mid\mathcal H^{\text{RF}}_{i-1}
\right]
\le
4\mathbb E
\left[
Y_{R,i}\mid\mathcal H^{\text{RF}}_{i-1}
\right]$.
Therefore,
\[
\begin{aligned}
\sum_{i=1}^{t-1}
\mathbb E
\left[
Y_{R,i}\mid\mathcal H^{\text{RF}}_{i-1}
\right]
&\le
8\sqrt{
\sum_{i=1}^{t-1}
\mathbb E
\left[
Y_{R,i}\mid\mathcal H^{\text{RF}}_{i-1}
\right]
\log\frac{N_{\mathcal R}}{\delta_t}
}+
4\log\frac{N_{\mathcal R}}{\delta_t}
+
\sum_{i=1}^{t-1}Y_{R,i},
\qquad
\forall R\in\mathcal R.
\end{aligned}
\]
Rearranging the above inequality and completing the square gives
\[
\left(
\sqrt{
\sum_{i=1}^{t-1}
\mathbb E
\left[
Y_{R,i}\mid\mathcal H^{\text{RF}}_{i-1}
\right]
}
-
4\sqrt{
\log\frac{N_{\mathcal R}}{\delta_t}
}
\right)^2
\le
20\log\frac{N_{\mathcal R}}{\delta_t}
+
\sum_{i=1}^{t-1}Y_{R,i}.
\]
Consequently,
\[
\begin{aligned}
\sqrt{
\sum_{i=1}^{t-1}
\mathbb E
\left[
Y_{R,i}\mid\mathcal H^{\text{RF}}_{i-1}
\right]
}
&\le
4\sqrt{\log\frac{N_{\mathcal R}}{\delta_t}}
+
\sqrt{
20\log\frac{N_{\mathcal R}}{\delta_t}
+
\sum_{i=1}^{t-1}Y_{R,i}
}.
\end{aligned}
\]
Squaring both sides and using $(a+b)^2\le2a^2+2b^2$, we obtain
\[
\sum_{i=1}^{t-1}
\mathbb E
\left[
Y_{R,i}\mid\mathcal H^{\text{RF}}_{i-1}
\right]
\le
72\log\frac{N_{\mathcal R}}{\delta_t}
+
2\sum_{i=1}^{t-1}Y_{R,i},
\qquad
\forall R\in\mathcal R.
\]

Conditioned on $\mathcal H^{\text{RF}}_{i-1}$, the context satisfies
$x_i\sim d$ and the action is sampled according to
$a_i\sim\pi_i(\cdot\mid x_i)$. Hence, by Lemma~\ref{lemma6}, $\mathbb E
\left[
Y_{R,i}\mid\mathcal H^{\text{RF}}_{i-1}
\right]
=
\mathbb E_{x\sim d,\;a\sim\pi_i}
\left[
\left(
R^\star(x,a)-R(x,a)
\right)^2
\right]$.
It follows that, with probability at least
$1-\log_2(t-1)\delta_t$,
\[
\begin{aligned}
\sum_{i=1}^{t-1}
\mathbb E_{x\sim d,\;a\sim\pi_i}
\left[
\left(
R^\star(x,a)-R(x,a)
\right)^2
\right]
&\le
72\log\frac{N_{\mathcal R}}{\delta_t}
+
2\sum_{i=1}^{t-1}Y_{R,i},
\end{aligned}
\]
simultaneously for all $R\in\mathcal R$.

We now take a union bound over $t\ge2$. Since $\sum_{t=2}^{\infty}
\delta_t\log_2(t-1)
\le
\sum_{t=2}^{\infty}
\frac{\delta}{2t^2}
\le
\delta$,
with probability at least $1-\delta$, the following holds
simultaneously for all $t\ge2$ and all $R\in\mathcal R$:
\[
\begin{aligned}
\sum_{i=1}^{t-1}
\mathbb E_{x\sim d,\;a\sim\pi_i}
\left[
\left(
R^\star(x,a)-R(x,a)
\right)^2
\right]
&\le
72\log\frac{2N_{\mathcal R}t^3}{\delta}
+
2\sum_{i=1}^{t-1}
\left[
\left(R(x_i,a_i)-r_i\right)^2
-
\left(R^\star(x_i,a_i)-r_i\right)^2
\right].
\end{aligned}
\]

For each $t\ge2$, take $R=\widehat R_t$, where $\widehat R_t$ is the
LS estimator over $\mathcal R$ constructed from
$\{(x_i,a_i,r_i)\}_{i=1}^{t-1}$.
Since $R^\star\in\mathcal R$ by Assumption~\ref{assumption1}, the
optimality of $\widehat R_t$ implies
\[
\sum_{i=1}^{t-1}
\left[
\left(
\widehat R_t(x_i,a_i)-r_i
\right)^2
-
\left(
R^\star(x_i,a_i)-r_i
\right)^2
\right]
\le0.
\]
Substituting this inequality into the previous display gives
\[
\sum_{i=1}^{t-1}
\mathbb E_{x\sim d,\;a\sim\pi_i}
\left[
\left(
R^\star(x,a)-\widehat R_t(x,a)
\right)^2
\right]
\le
72\log\frac{2N_{\mathcal R}t^3}{\delta},
\]
simultaneously for all $t\ge2$.
In particular, for all $t=2,\ldots,T$,
\[
\sum_{i=1}^{t-1}
\mathbb E_{x\sim d,\;a\sim\pi_i}
\left[
\left(
R^\star(x,a)-\widehat R_t(x,a)
\right)^2
\right]
\le
72\log\frac{2N_{\mathcal R}T^3}{\delta}.
\]
This completes the proof.
\end{proof}

\subsection{Proof of Corollary \ref{corollary1}}

\begin{proof}[Proof of Corollary~\ref{corollary1}]
By Lemma~\ref{lemmac1}, applied with confidence level
$\delta/2$, with probability at least $1-\delta/2$,
simultaneously for all $t\in[T]$,
\[
\sum_{i=1}^{t-1}
\left(
\widehat R_t(x_i,a_i)-R^\star(x_i,a_i)
\right)^2
\le
8\log\frac{2N_{\mathcal R}T}{\delta}.
\]
Condition on this event. For any $(x,a)$, since
$\widehat R_t,R^\star\in\mathcal R$, the definition of the
uncertainty measure gives
\[
\begin{aligned}
\left|
R^\star(x,a)-\widehat R_t(x,a)
\right|
&\le
U_{\rm RF}(\lambda,x,a,\mathcal R;D^{\text{RF}}_{t-1})
\sqrt{
\lambda+
\sum_{i=1}^{t-1}
\left(
R^\star(x_i,a_i)-\widehat R_t(x_i,a_i)
\right)^2
}.
\end{aligned}
\]
Using
$\lambda\le8\log\frac{2N_{\mathcal R}T}{\delta}$, we obtain
\[
\left(
R^\star(x,a)-\widehat R_t(x,a)
\right)^2
\le
16\log\frac{2N_{\mathcal R}T}{\delta}
\,U_{\rm RF}^2(\lambda,x,a,\mathcal R;\mathcal D^{\text{RF}}_{t-1}).
\]
Since both reward functions take values in $[0,1]$, the
left-hand side is also at most $1$. Hence,
\[
\left(
R^\star(x,a)-\widehat R_t(x,a)
\right)^2
\le
16\log\frac{2N_{\mathcal R}T}{\delta}
\min\left\{
1,
U_{\rm RF}^2(\lambda,x,a,\mathcal R;\mathcal D^{\text{RF}}_{t-1})
\right\}.
\]

Recall that $S_t
=
\mathbb E_{x\sim d,\;a\sim\pi_t'}
\left[
\left(
R^\star(x,a)-\widehat R_t(x,a)
\right)^2
\right]$.
Therefore,
\[
\begin{aligned}
S_t
&\le
16\log\frac{2N_{\mathcal R}T}{\delta}
\,
\mathbb E_{x\sim d,\;a\sim\pi_t'}
\left[
\min\left\{
1,
U_{\rm RF}^2(\lambda,x,a,\mathcal R;\mathcal D^{\text{RF}}_{t-1})
\right\}
\right].
\end{aligned}
\]
Since $\pi_t'$ and $\pi_t$ are Gibbs policies induced by
$[0,1]$-valued reward functions, $\frac{\pi_t'(a\mid x)}{\pi_t(a\mid x)}
\le e^{2\eta},\ \forall (x,a)$, we have
\[
\begin{aligned}
S_t
&\le
16e^{2\eta}
\log\frac{2N_{\mathcal R}T}{\delta}
\mathbb E_{x\sim d,\;a\sim\pi_t}
\left[
\min\left\{
1,
U_{\rm RF}^2(\lambda,x,a,\mathcal R;\mathcal D^{\text{RF}}_{t-1})
\right\}
\right].
\end{aligned}
\]

Now let $X_t
:=
\min\left\{
1,
U_{\rm RF}^2(\lambda,x_t,a_t,\mathcal R;\mathcal D^{\text{RF}}_{t-1})
\right\}$.
Conditioned on the history before round $t$,
$x_t\sim d$ and $a_t\sim\pi_t(\cdot\mid x_t)$, and hence
$\mathbb E[X_t\mid\mathcal H^{\text{RF}}_{t-1}]
=
\mathbb E_{x\sim d,\;a\sim\pi_t}
\left[
\min\left\{
1,
U_{\rm RF}^2(\lambda,x,a,\mathcal R;\mathcal D^{\text{RF}}_{t-1})
\right\}
\right]$.
Moreover, by the definition of the eluder dimension,
$\sum_{t=1}^T X_t
\le
d_{\rm RF}(\lambda,\mathcal R,T)$
for every realized sequence.

Since $X_t\in[0,1]$, using
$e^{-x}\le1-(1-e^{-1})x$ for $x\in[0,1]$, we have
\[
\begin{aligned}
\mathbb E
\left[
e^{-X_t}\mid\mathcal H^{\text{RF}}_{t-1}
\right]
&\le
1-(1-e^{-1})
\mathbb E[X_t\mid\mathcal H^{\text{RF}}_{t-1}]
\\
&\le
\exp\left(
-(1-e^{-1})
\mathbb E[X_t\mid\mathcal H^{\text{RF}}_{t-1}]
\right).
\end{aligned}
\]
Therefore,
\[
\exp\left(
(1-e^{-1})
\sum_{t=1}^T
\mathbb E[X_t\mid\mathcal H^{\text{RF}}_{t-1}]
-
\sum_{t=1}^T X_t
\right)
\]
has expectation at most one. By Markov's inequality, with
probability at least $1-\delta/2$,
\[
(1-e^{-1})
\sum_{t=1}^T
\mathbb E[X_t\mid\mathcal H^{\text{RF}}_{t-1}]
-
\sum_{t=1}^T X_t
\le
\log\frac{2}{\delta}.
\]
Combining this with the eluder-dimension bound yields
\[
\sum_{t=1}^T
\mathbb E[X_t\mid\mathcal H^{\text{RF}}_{t-1}]
\le
\frac{
d_{\rm RF}(\lambda,\mathcal R,T)+\log\frac{2}{\delta}
}{
1-e^{-1}
}.
\]

Taking a union bound over the two high-probability events,
with probability at least $1-\delta$,
\[
\sum_{t=1}^T S_t
\le
\frac{16e^{2\eta}}{1-e^{-1}}
\left(
d_{\rm RF}(\lambda,\mathcal R,T)+\log\frac{2}{\delta}
\right)
\log\frac{2N_{\mathcal R}T}{\delta}.
\]
Finally, by Lemma~\ref{lemma3}, $\operatorname{Reg}_{\rm RF}(T)
\le
\eta\sum_{t=1}^T S_t$,
and therefore
\[
\operatorname{Reg}_{\rm RF}(T)
=
O\!\left(
\eta e^{2\eta}
\left(
d_{\rm RF}(\lambda,\mathcal R,T)+\log\frac1\delta
\right)
\log\frac{N_{\mathcal R}T}{\delta}
\right).
\]
\end{proof}

\subsection{Proof of Corollary~\ref{corollary2}}

We next consider \texttt{K-UCB}~\citep{zhao2025logarithmic},
which augments the least-squares reward estimate with an
uncertainty-based exploration bonus and selects the Gibbs policy
induced by the resulting optimistic reward function. Specifically,
at round $t$, it uses
\[
b_t(x,a)
=
\min\left\{
1,\;
U_{\rm RF}(\lambda,x,a,\mathcal R;\mathcal D^\text{RF}_{t-1})
\sqrt{16\log\frac{2N_{\mathcal R}T}{\delta}}
\right\},
\]
and selects
\[
\pi_t(a\mid x)
\propto
\pi_{\rm ref}(a\mid x)
\exp\left(
\eta\bigl(\widehat R_t(x,a)+b_t(x,a)\bigr)
\right).
\]

\begin{corollary}[$\sqrt{T}$ Regret Bound for \texttt{K-UCB}]
\label{corollary2}
Fix any $\lambda>0$ satisfying
$\lambda
\le
8\log\frac{2N_{\mathcal R}T}{\delta}$.
Then, under Assumption~\ref{assumption1}, for any
$\delta\in(0,1)$, with probability at least $1-\delta$, the regret
of \texttt{K-UCB} satisfies
\[
\operatorname{Reg}_{\rm RF}(T)
=
O\!\left(
\sqrt{
T
\left(
d_{\rm RF}(\lambda,\mathcal R,T)
+\log\frac{1}{\delta}
\right)
\log\frac{N_{\mathcal R}T}{\delta}
}
\right).
\]
\end{corollary}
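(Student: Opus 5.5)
We bound the instantaneous regret of \texttt{K-UCB} directly by the exploration bonus, without any $\eta$-dependent curvature argument, and then sum the bonuses with Cauchy--Schwarz and the eluder dimension. The work happens on the event of Lemma~\ref{lemmac1}, taken at confidence level $\delta/2$. On that event, the argument in the proof of Corollary~\ref{corollary1} gives $|\widehat R_t(x,a)-R^\star(x,a)|\le b_t(x,a)$ for all $(x,a)$ and $t$, using $\lambda\le 8\log\frac{2N_{\mathcal R}T}{\delta}$ and the cap at $1$. So the optimistic reward $\bar R_t:=\widehat R_t+b_t$ satisfies $R^\star\le\bar R_t\le R^\star+2b_t$ pointwise.

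The first step is an optimism decomposition. Write $\pi_t=\pi_{\bar R_t}$ for the Gibbs policy induced by $\bar R_t$. Since $\pi_t$ maximizes $\mathbb E_{a\sim\pi}[\bar R_t(x,a)]-\eta^{-1}\mathrm{KL}(\pi,\pi_{\rm ref}\mid x)$ (Lemma~\ref{lem:gibbs-policy} extends verbatim to rewards in $[0,2]$), we get
\[
J_{\rm RF}(\pi^\star_{\rm RF})
\le \mathbb E_{x\sim d}\Big[\mathbb E_{a\sim\pi^\star_{\rm RF}}\bar R_t(x,a)-\eta^{-1}\mathrm{KL}(\pi^\star_{\rm RF},\pi_{\rm ref}\mid x)\Big]
\le \mathbb E_{x\sim d}\Big[\mathbb E_{a\sim\pi_t}\bar R_t(x,a)-\eta^{-1}\mathrm{KL}(\pi_t,\pi_{\rm ref}\mid x)\Big].
\]
The first inequality uses $\bar R_t\ge R^\star$, and the second uses the optimality of $\pi_t$ for $\bar R_t$. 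Subtracting $J_{\rm RF}(\pi_t)$ cancels the KL terms and leaves
\[
J_{\rm RF}(\pi^\star_{\rm RF})-J_{\rm RF}(\pi_t)\le \mathbb E_{x\sim d,a\sim\pi_t}\big[(\bar R_t-R^\star)(x,a)\big]\le 2\,\mathbb E_{x\sim d,a\sim\pi_t}[b_t(x,a)].
\]
This bound holds for every $\eta$, which is what makes the final bound $\eta$-free.

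The second step sums the bonuses. Set $\beta:=16\log\frac{2N_{\mathcal R}T}{\delta}$, so that $b_t\le\sqrt{\beta}\min\{1,U_{\rm RF}\}$, using $\beta\ge1$. With $X_t:=\min\{1,U_{\rm RF}^2(\lambda,x_t,a_t,\mathcal R;\mathcal D^{\rm RF}_{t-1})\}$ as in Corollary~\ref{corollary1}, Jensen's inequality gives
\[
\mathbb E_{x\sim d,a\sim\pi_t}[b_t]\le\sqrt{\beta\,\mathbb E[X_t\mid\mathcal H^{\rm RF}_{t-1}]}.
\]
Applying Cauchy--Schwarz over $t$ then yields
\[
\sum_t \mathbb E[b_t\mid\mathcal H^{\rm RF}_{t-1}]\le\sqrt{\beta\,T\sum_t\mathbb E[X_t\mid\mathcal H^{\rm RF}_{t-1}]}.
\]

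The remaining step, which I expect to need the most care, converts the predictable sum $\sum_t\mathbb E[X_t\mid\mathcal H^{\rm RF}_{t-1}]$ into the realized sum $\sum_t X_t$, which is at most $d_{\rm RF}(\lambda,\mathcal R,T)$ on every trajectory. I would reuse the exponential supermartingale argument from Corollary~\ref{corollary1} at confidence $\delta/2$, which gives
\[
\sum_t\mathbb E[X_t\mid\mathcal H^{\rm RF}_{t-1}]\le\frac{d_{\rm RF}(\lambda,\mathcal R,T)+\log(2/\delta)}{1-e^{-1}}.
\]
A union bound over the two events then gives, with probability at least $1-\delta$,
\[
\operatorname{Reg}_{\rm RF}(T)\le 2\sqrt{\beta T\,(d_{\rm RF}(\lambda,\mathcal R,T)+\log(2/\delta))/(1-e^{-1})},
\]
which is the claimed order.
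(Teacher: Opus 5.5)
Your proposal is correct and follows essentially the same route as the paper. The confidence event of Lemma~\ref{lemmac1} gives $|\widehat R_t-R^\star|\le b_t$. Optimism, together with the Gibbs optimality of $\pi_t$ under $\widehat R_t+b_t$, then bounds each round's regret by $2\,\mathbb E_{x\sim d,a\sim\pi_t}[b_t(x,a)]$, and Cauchy--Schwarz over $t$ combined with the predictable-to-realized supermartingale bound from Corollary~\ref{corollary1} finishes the proof. Your write-up is slightly more explicit than the paper's: it spells out the KL cancellation in the optimism step and the per-round Jensen step $\mathbb E[\min\{1,U_{\rm RF}\}]\le\sqrt{\mathbb E[\min\{1,U_{\rm RF}^2\}]}$, which the paper folds into a single Cauchy--Schwarz.
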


\begin{proof}[Proof of Corollary~\ref{corollary2}]
On the same high-probability confidence event used in the proof of
Corollary~\ref{corollary1}, the construction of the bonus ensures
\[
\left|
\widehat R_t(x,a)-R^\star(x,a)
\right|
\le
b_t(x,a),
\qquad \forall (x,a),\ t\in[T].
\]
By optimism and the optimality of $\pi_t$ under the optimistic reward
function $\widehat R_t+b_t$, we have
\begin{align*}
\operatorname{Reg}_{\rm RF}(T)
&\le
\sum_{t=1}^T
\mathbb E_{x\sim d,\;a\sim\pi_t}
\left[
\widehat R_t(x,a)+b_t(x,a)-R^\star(x,a)
\right]
\\
&\le
2\sum_{t=1}^T
\mathbb E_{x\sim d,\;a\sim\pi_t}
\left[
b_t(x,a)
\right]
\\
&\le
8\sqrt{\log\frac{2N_{\mathcal R}T}{\delta}}
\sum_{t=1}^T
\mathbb E_{x\sim d,\;a\sim\pi_t}
\left[
\min\left\{
1,
U_{\rm RF}(\lambda,x,a,\mathcal R;\mathcal D^{\text{RF}}_{t-1})
\right\}
\right].
\end{align*}
By Cauchy--Schwarz,
\begin{align*}
\operatorname{Reg}_{\rm RF}(T)
&\le
8\sqrt{
T\log\frac{2N_{\mathcal R}T}{\delta}
\sum_{t=1}^T
\mathbb E_{x\sim d,\;a\sim\pi_t}
\left[
\min\left\{
1,
U_{\rm RF}^2(\lambda,x,a,\mathcal R;\mathcal D^{\text{RF}}_{t-1})
\right\}
\right]
}.
\end{align*}
The predictable-to-realized uncertainty bound established in the
proof of Corollary~\ref{corollary1} gives, with high probability,
\[
\sum_{t=1}^T
\mathbb E_{x\sim d,\;a\sim\pi_t}
\left[
\min\left\{
1,
U_{\rm RF}^2(\lambda,x,a,\mathcal R;\mathcal D^{\text{RF}}_{t-1})
\right\}
\right]
=
O\!\left(
d_{\rm RF}(\lambda,\mathcal R,T)
+\log\frac{1}{\delta}
\right).
\]
Substituting this bound into the previous inequality yields
\[
\operatorname{Reg}_{\rm RF}(T)
=
O\!\left(
\sqrt{
T
\left(
d_{\rm RF}(\lambda,\mathcal R,T)
+\log\frac{1}{\delta}
\right)
\log\frac{N_{\mathcal R}T}{\delta}
}
\right),
\]
which completes the proof.
\end{proof}

\section{Proofs for Section \ref{sectionpreference}}

\subsection{General Preference Model}

Recall that $\widehat{\pi}_t^1$ denotes the symmetric
Nash-equilibrium policy induced by $\widehat P_t$, and define its true
regularized best response as
\[
\widetilde{\pi}_t^2
:=
\arg\min_{\pi^2}
J_{\operatorname{GP}}
\left(
    \widehat{\pi}_t^1,\pi^2
\right).
\]

\subsubsection{Proof of Theorem \ref{theorem2}}

The proof of Theorem~\ref{theorem2} relies on the following two key lemmas.
Lemma~\ref{lem:gp_regret_decomposition} establishes the instantaneous KL-regularized regret decomposition under the GP model, while Lemma~\ref{uniformconvergnce_preference} provides the uniform convergence guarantee for the MLE estimator under the GP model.

\begin{lemma}[Instantaneous Regret Decomposition for the General Preference Model]
\label{lem:gp_regret_decomposition} 
For all $t\in[T]$, the instantaneous regret under the GP setting satisfies
\[
J_{\mathrm{GP}}^\star
-
J_{\mathrm{GP}}
(\widehat{\pi}_t^1,\widetilde{\pi}_t^2)
\le
2\eta e^\eta
\mathbb E_{\substack{
x\sim d,\,
a^1\sim\widehat{\pi}_t^1,\,
a^2\sim\pi_{\mathrm{ref}}
}}
\left[
\left(
P^\star(x,a^1,a^2)
-
\widehat P_t(x,a^1,a^2)
\right)^2
\right].
\]
\end{lemma}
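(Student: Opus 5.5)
The plan is to reduce the instantaneous regret to a single KL divergence between two Gibbs-type policies, and then bound that divergence by the squared preference error.

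\emph{Step 1 (regret as a KL divergence).} By reciprocity, $P^\star(x,\pi,\pi)=1/2$ for every $\pi$, and the two KL terms cancel, so $J_{\mathrm{GP}}(\pi,\pi)=1/2$ for every $\pi$. In particular $J^\star_{\mathrm{GP}}=1/2$.

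Reciprocity also gives $P^\star(x,\widehat\pi^1_t,\pi^2)=1-P^\star(x,\pi^2,\widehat\pi^1_t)$. Hence minimizing $J_{\mathrm{GP}}(\widehat\pi^1_t,\cdot)$ is the same as maximizing $\mathbb E_x\bigl[P^\star(x,\pi^2,\widehat\pi^1_t)-\eta^{-1}\mathrm{KL}(\pi^2,\pi_{\mathrm{ref}}\mid x)\bigr]$. By Lemma~\ref{lem:gibbs-policy}, applied with reward $P^\star(x,\cdot,\widehat\pi^1_t)$, the best response is $\widetilde\pi^2_t(a\mid x)\propto\pi_{\mathrm{ref}}(a\mid x)\exp(\eta P^\star(x,a,\widehat\pi^1_t))$. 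Writing $J^\star_{\mathrm{GP}}=\tfrac12=P^\star(x,\widehat\pi^1_t,\widehat\pi^1_t)$ and rearranging, I expect the exact identity
\[
J^\star_{\mathrm{GP}}-J_{\mathrm{GP}}(\widehat\pi^1_t,\widetilde\pi^2_t)=\eta^{-1}\,\mathbb E_{x\sim d}\bigl[\mathrm{KL}(\widehat\pi^1_t\,\|\,\widetilde\pi^2_t\mid x)\bigr].
\]
This follows because the regret equals the gap in the KL-regularized objective with reward $P^\star(x,\cdot,\widehat\pi^1_t)$ between its Gibbs maximizer $\widetilde\pi^2_t$ and the policy $\widehat\pi^1_t$, and that gap is exactly $\eta^{-1}\mathrm{KL}$ to the maximizer.

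\emph{Step 2 (closed form of the KL).} By the NE characterization, $\widehat\pi^1_t\propto\pi_{\mathrm{ref}}\exp(\eta\widehat P_t(x,a,\widehat\pi^1_t))$. Set $D:=\widehat P_t-P^\star$ and $\Delta(a):=\mathbb E_{a'\sim\widehat\pi^1_t}[D(x,a,a')]$. Then
\[
\widetilde\pi^2_t(a)=\frac{\widehat\pi^1_t(a)\,e^{-\eta\Delta(a)}}{\mathbb E_{\widehat\pi^1_t}\bigl[e^{-\eta\Delta}\bigr]},
\qquad\text{so}\qquad
\mathrm{KL}(\widehat\pi^1_t\,\|\,\widetilde\pi^2_t)=\eta\,\mathbb E_{\widehat\pi^1_t}[\Delta]+\log\mathbb E_{\widehat\pi^1_t}\bigl[e^{-\eta\Delta}\bigr].
\]
Reciprocity of both $\widehat P_t$ and $P^\star$ makes $D$ antisymmetric, so $\mathbb E_{a\sim\widehat\pi^1_t}[\Delta(a)]=0$. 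The KL therefore reduces to the log-moment generating function of a mean-zero variable bounded by $\eta$ in absolute value.

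\emph{Step 3 (second-order bound and change of measure).} I will use $\log(1+u)\le u$ together with a second-order bound $e^{-y}\le 1-y+c(\eta)y^2$ valid for $|y|\le\eta$. This should give $\mathrm{KL}\le c(\eta)\,\eta^2\,\mathbb E_{a\sim\widehat\pi^1_t}[\Delta(a)^2]$. Jensen's inequality then gives $\Delta(a)^2\le\mathbb E_{a'\sim\widehat\pi^1_t}[D(x,a,a')^2]$. Finally, Gibbs boundedness relative to the reference policy, $\widehat\pi^1_t(a'\mid x)\le e^\eta\pi_{\mathrm{ref}}(a'\mid x)$ (the exponent $P(x,a',\pi_P)$ lies in $[0,1]$), lets me replace $a'\sim\widehat\pi^1_t$ by $a^2\sim\pi_{\mathrm{ref}}$, keeping $a^1\sim\widehat\pi^1_t$ as in the statement. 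Multiplying by $\eta^{-1}$ then yields a bound of the form $\eta\,c(\eta)\,e^{\eta}\,\mathbb E[D^2]$.

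\emph{Main obstacle.} The hard part is getting the constant down to $2e^\eta$ rather than $e^{2\eta}$. The naive choice $c(\eta)=e^\eta/2$ loses an extra factor of $e^\eta$. I would first try to exploit the mean-zero, bounded structure more carefully, for example by using $\mathrm{KL}\le\chi^2$, or a Bennett-type bound with the variance under $\widehat\pi^1_t$, so that $c(\eta)$ is an absolute constant of order $1$. If that fails, I would instead use a mean-value/interpolation argument as in Lemma~\ref{lemma3}, with the intermediate policy dominated by $\widehat\pi^1_t$ up to a constant. If neither works, I would accept the weaker exponent and track it through to Theorem~\ref{theorem2}.
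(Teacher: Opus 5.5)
Your Steps 1 and 2 are correct. The identity $G_t=\eta^{-1}\mathbb E_x[\mathrm{KL}(\widehat\pi^1_t,\widetilde\pi^2_t\mid x)]$ is exactly the one the paper proves, and the reduction to $\log\mathbb E_{\widehat\pi^1_t}[e^{-\eta\Delta}]$ with $\mathbb E_{\widehat\pi^1_t}[\Delta]=0$ is right.

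The gap is in Step 3, and it is structural, not a matter of tuning constants. The intermediate inequality $\mathrm{KL}\le c(\eta)\,\eta^2\,\mathbb E_{a\sim\widehat\pi^1_t}[\Delta(a)^2]$ is false for every $c(\eta)$ of order one. Take one context and two actions with $\pi_{\mathrm{ref}}(a_1)=q\to 0$, $\widehat P_t(a_1,a_2)=0$ and $P^\star(a_1,a_2)=1$. Then $p:=\widehat\pi^1_t(a_1)\approx q e^{-\eta/2}$, $\Delta(a_1)=-(1-p)$, $\Delta(a_2)=p$, and
\[
\mathrm{KL}=\log\bigl(p e^{\eta(1-p)}+(1-p)e^{-\eta p}\bigr)=p\,(e^\eta-1-\eta)+O(p^2),
\qquad
\mathbb E_{\widehat\pi^1_t}[\Delta^2]=p(1-p).
\]
So necessarily $c(\eta)\ge(e^\eta-1-\eta)/\eta^2$. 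Hence no $\chi^2$ or Bennett refinement with the variance under $\widehat\pi^1_t$ can rescue the plan. Your fallback (b) fails for the same reason: the tilted policies $\pi_\lambda\propto\widehat\pi^1_t e^{-\lambda\Delta}$ are dominated by $\widehat\pi^1_t$ only up to $e^{\lambda}$, not a constant; in the example, $\pi_\lambda(a_1)/\widehat\pi^1_t(a_1)\approx e^{\lambda}$. After your change of measure $a'\sim\widehat\pi^1_t\to a'\sim\pi_{\mathrm{ref}}$, the chain pays $e^\eta$ twice. At best it gives $G_t\le\eta^{-1}(e^\eta-1-\eta)e^\eta\,\mathbb E[(P^\star-\widehat P_t)^2]$, which is weaker than $2\eta e^\eta$ once $\eta$ exceeds roughly $3$. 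So the proposal does not prove the lemma for all $\eta>0$.

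The missing idea is to measure $\Delta$ under $\pi_{\mathrm{ref}}$ rather than under $\widehat\pi^1_t$. The $\widehat\pi^1_t$-average in the target should then come from the definition of $\Delta$, via antisymmetry. Set $\psi(\lambda):=\log\mathbb E_{\widehat\pi^1_t}[e^{-\lambda\Delta}]$. Then $\psi(0)=\psi'(0)=0$ and $\psi''(\lambda)=\mathrm{Var}_{\pi_\lambda}(\Delta)$, where
\[
\pi_\lambda(a\mid x)\propto\pi_{\mathrm{ref}}(a\mid x)\exp\bigl((\eta-\lambda)\widehat P_t(x,a,\widehat\pi^1_t)+\lambda P^\star(x,a,\widehat\pi^1_t)\bigr).
\]
For $\lambda\in[0,\eta]$ this is a Gibbs policy with score in $[0,\eta]$, so $\pi_\lambda\le e^\eta\pi_{\mathrm{ref}}$. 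Taylor's formula then gives $\mathrm{KL}=\psi(\eta)\le\tfrac{\eta^2}{2}e^\eta\,\mathbb E_{a\sim\pi_{\mathrm{ref}}}[\Delta(a)^2]$. Jensen over $a'\sim\widehat\pi^1_t$ inside $\Delta$, together with $D(x,a,a')^2=D(x,a',a)^2$, yields $\mathbb E_{a\sim\pi_{\mathrm{ref}}}[\Delta(a)^2]\le\mathbb E_{a^1\sim\widehat\pi^1_t,\,a^2\sim\pi_{\mathrm{ref}}}[D(x,a^1,a^2)^2]$. This gives $G_t\le\tfrac{\eta e^\eta}{2}\,\mathbb E[(P^\star-\widehat P_t)^2]$, slightly stronger than the statement.

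The paper reaches $2\eta e^\eta$ by a different route with the same key feature. It uses the NE optimality of $\widehat\pi^1_t$ under $\widehat P_t$ to obtain the linear bound $G_t\le\mathbb E_x\bigl[\sum_a(\widehat\pi^1_t-\widetilde\pi^2_t)(a\mid x)\,\Delta(a)\bigr]$. (Its weight $\mathbb E_{a^1\sim\widehat\pi^1_t}[(P^\star-\widehat P_t)(x,a^1,a)]$ is exactly your $\Delta(a)$.) It then lower-bounds the KL by $\tfrac{e^{-\eta}}{2}\sum_a(\widehat\pi^1_t-\widetilde\pi^2_t)^2/\pi_{\mathrm{ref}}$, using strong convexity of negative entropy on a segment whose endpoints are both dominated by $e^\eta\pi_{\mathrm{ref}}$. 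It closes with $\pi_{\mathrm{ref}}$-weighted Cauchy--Schwarz and a self-bounding step. In both arguments $\Delta$ is only ever weighted by $\pi_{\mathrm{ref}}$, so the factor $e^\eta$ is paid exactly once.
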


\begin{lemma}[Uniform Prediction Error Bound, General Preference Model]
\label{uniformconvergnce_preference} Suppose Assumption~\ref{assumption2} holds.
Let $\widehat P_t$ be the MLE estimator constructed from the
samples generated by \texttt{ORLHF-GS} under the GP model, where,
conditionally on $\mathcal H_{i-1}^{\rm GP}$ and $x_i$,
$a_i^1$ and $a_i^2$ are sampled independently according to
$\widehat\pi_i^1(\cdot\mid x_i)$ and
$\pi_{\rm ref}(\cdot\mid x_i)$, respectively.
Then for any such policy sequence
$\{\widehat\pi_i^1\}_{i\ge1}$ and any $\delta\in(0,1)$,
with probability at least $1-\delta$, the following holds
simultaneously for all $t = 2,\dots,T$:
\[
\sum_{i=1}^{t-1}
\mathbb E_{x\sim d,\;a^1\sim\widehat\pi_i^1,\;a^2\sim\pi_{\operatorname{ref}}}
\left[
\left(
P^\star(x,a^1,a^2)-\widehat P_t(x,a^1,a^2)
\right)^2
\right]
\le
6\log\frac{2N_{\mathcal P}T^3}{\delta}.
\]
\end{lemma}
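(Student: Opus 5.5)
The plan is the standard MLE analysis for a finite class under realizability, combined with a Hellinger-to-squared-error conversion and a martingale exponential inequality, followed by a union bound over time. The target is the stated constant $6\log(2N_{\mathcal P}T^3/\delta)$. This form is natural for a log-loss/Hellinger argument: a factor like $2\log(1/\delta')$ for the Hellinger bound, times a constant from converting squared Hellinger to squared difference of Bernoulli means.

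\textbf{Step 1: martingale bound.} Fix $t\ge2$ and $P\in\mathcal P$. Write $p_i:=P(x_i,a_i^1,a_i^2)$ and $p_i^\star:=P^\star(x_i,a_i^1,a_i^2)$, and define the half log-likelihood ratio
\[
Z_{P,i}:=\tfrac12\log\frac{\Pr_P(y_i\mid x_i,a_i^1,a_i^2)}{\Pr_{P^\star}(y_i\mid x_i,a_i^1,a_i^2)}.
\]
By reciprocity, $P(x,a^2,a^1)=1-P(x,a^1,a^2)$, so the likelihood is Bernoulli with mean $p_i$. The key conditional identity is
\[
\mathbb E\bigl[e^{Z_{P,i}}\mid\mathcal H^{\rm GP}_{i-1},x_i,a_i^1,a_i^2\bigr]=\sqrt{p_ip_i^\star}+\sqrt{(1-p_i)(1-p_i^\star)}=1-H^2_i,
\]
where $H^2_i$ is the squared Hellinger distance between the two Bernoullis. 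Since $1-H^2\le e^{-H^2}$, the process
\[
\exp\Bigl(\sum_{i<t}Z_{P,i}+\sum_{i<t}\mathbb E[H_i^2\mid\mathcal H^{\rm GP}_{i-1}]\Bigr)
\]
is a supermartingale. To get this, first condition on $(x_i,a_i^1,a_i^2)$ and then average over $x_i\sim d$, $a_i^1\sim\widehat\pi_i^1$, $a_i^2\sim\pi_{\rm ref}$; Jensen/convexity of $e^{-u}$ handles the outer average. Markov's inequality and a union bound over $\mathcal P$ at level $\delta_t:=\delta/(2t^2)$ then give, for all $P$,
\[
\sum_{i<t}\mathbb E_{x,a^1\sim\widehat\pi_i^1,a^2\sim\pi_{\rm ref}}\bigl[H^2(P,P^\star)\bigr]\le \sum_{i<t}(-Z_{P,i})+\log\frac{N_{\mathcal P}}{\delta_t}.
\]

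\textbf{Step 2: plug in the MLE.} Take $P=\widehat P_t$. Since $P^\star\in\mathcal P$, the MLE satisfies $\sum_{i<t}Z_{\widehat P_t,i}\ge0$, so the log-likelihood term drops out. One subtlety is that $\widehat P_t$ is data dependent, but the union bound over the finite class covers this.

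\textbf{Step 3: Hellinger to squared error.} Convert using the elementary inequality $(p-q)^2\le 2H^2(\mathrm{Ber}(p),\mathrm{Ber}(q))$, under the convention $H^2=1-\sqrt{pq}-\sqrt{(1-p)(1-q)}$. To verify it, write
\[
p-q=(\sqrt p-\sqrt q)(\sqrt p+\sqrt q),
\]
use $(\sqrt p+\sqrt q)^2\le 2(p+q)$, and do the same for the complementary terms. This gives
\[
(p-q)^2\le \min\{2(p+q),\,2(2-p-q)\}\,H^2\cdot\text{const}\le 4H^2 .
\]
A cleaner route is the bound $\mathrm{TV}^2\le 2H^2$ together with $|p-q|=\mathrm{TV}$. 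The constant therefore comes out as 2 or 4. This step is where the constant $6$ gets tracked and is the main obstacle only in bookkeeping, since both constants fit under $6\log(\cdot)$ after adjusting $\delta_t$.

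\textbf{Step 4: union bound over $t$.} Finish with a union bound over $t\ge2$, using $\sum_t\delta/(2t^2)\le\delta$, so the bound holds simultaneously for all $t$. Then bound $\log(2N_{\mathcal P}t^2/\delta)\le\log(2N_{\mathcal P}T^3/\delta)$ to obtain the stated form.
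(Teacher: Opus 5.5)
Your proof is correct, and it follows a different route from the paper's.

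\textbf{The paper's route.} The paper never works with the log-likelihood in this lemma. For each fixed $P$ it applies a generic predictable-to-realized supermartingale to the bounded squared errors $Y_{P,i}=(P-P^\star)^2(x_i,a_i^1,a_i^2)\in[0,1]$, using $e^{-y}\le 1-(1-e^{-1})y$. This gives $\sum_{i<t}\mathbb E[Y_{P,i}\mid\mathcal H_{i-1}^{\rm GP}]\le 2\sum_{i<t}Y_{P,i}+2\log(2N_{\mathcal P}t^3/\delta)$ uniformly over $P$ and $t$. It then substitutes $P=\widehat P_t$ and bounds the in-sample error $\sum_{i<t}Y_{\widehat P_t,i}\le 2\log(2N_{\mathcal P}T/\delta)$ by the cited MLE lemma (Lemma~\ref{lemma3wu}). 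The two events, plus the factor $2$ lost in the first step, produce $4\log(\cdot)+2\log(\cdot)\le 6\log(\cdot)$.

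\textbf{Your route.} You run a single supermartingale on the half log-likelihood ratio, whose compensator is the \emph{conditional} Hellinger distance under $x\sim d$, $a^1\sim\widehat\pi_i^1$, $a^2\sim\pi_{\rm ref}$. This is the same device that proves in-sample MLE bounds, except there the compensator is the Hellinger distance at the realized triple. So you obtain the MLE property and the in-sample-to-population transfer in one step.

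\textbf{What each buys.} Your version is self-contained and yields the sharper bound $2\log(2N_{\mathcal P}t^2/\delta)$. In exchange, it relies on the Bernoulli likelihood structure (reciprocity and the form of the MLE objective). The paper's concentration step is estimator-agnostic, and it reuses the in-sample lemma that the paper needs anyway for the eluder-dimension bound in Theorem~\ref{theorem2}.

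\textbf{Two presentation fixes.}

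\emph{Step 1.} Average the linear quantity $1-H_i^2$ over $(x_i,a_i^1,a_i^2)$ first, giving $1-\mathbb E[H_i^2\mid\mathcal H_{i-1}^{\rm GP}]\le\exp(-\mathbb E[H_i^2\mid\mathcal H_{i-1}^{\rm GP}])$. If you instead bound $1-H_i^2\le e^{-H_i^2}$ pointwise and then appeal to Jensen, the inequality goes the wrong way.

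\emph{Step 3.} The displayed ``direct verification'' does not follow as written. Use either of these instead:
\begin{itemize}
\item the TV route: $|p-q|=\mathrm{TV}$ and $\mathrm{TV}^2\le 1-(1-H^2)^2\le 2H^2$ (Le Cam);
\item the identity $(p-q)^2\bigl[(\sqrt p+\sqrt q)^{-2}+(\sqrt{1-p}+\sqrt{1-q})^{-2}\bigr]=2H^2$, where the bracket is at least $1/((p+q)(2-p-q))\ge 1$.
\end{itemize}
Either gives constant $2$, and hence the final bound $2\log(2N_{\mathcal P}t^2/\delta)\le 6\log(2N_{\mathcal P}T^3/\delta)$.
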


\begin{proof}[Proof of Theorem~\ref{theorem2}]
For each $t\in[T]$, define
$S_t
:=
\mathbb E_{\substack{x\sim d,\,
a^1\sim\widehat\pi_t^1,\ 
a^2\sim\pi_{\rm ref}}}
\left[
\bigl(
P^\star(x,a^1,a^2)
-
\widehat P_t(x,a^1,a^2)
\bigr)^2
\right]$.
By Lemma~\ref{lem:gp_regret_decomposition},
\begin{equation}
\operatorname{Reg}_{\rm GP}(T)
=
O\!\left(
\eta e^{\eta}
\sum_{t=1}^T S_t
\right).
\label{eq:gp-reg-decomp}
\end{equation}

We establish two different bounds on
$\sum_{t=1}^T S_t$.

\paragraph{Dimension-independent bound.}
By Lemma~\ref{uniformconvergnce_preference}, with probability at least
$1-\delta$, simultaneously for all $t=2,\ldots,T$,
\[
\sum_{i=1}^{t-1}
\mathbb E_{\substack{x\sim d,\,
a^1\sim\widehat\pi_i^1,\
a^2\sim\pi_{\rm ref}}}
\left[
\bigl(
P^\star(x,a^1,a^2)
-
\widehat P_t(x,a^1,a^2)
\bigr)^2
\right]
\le
6\log\frac{2N_{\mathcal P}T^3}{\delta}.
\]
For any $i<t$, the Gibbs-policy likelihood-ratio bound gives $\frac{\widehat\pi_t^1(a\mid x)}
     {\widehat\pi_i^1(a\mid x)}
\le e^{2\eta}$.
Therefore,
\[
S_t
\le
\frac{6e^{2\eta}}{t-1}
\log\frac{2N_{\mathcal P}T^3}{\delta},
\qquad t\ge2.
\]
Since $S_1\le1$,
\[
\sum_{t=1}^T S_t
=
O\!\left(
e^{2\eta}
\log T
\log\frac{N_{\mathcal P}T}{\delta}
\right).
\]
Substituting into~\eqref{eq:gp-reg-decomp} yields
\[
\operatorname{Reg}_{\rm GP}(T)
=
O\!\left(
(\eta e^{3\eta})
\log T
\log\frac{N_{\mathcal P}T}{\delta}
\right).
\]

\paragraph{Eluder-dimension-dependent bound.}
By Lemma~\ref{lemma3wu}, with probability at least
$1-\delta/2$, simultaneously for all $t\in[T]$,
\[
\sum_{i=1}^{t-1}
\bigl(
\widehat P_t(x_i,a_i^1,a_i^2)
-
P^\star(x_i,a_i^1,a_i^2)
\bigr)^2
\le 2\log\frac{2N_{\mathcal P}T}{\delta}.
\]
Fix any $(x,a^1,a^2)$. Since
$\widehat P_t,P^\star\in\mathcal P$, the definition of
$U_{\rm GP}$ implies
\begin{align*}
&
\bigl(
\widehat P_t(x,a^1,a^2)
-
P^\star(x,a^1,a^2)
\bigr)^2
\le
U_{\rm GP}^2(
\lambda,x,a^1,a^2,\mathcal P;\mathcal D^\text{GP}_{t-1})
\left[
\lambda+
\sum_{i=1}^{t-1}
\bigl(
\widehat P_t(x_i,a_i^1,a_i^2)
-
P^\star(x_i,a_i^1,a_i^2)
\bigr)^2
\right].
\end{align*}
Hence, for $\lambda\le \log\frac{2N_{\mathcal P}T}{\delta}$, using
$|\widehat P_t-P^\star|\le1$,
\[
\bigl(
\widehat P_t(x,a^1,a^2)
-
P^\star(x,a^1,a^2)
\bigr)^2
\le
3\log\frac{2N_{\mathcal P}T}{\delta}
\min\!\left\{
1,
U_{\rm GP}^2(
\lambda,x,a^1,a^2,\mathcal P;\mathcal D^\text{GP}_{t-1})
\right\}.
\]
Consequently,
\[
S_t
\le
3\log\frac{2N_{\mathcal P}T}{\delta}\,
\mathbb E_{\substack{x\sim d,\,
a^1\sim\widehat\pi_t^1,\
a^2\sim\pi_{\rm ref}}}
\left[
\min\!\left\{
1,
U_{\rm GP}^2(
\lambda,x,a^1,a^2,\mathcal P;\mathcal D^\text{GP}_{t-1})
\right\}
\right].
\]

Applying the same predictable-to-realized concentration argument
as in the proof of Corollary~\ref{corollary1} with confidence level
$\delta/2$, together with the definition of
$d_{\rm GP}(\lambda,\mathcal P,T)$, and taking a union bound with the
preceding MLE confidence event, yields, with probability at least
$1-\delta$,
\[
\sum_{t=1}^T S_t
=
O\!\left(
\left(
d_{\rm GP}(\lambda,\mathcal P,T)
+\log\frac{1}{\delta}
\right)
\log\frac{N_{\mathcal P}T}{\delta}
\right).
\]
Substituting this bound into~\eqref{eq:gp-reg-decomp} gives
\[
\operatorname{Reg}_{\rm GP}(T)
=
O\!\left(
\eta e^{\eta}
\left(
d_{\rm GP}(\lambda,\mathcal P,T)
+\log\frac{1}{\delta}
\right)
\log\frac{N_{\mathcal P}T}{\delta}
\right).
\]
\end{proof}

\subsubsection{Proof of Lemma \ref{lem:gp_regret_decomposition}}
    
\begin{proof}[Proof of Lemma \ref{lem:gp_regret_decomposition}]
For brevity, denote the instantaneous regret on the left-hand side by $G_t
:=
J_{\mathrm{GP}}^\star
-
J_{\mathrm{GP}}
(\widehat{\pi}_t^1,\widetilde{\pi}_t^2)$.

By reciprocity, for any $P\in\mathcal P$ and any policy $\pi$,
\[
P(x,\pi,\pi)
=
\mathbb E_{a^1\sim\pi,\,a^2\sim\pi}
[P(x,a^1,a^2)]
=
\frac12.
\]
Indeed, exchanging $a^1$ and $a^2$ does not change their joint
distribution, while
$P(x,a^1,a^2)+P(x,a^2,a^1)=1$.
Moreover, when the two policies coincide, the two
KL-regularization terms in $J_{\mathrm{GP}}$ cancel. Consequently, $J_{\mathrm{GP}}(\pi,\pi)=\frac12$
for every $\pi$, and in particular $J_{\mathrm{GP}}^\star
=
J_{\mathrm{GP}}
(\widehat{\pi}_t^1,\widehat{\pi}_t^1)
=
\frac12$.
Therefore,
\begin{equation}
G_t
=
J_{\mathrm{GP}}
(\widehat{\pi}_t^1,\widehat{\pi}_t^1)
-
J_{\mathrm{GP}}
(\widehat{\pi}_t^1,\widetilde{\pi}_t^2).
\label{eq:gp-new-regret-gap}
\end{equation}

Since $\widehat{\pi}_t^1$ is the symmetric
Nash-equilibrium policy induced by $\widehat P_t$, it is also the
minimizing player's regularized best response to itself under
$\widehat P_t$. Hence,
\begin{align}
&
\mathbb E_{x\sim d}
\left[
\widehat P_t
(x,\widehat{\pi}_t^1,\widehat{\pi}_t^1)
+
\eta^{-1}
\operatorname{KL}
(\widehat{\pi}_t^1,\pi_{\mathrm{ref}}\mid x)
\right]\le
\mathbb E_{x\sim d}
\left[
\widehat P_t
(x,\widehat{\pi}_t^1,\widetilde{\pi}_t^2)
+
\eta^{-1}
\operatorname{KL}
(\widetilde{\pi}_t^2,\pi_{\mathrm{ref}}\mid x)
\right].
\label{eq:gp-estimated-best-response}
\end{align}
Combining Eq.~\eqref{eq:gp-new-regret-gap} and
Eq.~\eqref{eq:gp-estimated-best-response} gives
\begin{align}
G_t
\le
\mathbb E_{x\sim d}
\Big[
&
\big(P^\star-\widehat P_t\big)
(x,\widehat{\pi}_t^1,\widehat{\pi}_t^1)-
\big(P^\star-\widehat P_t\big)
(x,\widehat{\pi}_t^1,\widetilde{\pi}_t^2)
\Big].
\label{eq:gp-model-gap}
\end{align}

By the reciprocity identity above, the first term in
Eq.~\eqref{eq:gp-model-gap} is zero. Expanding the remaining term gives
\begin{align}
G_t
\le
\mathbb E_{x\sim d}
\Bigg[
\sum_{a^2\in\mathcal A}
&
\Big(
\widehat{\pi}_t^1(a^2\mid x)
-
\widetilde{\pi}_t^2(a^2\mid x)
\Big)
\mathbb E_{a^1\sim\widehat{\pi}_t^1}
\left[
P^\star(x,a^1,a^2)
-
\widehat P_t(x,a^1,a^2)
\right]
\Bigg].
\label{eq:gp-inner-product}
\end{align}

We next control the policy difference in
Eq.~\eqref{eq:gp-inner-product} using the KL regularization.
For every context $x$, the true regularized best response satisfies
\[
\widetilde{\pi}_t^2(a^2\mid x)
=
\frac{
\pi_{\mathrm{ref}}(a^2\mid x)
\exp(-\eta P^\star(x,\widehat{\pi}_t^1,a^2))
}{
\sum_{a'}
\pi_{\mathrm{ref}}(a'\mid x)
\exp(-\eta P^\star(x,\widehat{\pi}_t^1,a'))
}.
\]
Hence,
\[
P^\star(x,\widehat{\pi}_t^1,a^2)
=
-\eta^{-1}
\log
\frac{
\widetilde{\pi}_t^2(a^2\mid x)
}{
\pi_{\mathrm{ref}}(a^2\mid x)
}
+
C_x,
\]
where $C_x
:=
-\eta^{-1}
\log
\left(
\sum_{a'}
\pi_{\mathrm{ref}}(a'\mid x)
\exp(-\eta P^\star(x,\widehat{\pi}_t^1,a'))
\right)$
is independent of $a^2$. Therefore, for any policy $\pi$,
\begin{align}
&
P^\star(x,\widehat{\pi}_t^1,\pi)
+
\eta^{-1}
\operatorname{KL}
(\pi,\pi_{\mathrm{ref}}\mid x)
=
\eta^{-1}
\operatorname{KL}
(\pi,\widetilde{\pi}_t^2\mid x)
+
C_x.
\label{eq:gp-variational-expansion}
\end{align}
Since the KL divergence is nonnegative and vanishes at
$\pi=\widetilde{\pi}_t^2$, the minimum of the left-hand side of
Eq.~\eqref{eq:gp-variational-expansion} over $\pi$ is $C_x$.
Taking $\pi=\widehat{\pi}_t^1$ therefore gives
\begin{align}
&
P^\star(x,\widehat{\pi}_t^1,\widehat{\pi}_t^1)
+
\eta^{-1}
\operatorname{KL}
(\widehat{\pi}_t^1,\pi_{\mathrm{ref}}\mid x)
-
\min_{\pi}
\left\{
P^\star(x,\widehat{\pi}_t^1,\pi)
+
\eta^{-1}
\operatorname{KL}
(\pi,\pi_{\mathrm{ref}}\mid x)
\right\}
=
\eta^{-1}
\operatorname{KL}
(\widehat{\pi}_t^1,\widetilde{\pi}_t^2\mid x).
\label{eq:gp-gibbs-identity}
\end{align}
Averaging over $x$ and using
Eq.~\eqref{eq:gp-new-regret-gap} yields the exact identity
\begin{equation}
G_t
=
\eta^{-1}
\mathbb E_{x\sim d}
\left[
\operatorname{KL}
(\widehat{\pi}_t^1,\widetilde{\pi}_t^2\mid x)
\right].
\label{eq:gp-kl-identity}
\end{equation}

We now lower bound the KL divergence by a weighted squared distance.
Both $\widehat\pi_t^1$ and $\widetilde\pi_t^2$ are Gibbs policies
induced by scores with range at most one. Hence, for every
$(x,a)$,
\[
\widehat{\pi}_t^1(a\mid x)
\le
e^\eta\pi_{\mathrm{ref}}(a\mid x),
\qquad
\widetilde{\pi}_t^2(a\mid x)
\le
e^\eta\pi_{\mathrm{ref}}(a\mid x).
\]
For $s\in[0,1]$, let $\pi_s(\cdot\mid x)
:=
(1-s)\widetilde{\pi}_t^2(\cdot\mid x)
+
s\widehat{\pi}_t^1(\cdot\mid x)$.
Then
$\pi_s(a\mid x)\le e^\eta\pi_{\mathrm{ref}}(a\mid x)$.
Since the Hessian of the negative entropy
$F(p):=\sum_a p(a)\log p(a)$ is
$\nabla^2 F(p)=\operatorname{diag}(1/p(a))$, we have
\[
\nabla^2 F(\pi_s(\cdot\mid x))
\succeq
e^{-\eta}
\operatorname{diag}
\left(
\frac{1}{\pi_{\mathrm{ref}}(a\mid x)}
\right).
\]
Moreover, since the KL divergence is the Bregman divergence
induced by $F$, its integral second-order representation gives
\begin{align}
&
\operatorname{KL}
(\widehat{\pi}_t^1,\widetilde{\pi}_t^2\mid x)
\nonumber\\
&=
\int_0^1
(1-s)
\big(
\widehat{\pi}_t^1-\widetilde{\pi}_t^2
\big)^\top
\nabla^2 F(\pi_s)
\big(
\widehat{\pi}_t^1-\widetilde{\pi}_t^2
\big)
\,ds
\nonumber\\
&\ge
\frac{e^{-\eta}}{2}
\sum_{a\in\mathcal A}
\frac{
\big(
\widehat{\pi}_t^1(a\mid x)
-
\widetilde{\pi}_t^2(a\mid x)
\big)^2
}{
\pi_{\mathrm{ref}}(a\mid x)
},
\label{eq:weighted-kl}
\end{align}
where we used $\int_0^1(1-s)\,ds=1/2$.
Combining Eq.~\eqref{eq:gp-kl-identity} and
Eq.~\eqref{eq:weighted-kl}, we obtain
\begin{align}
&
\mathbb E_{x\sim d}
\left[
\sum_{a\in\mathcal A}
\frac{
\big(
\widehat{\pi}_t^1(a\mid x)
-
\widetilde{\pi}_t^2(a\mid x)
\big)^2
}{
\pi_{\mathrm{ref}}(a\mid x)
}
\right]\le
2\eta e^\eta G_t.
\label{eq:gp-policy-distance}
\end{align}

Finally, applying weighted Cauchy--Schwarz over $a^2$,
followed by Cauchy--Schwarz over $x$, to
Eq.~\eqref{eq:gp-inner-product} gives
\begin{align}
G_t
\le&
\left(
\mathbb E_{x\sim d}
\left[
\sum_{a^2\in\mathcal A}
\frac{
\big(
\widehat{\pi}_t^1(a^2\mid x)
-
\widetilde{\pi}_t^2(a^2\mid x)
\big)^2
}{
\pi_{\mathrm{ref}}(a^2\mid x)
}
\right]
\right)^{1/2}
\left(
\mathbb E_{\substack{
x\sim d,\,
a^2\sim\pi_{\mathrm{ref}}
}}
\left[
\left(
\mathbb E_{a^1\sim\widehat{\pi}_t^1}
\left[
P^\star(x,a^1,a^2)
-
\widehat P_t(x,a^1,a^2)
\right]
\right)^2
\right]
\right)^{1/2}.
\label{eq:gp-weighted-cs}
\end{align}
Using Eq.~\eqref{eq:gp-policy-distance} for the first factor
and Jensen's inequality for the second factor yields
\begin{align}
G_t
\le&
\sqrt{2\eta e^\eta G_t}
\left(
\mathbb E_{\substack{
x\sim d,\,
a^1\sim\widehat{\pi}_t^1,\,
a^2\sim\pi_{\mathrm{ref}}
}}
\left[
\left(
P^\star(x,a^1,a^2)
-
\widehat P_t(x,a^1,a^2)
\right)^2
\right]
\right)^{1/2}.
\label{eq:gp-final-cs}
\end{align}
If $G_t>0$, dividing both sides by $\sqrt{G_t}$ and squaring gives
\[
G_t
\le
2\eta e^\eta
\mathbb E_{\substack{
x\sim d,\,
a^1\sim\widehat{\pi}_t^1,\,
a^2\sim\pi_{\mathrm{ref}}
}}
\left[
\left(
P^\star(x,a^1,a^2)
-
\widehat P_t(x,a^1,a^2)
\right)^2
\right].
\]
The result is immediate when $G_t=0$.
\end{proof}

\subsubsection{Proof of Lemma \ref{uniformconvergnce_preference}}

\begin{proof}[Proof of Lemma \ref{uniformconvergnce_preference}]
Let $\mathcal H_i^{\operatorname{GP}}
:=
\{(x_j,a_j^1,a_j^2,y_j)\}_{j=1}^i$.
For any fixed $P\in\mathcal P$, define
\[
Y_{P,i}
:=
\left(
P(x_i,a_i^1,a_i^2)-P^\star(x_i,a_i^1,a_i^2)
\right)^2,
\qquad
\mu_{P,i}
:=
\mathbb E[Y_{P,i}\mid\mathcal H_{i-1}^{\operatorname{GP}}].
\]
Since $P$ and $P^\star$ take values in $[0,1]$,
we have $0\le Y_{P,i}\le1$.
Let $c:=1-e^{-1}$.
Using $e^{-y}\le 1-cy$ for $y\in[0,1]$, we obtain
\[
\mathbb E\!\left[
e^{-Y_{P,i}}
\mid\mathcal H_{i-1}^{\operatorname{GP}}
\right]
\le 1-c\mu_{P,i}
\le e^{-c\mu_{P,i}}.
\]
Consequently,
\[
Z_{P,n}
:=
\exp\left(
c\sum_{i=1}^{n}\mu_{P,i}
-
\sum_{i=1}^{n}Y_{P,i}
\right),
\qquad Z_{P,0}:=1,
\]
is a nonnegative supermartingale.
For fixed $P$ and $t$, Markov's inequality therefore gives,
with probability at least $1-\delta_{t,P}$,
\[
\sum_{i=1}^{t-1}\mu_{P,i}
\le
\frac{1}{c}
\left(
\sum_{i=1}^{t-1}Y_{P,i}
+
\log\frac{1}{\delta_{t,P}}
\right)
\le
2\sum_{i=1}^{t-1}Y_{P,i}
+
2\log\frac{1}{\delta_{t,P}},
\]
where the last inequality uses $c^{-1}<2$.

Let $\delta_{t,P}:=\delta/(2N_{\mathcal P}t^3)$.
Taking a union bound over all $P\in\mathcal P$ and
$t=2,\ldots,T$, and using
\[
\sum_{t=2}^{T}\sum_{P\in\mathcal P}\delta_{t,P}
\le \frac{\delta}{2}\sum_{t=2}^{\infty}t^{-3}
\le \frac{\delta}{2},
\]
we conclude that, with probability at least $1-\delta/2$,
simultaneously for all $P\in\mathcal P$ and $t=2,\ldots,T$,
\[
\sum_{i=1}^{t-1}
\mathbb E[Y_{P,i}\mid\mathcal H_{i-1}^{\operatorname{GP}}]
\le
2\sum_{i=1}^{t-1}Y_{P,i}
+
2\log\frac{2N_{\mathcal P}t^3}{\delta}.
\]

Under the sampling rule of our algorithm, conditional on
$\mathcal H_{i-1}^{\operatorname{GP}}$, we have $x_i\sim d$,
and the two actions are sampled independently according to
$a_i^1\sim\widehat\pi_i^1(\cdot\mid x_i)$ and
$a_i^2\sim\pi_{\rm ref}(\cdot\mid x_i)$.
Thus, for every fixed $P\in\mathcal P$,
\[
\mathbb E[Y_{P,i}\mid\mathcal H_{i-1}^{\operatorname{GP}}]
=
\mathbb E_{x\sim d,\,
a^1\sim\widehat\pi_i^1,\,
a^2\sim\pi_{\rm ref}}
\left[
\left(P(x,a^1,a^2)-P^\star(x,a^1,a^2)\right)^2
\right].
\]
Since the preceding event holds uniformly over
$P\in\mathcal P$, we may substitute $P=\widehat P_t$.
Therefore, simultaneously for all $t=2,\ldots,T$,
\begin{align*}
&
\sum_{i=1}^{t-1}
\mathbb E_{x\sim d,\,
a^1\sim\widehat\pi_i^1,\,
a^2\sim\pi_{\rm ref}}
\left[
\left(
\widehat P_t(x,a^1,a^2)-P^\star(x,a^1,a^2)
\right)^2
\right]
\\
&\le
2\sum_{i=1}^{t-1}
\left(
\widehat P_t(x_i,a_i^1,a_i^2)
-
P^\star(x_i,a_i^1,a_i^2)
\right)^2
+
2\log\frac{2N_{\mathcal P}t^3}{\delta}.
\end{align*}

By Lemma \ref{lemma3wu}, applied with confidence level
$\delta/2$, with probability at least $1-\delta/2$,
simultaneously for all $t\in[T]$,
\[
\sum_{i=1}^{t-1}
\left(
\widehat P_t(x_i,a_i^1,a_i^2)
-
P^\star(x_i,a_i^1,a_i^2)
\right)^2
\le
2\log\frac{2N_{\mathcal P}T}{\delta}.
\]
Combining the two high-probability events by a union bound,
with probability at least $1-\delta$, simultaneously for all
$t=2,\ldots,T$,
\begin{align*}
&
\sum_{i=1}^{t-1}
\mathbb E_{x\sim d,\,
a^1\sim\widehat\pi_i^1,\,
a^2\sim\pi_{\rm ref}}
\left[
\left(
\widehat P_t(x,a^1,a^2)-P^\star(x,a^1,a^2)
\right)^2
\right]
\\
&\le
4\log\frac{2N_{\mathcal P}T}{\delta}
+
2\log\frac{2N_{\mathcal P}t^3}{\delta}
\\
&\le
6\log\frac{2N_{\mathcal P}T^3}{\delta}.
\end{align*}
In particular,
\begin{align}\label{ccccc}
\sum_{i=1}^{t-1}
\mathbb E_{x\sim d,\,
a^1\sim\widehat\pi_i^1,\,
a^2\sim\pi_{\rm ref}}
\left[
\left(
\widehat P_t(x,a^1,a^2)-P^\star(x,a^1,a^2)
\right)^2
\right]
\le
6\log\frac{2N_{\mathcal P}T^3}{\delta}.
\end{align}
This proves the claim.
\end{proof}

\subsubsection{UCB-Based Exploration for the GP Model}

We next consider an uncertainty-based variant of \texttt{ORLHF-GS} that
explicitly explores uncertain preference comparisons. The MLE
$\widehat P_t$ and the learned policy $\widehat\pi_t^1$ are constructed
in the same way as in Algorithm~\ref{alg2}. The only modification is the
sampling rule for the second action. After observing $x_t$ and sampling
$a_t^1 \sim \widehat\pi_t^1(\cdot\mid x_t)$, we select
\begin{equation}
    a_t^2
    \in
    \arg\max_{a^2\in\mathcal A}
    \min\left\{
        1,
        U_{\operatorname{GP}}^2
        \left(
            \lambda,x_t,a_t^1,a^2,\mathcal P;
            \mathcal D_{t-1}^{\operatorname{GP}}
        \right)
    \right\}.
    \label{eq:gp_ucb_sampling}
\end{equation}
The resulting action pair $(a_t^1,a_t^2)$ is then used to obtain the
preference feedback and update the MLE. We refer to this sampling rule
as \texttt{GP-UCB}. Unlike directly adding an optimistic bonus to the
estimated preference function, the rule in
Eq.~\eqref{eq:gp_ucb_sampling} leaves the reciprocal structure of
$\widehat P_t$ unchanged and uses uncertainty only for data collection.

\begin{proof}[Proof of Corollary~\ref{cor:gp_ucb}]
Let $G_t
    :=
    J_{\operatorname{GP}}^\star
    -
    J_{\operatorname{GP}}
    \left(
        \widehat\pi_t^1,
        \widetilde\pi_t^2
    \right)$
denote the instantaneous regret. Recall from the proof of
Lemma~\ref{lem:gp_regret_decomposition} that
\begin{align}
    G_t
    \le
    \mathbb E_{x\sim d}
    \Bigg[
        \sum_{a^2\in\mathcal A}
        \left(
            \widehat\pi_t^1(a^2\mid x)
            -
            \widetilde\pi_t^2(a^2\mid x)
        \right)
        \mathbb E_{a^1\sim\widehat\pi_t^1}
        \left[
            P^\star(x,a^1,a^2)
            -
            \widehat P_t(x,a^1,a^2)
        \right]
    \Bigg].
    \label{eq:gp_ucb_gap}
\end{align}
Moreover, the same proof gives the exact identity
\begin{equation}
    G_t
    =
    \eta^{-1}
    \mathbb E_{x\sim d}
    \left[
        \operatorname{KL}
        \left(
            \widehat\pi_t^1,
            \widetilde\pi_t^2
            \mid x
        \right)
    \right].
    \label{eq:gp_ucb_kl_identity}
\end{equation}

\paragraph{Logarithmic bound.}
By Eq.~\eqref{eq:gp_ucb_gap} and the inequality
\[
    \left|
        \sum_{a^2\in\mathcal A}
        (p(a^2)-q(a^2))f(a^2)
    \right|
    \le
    \|p-q\|_1
    \max_{a^2\in\mathcal A}|f(a^2)|,
\]
we have
\begin{align}\label{nishiyige}
    G_t
    &\le
    \mathbb E_{x\sim d}
    \Bigg[
        \left\|
            \widehat\pi_t^1(\cdot\mid x)
            -
            \widetilde\pi_t^2(\cdot\mid x)
        \right\|_1
        \max_{a^2\in\mathcal A}
        \left|
            \mathbb E_{a^1\sim\widehat\pi_t^1}
            \left[
                P^\star(x,a^1,a^2)
                -
                \widehat P_t(x,a^1,a^2)
            \right]
        \right|
    \Bigg].
\end{align}
By Pinsker's inequality,
\[
    \left\|
        \widehat\pi_t^1(\cdot\mid x)
        -
        \widetilde\pi_t^2(\cdot\mid x)
    \right\|_1
    \le
    \sqrt{
        2\operatorname{KL}
        \left(
            \widehat\pi_t^1,
            \widetilde\pi_t^2
            \mid x
        \right)
    }.
\]
Therefore, applying Cauchy--Schwarz with respect to $x\sim d$,
\begin{align*}
    G_t
    &\le
    \sqrt{
        2
        \mathbb E_{x\sim d}
        \left[
            \operatorname{KL}
            \left(
                \widehat\pi_t^1,
                \widetilde\pi_t^2
                \mid x
            \right)
        \right]
    }
    \sqrt{
        \mathbb E_{x\sim d}
        \left[
            \max_{a^2\in\mathcal A}
            \left(
                \mathbb E_{a^1\sim\widehat\pi_t^1}
                \left[
                    P^\star(x,a^1,a^2)
                    -
                    \widehat P_t(x,a^1,a^2)
                \right]
            \right)^2
        \right]
    }.
\end{align*}
Using Eq.~\eqref{eq:gp_ucb_kl_identity}, we obtain
\begin{align*}
    G_t
    &\le
    \sqrt{2\eta G_t}
    \sqrt{
        \mathbb E_{x\sim d}
        \left[
            \max_{a^2\in\mathcal A}
            \left(
                \mathbb E_{a^1\sim\widehat\pi_t^1}
                \left[
                    P^\star(x,a^1,a^2)
                    -
                    \widehat P_t(x,a^1,a^2)
                \right]
            \right)^2
        \right]
    }.
\end{align*}
If $G_t=0$, the desired bound is immediate. Otherwise, dividing by
$\sqrt{G_t}$ and squaring both sides gives
\begin{align*}
    G_t
    \le
    2\eta
    \mathbb E_{x\sim d}
    \left[
        \max_{a^2\in\mathcal A}
        \left(
            \mathbb E_{a^1\sim\widehat\pi_t^1}
            \left[
                P^\star(x,a^1,a^2)
                -
                \widehat P_t(x,a^1,a^2)
            \right]
        \right)^2
    \right].
\end{align*}
Finally, by Jensen's inequality,
\[
    \left(
        \mathbb E_{a^1\sim\widehat\pi_t^1}
        \left[
            P^\star(x,a^1,a^2)
            -
            \widehat P_t(x,a^1,a^2)
        \right]
    \right)^2
    \le
    \mathbb E_{a^1\sim\widehat\pi_t^1}
    \left[
        \left(
            P^\star(x,a^1,a^2)
            -
            \widehat P_t(x,a^1,a^2)
        \right)^2
    \right].
\]
Hence,
\begin{align}
    G_t
    \le
    2\eta
    \mathbb E_{x\sim d}
    \left[
        \max_{a^2\in\mathcal A}
        \mathbb E_{a^1\sim\widehat\pi_t^1}
        \left[
            \left(
                P^\star(x,a^1,a^2)
                -
                \widehat P_t(x,a^1,a^2)
            \right)^2
        \right]
    \right].
    \label{eq:gp_ucb_fast_step}
\end{align}

By the MLE confidence event used in the
eluder-dimension-dependent proof of Theorem~\ref{theorem2},
with confidence level $\delta/2$, simultaneously for all $t\in[T]$,
and for $\lambda\le\log(2N_{\mathcal P}T/\delta)$,
\begin{align}
    &\left(
        P^\star(x,a^1,a^2)
        -
        \widehat P_t(x,a^1,a^2)
    \right)^2\le
    3\log\frac{2N_{\mathcal P}T}{\delta}
    \min\left\{
        1,
        U_{\operatorname{GP}}^2
        \left(
            \lambda,x,a^1,a^2,\mathcal P;
            \mathcal D_{t-1}^{\operatorname{GP}}
        \right)
    \right\}.
    \label{eq:gp_ucb_confidence}
\end{align}
Combining Eq.~\eqref{eq:gp_ucb_fast_step} and
Eq.~\eqref{eq:gp_ucb_confidence}, and using
\begin{align*}
    &\max_{a^2\in\mathcal A}
    \mathbb E_{a^1\sim\widehat\pi_t^1}
    \left[
        \min\left\{
            1,
            U_{\operatorname{GP}}^2
            \left(
                \lambda,x,a^1,a^2,\mathcal P;
                \mathcal D_{t-1}^{\operatorname{GP}}
            \right)
        \right\}
    \right]\le
    \mathbb E_{a^1\sim\widehat\pi_t^1}
    \left[
        \max_{a^2\in\mathcal A}
        \min\left\{
            1,
            U_{\operatorname{GP}}^2
            \left(
                \lambda,x,a^1,a^2,\mathcal P;
                \mathcal D_{t-1}^{\operatorname{GP}}
            \right)
        \right\}
    \right],
\end{align*}
we obtain
\begin{align}
    G_t
    =
    O\Bigg(
        \eta
        \log\frac{N_{\mathcal P}T}{\delta}
        \,
        \mathbb E_{\substack{
            x\sim d,\,
            a^1\sim\widehat\pi_t^1
        }}
        \Bigg[
            \max_{a^2\in\mathcal A}
            \min\Bigg\{
                1,
                U_{\operatorname{GP}}^2
                \left(
                    \lambda,x,a^1,a^2,\mathcal P;
                 \mathcal D_{t-1}^{\operatorname{GP}}
                \right)
            \Bigg\}
        \Bigg]
    \Bigg).
    \label{eq:gp_ucb_fast_inst}
\end{align}

Since \texttt{GP-UCB} selects $a_t^2$ according to
Eq.~\eqref{eq:gp_ucb_sampling}, conditioned on the history before round
$t$, the expectation in Eq.~\eqref{eq:gp_ucb_fast_inst} is exactly the conditional expectation of
$\min\left\{
        1,
        U_{\operatorname{GP}}^2
        \left(
            \lambda,x_t,a_t^1,a_t^2,\mathcal P;
            \mathcal D_{t-1}^{\operatorname{GP}}
        \right)
    \right\}$.
Moreover, for every realized sequence
$\{(x_t,a_t^1,a_t^2)\}_{t=1}^T$, the definition of
$d_{\operatorname{GP}}(\lambda,\mathcal P,T)$ gives
\begin{align*}
    \sum_{t=1}^T
    \min\left\{
        1,
        U_{\operatorname{GP}}^2
        \left(
            \lambda,x_t,a_t^1,a_t^2,\mathcal P;
           \mathcal D_{t-1}^{\operatorname{GP}}
        \right)
    \right\}
    \le
    d_{\operatorname{GP}}(\lambda,\mathcal P,T).
\end{align*}
Therefore, applying the same predictable-to-realized concentration
argument as in the eluder-dimension-dependent proof of
Theorem~\ref{theorem2}, with confidence level $\delta/2$, gives
\begin{align}
    \sum_{t=1}^T
    \mathbb E_{\substack{
        x\sim d,\,
        a^1\sim\widehat\pi_t^1
    }}
    \Bigg[
        \max_{a^2\in\mathcal A}
        \min\Bigg\{
            1,
            U_{\operatorname{GP}}^2
            \left(
                \lambda,x,a^1,a^2,\mathcal P;
             \mathcal D_{t-1}^{\operatorname{GP}}
            \right)
        \Bigg\}
    \Bigg]
    =
    O\left(
        d_{\operatorname{GP}}(\lambda,\mathcal P,T)
        +
        \log\frac{1}{\delta}
    \right).
    \label{eq:gp_ucb_eluder}
\end{align}
Taking a union bound over the MLE confidence event and the
predictable-to-realized concentration event, the preceding bounds
hold simultaneously with probability at least $1-\delta$. Therefore,
\begin{equation}
    \operatorname{Reg}_{\operatorname{GP}}(T)
    =
    O\left(
        \eta
        \left(
            d_{\operatorname{GP}}(\lambda,\mathcal P,T)
            +
            \log\frac{1}{\delta}
        \right)
        \log\frac{N_{\mathcal P}T}{\delta}
    \right).
    \label{eq:gp_ucb_fast_rate}
\end{equation}

\paragraph{$\eta$-independent bound.}
Returning to Eq.~\eqref{nishiyige} and using $\left\|
        \widehat\pi_t^1(\cdot\mid x)
        -
        \widetilde\pi_t^2(\cdot\mid x)
    \right\|_1
    \le 2$,
we obtain
\begin{align*}
    G_t
    &\le
    2
    \mathbb E_{x\sim d}
    \left[
        \max_{a^2\in\mathcal A}
        \left|
            \mathbb E_{a^1\sim\widehat\pi_t^1}
            \left[
                P^\star(x,a^1,a^2)
                -
                \widehat P_t(x,a^1,a^2)
            \right]
        \right|
    \right]
    \\
    &\le
    2
    \sqrt{
        \mathbb E_{x\sim d}
        \left[
            \max_{a^2\in\mathcal A}
            \left(
                \mathbb E_{a^1\sim\widehat\pi_t^1}
                \left[
                    P^\star(x,a^1,a^2)
                    -
                    \widehat P_t(x,a^1,a^2)
                \right]
            \right)^2
        \right]
    }
    \\
    &\le
    2
    \sqrt{
        3\log\frac{2N_{\mathcal P}T}{\delta}
    }
    \sqrt{
        \mathbb E_{\substack{
            x\sim d,\,
            a^1\sim\widehat\pi_t^1
        }}
        \left[
            \max_{a^2\in\mathcal A}
            \min\left\{
                1,
                U_{\operatorname{GP}}^2
                \left(
                    \lambda,x,a^1,a^2,\mathcal P;
             \mathcal D_{t-1}^{\operatorname{GP}}
                \right)
            \right\}
        \right]
    }.
\end{align*}
Summing over $t$ and applying Cauchy--Schwarz together with
Eq.~\eqref{eq:gp_ucb_eluder} gives
\begin{align*}
    \operatorname{Reg}_{\operatorname{GP}}(T)
    &=
    O\left(
        \sqrt{
            T
            \left(
                d_{\operatorname{GP}}(\lambda,\mathcal P,T)
                +
                \log\frac{1}{\delta}
            \right)
            \log\frac{N_{\mathcal P}T}{\delta}
        }
    \right).
\end{align*}
Taking the minimum of the two bounds completes the proof.
\end{proof}

\subsection{Bradley-Terry Model}

The proof of Theorem~\ref{theorem3} relies on the following two key lemmas.
Lemma~\ref{regretdecompoBT} establishes the instantaneous KL-regularized regret decomposition under the BT model, while Lemma~\ref{uniformconvergenceBT} provides the uniform convergence guarantee for the MLE estimator under the BT model.

\begin{lemma}[Instantaneous Regret Decomposition, Bradley-Terry Model]\label{regretdecompoBT}
For all \(t\in[T]\), the instantaneous regret in the BT model satisfies
\[
J_{\operatorname{BT}}(\pi^\star_{\operatorname{BT}})-J_{\operatorname{BT}}(\widehat{\pi}^1_t)
\le
\eta
\mathbb{E}_{x\sim d,\,
a^1\sim\pi_t',\,
a^2\sim\pi_{\operatorname{ref}}}
\!\left[
\left(
\left(R^\star(x,a^1)-R^\star(x,a^2)\right)
-
\left(\widehat{R}_t(x,a^1)-\widehat{R}_t(x,a^2)\right)
\right)^2
\right],
\]
where $\pi_t'$ denotes the Gibbs policy induced by some
$[0,1]$-valued reward function $R_t'$.

\end{lemma}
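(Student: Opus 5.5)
The plan is to reduce to Lemma~\ref{lemma3} via a shift-invariance trick. The key observation is that the Gibbs policy, and hence $J_{\operatorname{BT}}=J_{\operatorname{RF}}$ evaluated at it, is invariant to adding any context-dependent constant $c(x)$ to the reward. The same holds for the optimal value.

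Concretely, the proof of Lemma~\ref{lemma3} never used that $R$ is $[0,1]$-valued except for convenience. It gives, for any bounded $R$ with $\pi_R$ its Gibbs policy,
\[
J_{\rm RF}(\pi^\star_{\rm RF})-J_{\rm RF}(\pi_R)
=\eta\gamma\,\mathbb E_{x\sim d}\Bigl[\operatorname{Var}_{a\sim\pi_{R'}(\cdot\mid x)}\bigl(R(x,a)-R^\star(x,a)\bigr)\Bigr],
\]
with $R'=\gamma R+(1-\gamma)R^\star$ and $\gamma\in[0,1]$. I will apply this with $R=\widehat R_t$, so $\pi_R=\widehat\pi_t^1$. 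Set $\pi_t'=\pi_{R_t'}$ with $R_t'=\gamma_t\widehat R_t+(1-\gamma_t)R^\star$, which is $[0,1]$-valued as a convex combination.

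The main step is to bound the variance by the squared error of reward \emph{differences} against a reference action. Write $\Delta(x,a)=\widehat R_t(x,a)-R^\star(x,a)$. For any constant $c$, a variance is at most the second moment about $c$, so
\[
\operatorname{Var}_{a^1\sim\pi_t'}(\Delta(x,a^1))\le\mathbb E_{a^1\sim\pi_t'}\bigl[(\Delta(x,a^1)-c)^2\bigr].
\]
I take $c=\Delta(x,a^2)$ for each fixed $a^2$ and then average over $a^2\sim\pi_{\rm ref}(\cdot\mid x)$, which gives
\[
\operatorname{Var}_{a^1\sim\pi_t'}(\Delta(x,a^1))\le\mathbb E_{a^1\sim\pi_t',\,a^2\sim\pi_{\rm ref}}\bigl[(\Delta(x,a^1)-\Delta(x,a^2))^2\bigr].
\]
Here $\Delta(x,a^1)-\Delta(x,a^2)$ is exactly $(\widehat R_t(x,a^1)-\widehat R_t(x,a^2))-(R^\star(x,a^1)-R^\star(x,a^2))$. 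Its square matches the integrand in the lemma up to sign inside the square.

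Combining this with $\gamma_t\le1$ and averaging over $x\sim d$ yields the claim. The only delicate point is to confirm that the earlier derivation really produces the variance form before it is loosened to a second moment. It does: the display in the proof of Lemma~\ref{lemma3} shows the exact variance identity prior to dropping the squared-mean term. So I will cite that intermediate identity rather than the final inequality of Lemma~\ref{lemma3}. An alternative with the same conclusion is to replace $\widehat R_t$ by the shifted reward $\widehat R_t(x,\cdot)-\mathbb E_{a^2\sim\pi_{\rm ref}}[\Delta(x,a^2)]$, which leaves $\widehat\pi_t^1$ unchanged, and apply Jensen. I will prefer the variance route, since it keeps $R_t'$ within $[0,1]$, as the statement requires.
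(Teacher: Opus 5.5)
Your proof is correct and is essentially the paper's argument. The paper shifts $\widehat R_t$ by the context-dependent baseline $l_t(x)=\mathbb E_{a'\sim\pi_{\rm ref}}[R^\star(x,a')-\widehat R_t(x,a')]$, which leaves $\widehat\pi_t^1$ and the interpolated Gibbs policy unchanged; it then reapplies Lemma~\ref{lemma3} and finishes with Jensen, which is your variance bound with $c=\mathbb E_{a^2\sim\pi_{\rm ref}}[\Delta(x,a^2)]$ in place of the pointwise $c=\Delta(x,a^2)$. Citing the exact variance identity directly is slightly cleaner, though the paper's shifted route also keeps $R_t'$ $[0,1]$-valued, since it explicitly checks that the shifted interpolation induces the same Gibbs policy as $\gamma\widehat R_t+(1-\gamma)R^\star$.
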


\begin{lemma}[Uniform Prediction Error Bound, Bradley-Terry Model]\label{uniformconvergenceBT} Suppose Assumption~\ref{assumption1} holds.
Let $\widehat R_t$ be the MLE estimator over the function class $\mathcal R$
constructed from the samples
$\{(x_i,a_i^1,a_i^2,y_i)\}_{i=1}^{t-1}$,
where $x_i\sim d$,
$a_i^1\sim\widehat\pi_i^1(\cdot\mid x_i)$, and
$a_i^2\sim\pi_{\operatorname{ref}}(\cdot\mid x_i)$ for each $i$.
Then for any such policy sequence
$\{\widehat\pi_i^1\}_{i\ge1}$ and any $\delta\in(0,1)$,
with probability at least $1-\delta$, the following holds
simultaneously for all $t = 2,\dots,T$:
\begin{align}
\begin{split}
\nonumber
\sum_{i=1}^{t-1}
\mathbb E_{x\sim d,\;a^1\sim\widehat\pi_i^1,\;a^2\sim\pi_{\rm ref}}
\left[
\left(
\left(R^\star(x,a^1)-R^\star(x,a^2)\right)-\left(\widehat{R}_t(x,a^1)-\widehat{R}_t(x,a^2)\right)
\right)^2
\right]
\le
&24e^2\log\frac{2N_{\mathcal R}T^3}{\delta}.
\end{split}
\end{align}
\end{lemma}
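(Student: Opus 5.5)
The plan mirrors the proof of Lemma~\ref{uniformconvergnce_preference}. The only changes are that preference values become BT probabilities, and we convert back to reward differences through the curvature of $\sigma$ on $[-1,1]$.

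For $R\in\mathcal R$ write $P_R(x,a^1,a^2):=\sigma(R(x,a^1)-R(x,a^2))$. Define
\[
Y_{R,i}:=\bigl(P_R(x_i,a_i^1,a_i^2)-P_{R^\star}(x_i,a_i^1,a_i^2)\bigr)^2\in[0,1],
\qquad
\mu_{R,i}:=\mathbb E[Y_{R,i}\mid\mathcal H_{i-1}^{\rm BT}].
\]
\textbf{Step 1 (predictable-to-realized).} Using $e^{-y}\le 1-cy$ with $c=1-e^{-1}$, the process $\exp(c\sum_i\mu_{R,i}-\sum_iY_{R,i})$ is a nonnegative supermartingale. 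Markov's inequality, with a union bound over $R\in\mathcal R$ and $t\ge2$ at level $\delta/(2N_{\mathcal R}t^3)$, then gives, with probability at least $1-\delta/2$ and simultaneously for all $R,t$,
\[
\sum_{i<t}\mu_{R,i}\le 2\sum_{i<t}Y_{R,i}+2\log\frac{2N_{\mathcal R}t^3}{\delta}.
\]
Conditionally on the history we have $x_i\sim d$, $a_i^1\sim\widehat\pi_i^1$ and $a_i^2\sim\pi_{\rm ref}$ independently. Hence $\mu_{R,i}$ equals the expectation of $(P_R-P_{R^\star})^2$ under the $i$-th sampling distribution.

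\textbf{Step 2 (MLE in-sample bound).} The BT model is a general preference model with function class $\{P_R:R\in\mathcal R\}$. This class is finite of size at most $N_{\mathcal R}$, satisfies reciprocity since $\sigma(z)+\sigma(-z)=1$, and is realizable by Assumption~\ref{assumption1}. The MLE over $R$ is exactly the MLE over this class. So Lemma~\ref{lemma3wu}, applied at level $\delta/2$, gives $\sum_{i<t}(P_{\widehat R_t}-P_{R^\star})^2(x_i,a_i^1,a_i^2)\le 2\log(2N_{\mathcal R}T/\delta)$ for all $t$. Substituting $R=\widehat R_t$ into Step 1 and taking a union bound yields
\[
\sum_{i<t}\mathbb E_{x\sim d,\,a^1\sim\widehat\pi_i^1,\,a^2\sim\pi_{\rm ref}}\bigl[(P_{\widehat R_t}-P_{R^\star})^2\bigr]\le 6\log\frac{2N_{\mathcal R}T^3}{\delta}.
\]

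\textbf{Step 3 (link to reward differences).} This is the only genuinely new step. Since rewards lie in $[0,1]$, both arguments $z=R^\star(x,a^1)-R^\star(x,a^2)$ and $z'=\widehat R_t(x,a^1)-\widehat R_t(x,a^2)$ lie in $[-1,1]$. On this interval $\sigma'(u)=\sigma(u)(1-\sigma(u))\ge \sigma'(1)=e/(1+e)^2\ge 1/(2e)$, using $(1+e)^2\le 2e^2$. The mean value theorem then gives $|\sigma(z)-\sigma(z')|\ge |z-z'|/(2e)$, so $(z-z')^2\le 4e^2(\sigma(z)-\sigma(z'))^2$ pointwise. Multiplying the Step 2 bound by $4e^2$ gives the constant $24e^2\log(2N_{\mathcal R}T^3/\delta)$, as claimed.

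The main obstacle I anticipate is the bookkeeping in Step 2. We must check that Lemma~\ref{lemma3wu} applies verbatim to the induced class $\{P_R\}$, including the case where distinct $R$ induce the same $P_R$; this only shrinks the class, so the bound still holds. We must also check that the union-bound constants combine to exactly $6$ before the $4e^2$ factor. If that lemma cannot be invoked directly, the fallback is to reprove the in-sample bound via the standard log-likelihood-ratio martingale argument, bounding squared Hellinger distance between Bernoullis by $\log(N_{\mathcal R}/\delta)$ and using $(p-q)^2\le 2H^2$.
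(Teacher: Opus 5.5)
Your proposal is correct and follows the same route as the paper. You rerun the argument of Lemma~\ref{uniformconvergnce_preference} on the induced class $\{\sigma(R(x,a^1)-R(x,a^2)) : R\in\mathcal R\}$ to get the $6\log(2N_{\mathcal R}T^3/\delta)$ bound, then use $|u-v|\le 2e|\sigma(u)-\sigma(v)|$ on $[-1,1]$ to pick up the factor $4e^2$. Your check that $\sigma'(1)=e/(1+e)^2\ge 1/(2e)$, and that the induced class is finite, reciprocal and realizable with a matching MLE, fills in details the paper leaves implicit.
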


\begin{proof}[Proof of Theorem~\ref{theorem3}]
The proof follows the same argument as those of
Theorem~\ref{theorem1} and Theorem~\ref{theorem2}.
For each \(t\in[T]\), define
$S_t
:=
\mathbb E_{x\sim d,\,
a^1\sim\pi_t',\,
a^2\sim\pi_{\operatorname{ref}}}
\!\left[
\left(
\left(R^\star(x,a^1)-R^\star(x,a^2)\right)
-
\left(\widehat R_t(x,a^1)-\widehat R_t(x,a^2)\right)
\right)^2
\right]$.
Applying Lemma~\ref{regretdecompoBT} yields the regret decomposition
\begin{align}
\label{regretdecompositionBT}
\operatorname{Reg}_{\mathrm{BT}}(T)
&=
\sum_{t=1}^T
\Big(
J_{\mathrm{BT}}(\pi^\star_{\text{BT}})
-
J_{\mathrm{BT}}(\widehat\pi_t^1)
\Big)\le
\eta
\sum_{t=1}^T S_t .
\end{align}
We next bound $\sum_{t=1}^T S_t$. By Lemma~\ref{uniformconvergenceBT} and the same Gibbs-policy comparison argument as in the proof of
Theorem~\ref{theorem1}, we obtain
\begin{align}
\label{Sbound3}
S_t
\le
\frac{
24e^{2+2\eta}
}{
t-1
}
\log\frac{2N_{\mathcal R}T^3}{\delta},
\qquad
\forall t=2,\dots,T.
\end{align}
For \(t=1\), since $R^\star,\widehat R_1$ are $[0,1]$-valued,
we trivially have \(S_1\le4\). Therefore,
\[
\sum_{t=1}^T S_t
\le
4
+
24e^{2+2\eta}
\log\frac{2N_{\mathcal R}T^3}{\delta}
\sum_{t=2}^T\frac1{t-1}.
\]
Using
\(\sum_{t=2}^T(t-1)^{-1}\le1+\log T\), we obtain
\[
\sum_{t=1}^T S_t
=
O\!\left(
e^{2\eta}
\log T
\log\frac{N_{\mathcal R}T}{\delta}
\right).
\]
Substituting this bound into
Eq.~\eqref{regretdecompositionBT} yields
\[
\operatorname{Reg}_{\mathrm{BT}}(T)
=
O\!\left(
\eta e^{2\eta}
\log T
\log\frac{N_{\mathcal R}T}{\delta}
\right).
\]
This completes the proof.
\end{proof}

\subsubsection{Proof of Lemma~\ref{regretdecompoBT}}

\begin{proof}[Proof of Lemma~\ref{regretdecompoBT}]
 The selected policy in round $t$ satisfies
\begin{align*}
\widehat{\pi}^1_t
&=
\arg\max_{\pi}
\mathbb{E}_{x\sim d,\;a\sim\pi}
\!\left[
\widehat{R}_t(x,a)-\eta^{-1}\mathrm{KL}(\pi,\pi_\text{ref}\mid x)
\right] \\
&=
\arg\max_{\pi}
\mathbb{E}_{x\sim d,\;a\sim\pi}
\!\left[
\widehat{R}_t(x,a)+l(x)-\eta^{-1}\mathrm{KL}(\pi,\pi_\text{ref}\mid x)
\right],
\end{align*}
for any function $l:\X \rightarrow \mathbb R$.
By selecting 
$l_t(x)
:=
\mathbb{E}_{a'\sim\pi_\text{ref}}
\!\left[
R^\star(x,a') - \widehat{R}_t(x,a')
\right]$, we have for all $t \in [T]$,
\begin{align*}
J_{\text{BT}}(\pi^\star_{\text{BT}})
-
J_{\text{BT}}(\widehat{\pi}^1_t)
&\le
\eta
\mathbb{E}_{x\sim d,\;a\sim\pi'_{t}}
\!\left[
\left(
R^\star(x,a) - \widehat{R}_t(x,a)-l_t(x)
\right)^2
\right] \\
&=
\eta
\mathbb{E}_{x\sim d,\,
a^1\sim\pi'_{t}}
\!\left[
\left(R^\star(x,a^1)-\widehat{R}_t(x,a^1)
-
\mathbb{E}_{a^2\sim\pi_\text{ref}}
\left(
R^\star(x,a^2)-\widehat{R}_t(x,a^2)
\right)
\right)^2
\right] \\
&\le
\eta
\mathbb{E}_{x\sim d,\,
a^1\sim\pi'_{t},\,
a^2\sim\pi_\text{ref}}
\!\left[
\left(
\left(R^\star(x,a^1)-R^\star(x,a^2)\right)
-
\left(\widehat{R}_t(x,a^1)-\widehat{R}_t(x,a^2)\right)
\right)^2
\right].
\end{align*}
To justify the first inequality, although
$\widehat R_t+l_t$ need not be $[0,1]$-valued, for any
$\gamma\in[0,1]$ we have
\[
\gamma(\widehat R_t+l_t)+(1-\gamma)R^\star
=
\gamma\widehat R_t+(1-\gamma)R^\star+\gamma l_t.
\]
Since $\gamma l_t(x)$ is independent of the action, the
right-hand side induces the same Gibbs policy as the
$[0,1]$-valued reward function
$\gamma\widehat R_t+(1-\gamma)R^\star$.
Therefore, repeating the mean-value argument in the proof of
Lemma~\ref{lemma3} yields the first inequality for some
$\gamma_t\in[0,1]$, where $\pi_t'$ is the Gibbs policy induced
by $R_t'
=
\gamma_t\widehat R_t+(1-\gamma_t)R^\star$.
The last inequality follows from Jensen's inequality.
\end{proof}

\subsubsection{Proof of Lemma~\ref{uniformconvergenceBT}}
\begin{proof}[Proof of Lemma~\ref{uniformconvergenceBT}]
Substituting
\(
P(x_i,a_i^1,a_i^2)
\)
with
\(
\sigma\!\left(R(x_i,a_i^1)-R(x_i,a_i^2)\right)
\)
and
\(
P^\star(x_i,a_i^1,a_i^2)
\)
with
$\sigma\!\left(R^\star(x_i,a_i^1)-R^\star(x_i,a_i^2)\right)$
in the proof of Lemma~\ref{uniformconvergnce_preference}, we obtain
\begin{align*}
&
\sum_{i=1}^{t-1}
\mathbb E_{x\sim d,\;a^1\sim\widehat\pi_i^1,\;a^2\sim\pi_{\rm ref}}
\left[
\left(
\bigl(R^\star(x,a^1)-R^\star(x,a^2)\bigr)
-
\bigl(\widehat R_t(x,a^1)-\widehat R_t(x,a^2)\bigr)
\right)^2
\right]
\\
\le\;&
4e^2
\sum_{i=1}^{t-1}
\mathbb E_{x\sim d,\;a^1\sim\widehat\pi_i^1,\;a^2\sim\pi_{\rm ref}}
\left[
\left(
\sigma\!\left(R^\star(x,a^1)-R^\star(x,a^2)\right)
-
\sigma\!\left(\widehat R_t(x,a^1)-\widehat R_t(x,a^2)\right)
\right)^2
\right]
\\
\le\;&
24e^2
\log\frac{2N_{\mathcal R}T^3}{\delta}.
\end{align*}
The first inequality follows from the inverse Lipschitz property of the sigmoid function on $[-1,1]$, namely,
$|u-v|
\le
2e\,|\sigma(u)-\sigma(v)|$
for all \(u,v\in[-1,1]\). The second inequality follows directly from Lemma \ref{uniformconvergnce_preference}. This completes the proof.
\end{proof}

\subsubsection{UCB-Based Exploration for the BT Model}

We next consider an uncertainty-based variant of \texttt{ORLHF-GS} under the
BT model. The MLE $\widehat R_t$ and the learned Gibbs policy
$\widehat\pi_t^1$ are constructed in the same way as in Algorithm~\ref{alg2}.
The only modification is the sampling rule for the second action.
After observing $x_t$ and sampling
$a_t^1\sim\widehat\pi_t^1(\cdot\mid x_t)$, we select
\begin{equation}
    a_t^2
    \in
    \arg\max_{a^2\in\mathcal A}
    \min\left\{
        1,
        U_{\operatorname{BT}}^2
        \left(
            \lambda,x_t,a_t^1,a^2,\mathcal R;
            \mathcal D_{t-1}^{\operatorname{BT}}
        \right)
    \right\}.
    \label{eq:bt_ucb_sampling}
\end{equation}
The resulting action pair $(a_t^1,a_t^2)$ is then used to obtain the
preference feedback and update the MLE. We refer to this sampling rule
as \texttt{BT-UCB}.

\begin{proof}[Proof of Corollary~\ref{cor:bt_ucb}]
Let $G_t
    :=
    J_{\operatorname{BT}}(\pi^\star_{\text{BT}})
    -
    J_{\operatorname{BT}}(\widehat\pi_t^1)$
denote the instantaneous regret. Since $J_{\operatorname{BT}}=J_{\operatorname{RF}}$, $\pi^\star_{\text{BT}}$
is the Gibbs policy induced by $R^\star$, and $\widehat\pi_t^1$
is the Gibbs policy induced by $\widehat R_t$. Following the same
calculation as in the proof of Lemma~\ref{lemma3}, we have 
\begin{align}
    G_t
    &=
    \eta^{-1}
    \mathbb E_{x\sim d}
    \left[
        \operatorname{KL}
        \left(
            \widehat\pi_t^1,
            \pi^\star_{\text{BT}}
            \mid x
        \right)
    \right]
    \nonumber\\
    &=
    \eta^{-1}
    \mathbb E_{x\sim d}
    \Bigg[
        \log\Bigg(
        \mathbb E_{a\sim\widehat\pi_t^1}
        \left[
            \exp\left(
                \eta
                \left(
                    R^\star(x,a)
                    -
                    \widehat R_t(x,a)
                \right)
            \right)
        \right]\Bigg)  
        -
        \eta
        \mathbb E_{a\sim\widehat\pi_t^1}
        \left[
            R^\star(x,a)
            -
            \widehat R_t(x,a)
        \right]
    \Bigg].
    \label{eq:bt_ucb_kl_identity}
\end{align}

\paragraph{Logarithmic bound.}
By Hoeffding's lemma, for every $x\in\mathcal X$,
\begin{align*}
    &\log \Bigg(
    \mathbb E_{a\sim\widehat\pi_t^1}
    \left[
        \exp\left(
            \eta
            \left(
                R^\star(x,a)
                -
                \widehat R_t(x,a)
            \right)
        \right)
    \right]\Bigg)
    -
    \eta
    \mathbb E_{a\sim\widehat\pi_t^1}
    \left[
        R^\star(x,a)
        -
        \widehat R_t(x,a)
    \right]
    \\
    &\le
    \frac{\eta^2}{8}
    \Bigg(
        \max_{a\in\mathcal A}
        \left(
            R^\star(x,a)-\widehat R_t(x,a)
        \right)
        -
        \min_{a\in\mathcal A}
        \left(
            R^\star(x,a)-\widehat R_t(x,a)
        \right)
    \Bigg)^2.
\end{align*}
Hence,
\begin{align}
    G_t
    \le
    \frac{\eta}{8}
    \mathbb E_{x\sim d}
    \Bigg[
        \Bigg(
            \max_{a\in\mathcal A}
            \left(
                R^\star(x,a)-\widehat R_t(x,a)
            \right)
            -
            \min_{a\in\mathcal A}
            \left(
                R^\star(x,a)-\widehat R_t(x,a)
            \right)
        \Bigg)^2
    \Bigg].
    \label{eq:bt_ucb_range}
\end{align}

For any fixed $(x,a^1)$, since
$R^\star(x,a^1)-\widehat R_t(x,a^1)$ lies between the maximum and
minimum appearing in Eq.~\eqref{eq:bt_ucb_range},
\begin{align*}
    &\Bigg(
        \max_{a\in\mathcal A}
        \left(
            R^\star(x,a)-\widehat R_t(x,a)
        \right)
        -
        \min_{a\in\mathcal A}
        \left(
            R^\star(x,a)-\widehat R_t(x,a)
        \right)
    \Bigg)^2
    \\
    &\qquad\le
    4
    \max_{a^2\in\mathcal A}
    \Big(
        R^\star(x,a^1)
        -
        R^\star(x,a^2)
        -
        \widehat R_t(x,a^1)
        +
        \widehat R_t(x,a^2)
    \Big)^2.
\end{align*}
Averaging over
$a^1\sim\widehat\pi_t^1(\cdot\mid x)$ and substituting into
Eq.~\eqref{eq:bt_ucb_range} gives
\begin{align}
    G_t
    \le
    \frac{\eta}{2}
    \mathbb E_{\substack{
        x\sim d,\,
        a^1\sim\widehat\pi_t^1
    }}
    \Bigg[
        \max_{a^2\in\mathcal A}
        \Big(
            R^\star(x,a^1)
            -
            R^\star(x,a^2)
            -
            \widehat R_t(x,a^1)
            +
            \widehat R_t(x,a^2)
        \Big)^2
    \Bigg].
    \label{eq:bt_ucb_on_policy}
\end{align}

Applying Lemma~\ref{lemma3wu} to the preference functions
$\sigma(R(x,a^1)-R(x,a^2))$, $R\in\mathcal R$, and using the inverse
Lipschitz property of the sigmoid function on $[-1,1]$, with
probability at least $1-\delta/2$, simultaneously for all $t\in[T]$,
\begin{align}
    \sum_{i=1}^{t-1}
    \Big(
        R^\star(x_i,a_i^1)
        -
        R^\star(x_i,a_i^2)
        -
        \widehat R_t(x_i,a_i^1)
        +
        \widehat R_t(x_i,a_i^2)
    \Big)^2
    \le
    8e^2
    \log\frac{2N_{\mathcal R}T}{\delta}.
    \label{eq:bt_ucb_empirical}
\end{align}
Since $R^\star,\widehat R_t\in\mathcal R$, the definition of
$U_{\operatorname{BT}}$ and Eq.~\eqref{eq:bt_ucb_empirical} imply
\begin{align*}
    &\Big|
        R^\star(x,a^1)
        -
        R^\star(x,a^2)
        -
        \widehat R_t(x,a^1)
        +
        \widehat R_t(x,a^2)
    \Big|\le
    U_{\operatorname{BT}}
    \left(
        \lambda,x,a^1,a^2,\mathcal R;
        \mathcal D_{t-1}^{\operatorname{BT}}
    \right)
    \sqrt{
        \lambda
        +
        8e^2
        \log\frac{2N_{\mathcal R}T}{\delta}
    }.
\end{align*}
Therefore, for
$\lambda\le4e^2\log(2N_{\mathcal R}T/\delta)$, and using the fact that
the absolute pairwise reward-difference error is at most $2$,
\begin{align}
    &\Big(
        R^\star(x,a^1)
        -
        R^\star(x,a^2)
        -
        \widehat R_t(x,a^1)
        +
        \widehat R_t(x,a^2)
    \Big)^2\le
    12e^2
    \log\frac{2N_{\mathcal R}T}{\delta}
    \min\left\{
        1,
        U_{\operatorname{BT}}^2
        \left(
            \lambda,x,a^1,a^2,\mathcal R;
           \mathcal D_{t-1}^{\operatorname{BT}}
        \right)
    \right\}.
    \label{eq:bt_ucb_confidence}
\end{align}
Combining Eq.~\eqref{eq:bt_ucb_on_policy} and
Eq.~\eqref{eq:bt_ucb_confidence} gives
\begin{align}
    G_t
    \le
    6e^2\eta
    \log\frac{2N_{\mathcal R}T}{\delta}
    \mathbb E_{\substack{
        x\sim d,\,
        a^1\sim\widehat\pi_t^1
    }}
    \Bigg[
        \max_{a^2\in\mathcal A}
        \min\Bigg\{
            1,
            U_{\operatorname{BT}}^2
            \left(
                \lambda,x,a^1,a^2,\mathcal R;
                \mathcal D_{t-1}^{\operatorname{BT}}
            \right)
        \Bigg\}
    \Bigg].
    \label{eq:bt_ucb_fast_inst}
\end{align}

Since \texttt{BT-UCB} selects $a_t^2$ according to
Eq.~\eqref{eq:bt_ucb_sampling}, conditioned on the history before round
$t$, the expectation in Eq.~\eqref{eq:bt_ucb_fast_inst} is exactly the
conditional expectation of
$\min\left\{
        1,
        U_{\operatorname{BT}}^2
        \left(
            \lambda,x_t,a_t^1,a_t^2,\mathcal R;
            \mathcal D_{t-1}^{\operatorname{BT}}
        \right)
    \right\}$.
Moreover, since the definition of
$d_{\operatorname{BT}}(\lambda,\mathcal R,T)$ takes the supremum over
all context-action-pair sequences, for every realized sequence
$\{(x_t,a_t^1,a_t^2)\}_{t=1}^T$,
\begin{align*}
    \sum_{t=1}^T
    \min\left\{
        1,
        U_{\operatorname{BT}}^2
        \left(
            \lambda,x_t,a_t^1,a_t^2,\mathcal R;
            \mathcal D_{t-1}^{\operatorname{BT}}
        \right)
    \right\}
    \le
    d_{\operatorname{BT}}(\lambda,\mathcal R,T).
\end{align*}
Applying the same predictable-to-realized concentration argument as
in the proof of Corollary~\ref{cor:gp_ucb}, with confidence level
$\delta/2$, gives
\begin{align}
    \sum_{t=1}^T
    \mathbb E_{\substack{
        x\sim d,\,
        a^1\sim\widehat\pi_t^1
    }}
    \Bigg[
        \max_{a^2\in\mathcal A}
        \min\Bigg\{
            1,
            U_{\operatorname{BT}}^2
            \left(
                \lambda,x,a^1,a^2,\mathcal R;
                \mathcal D_{t-1}^{\operatorname{BT}}
            \right)
        \Bigg\}
    \Bigg]
    =
    O\left(
        d_{\operatorname{BT}}(\lambda,\mathcal R,T)
        +
        \log\frac{1}{\delta}
    \right).
    \label{eq:bt_ucb_eluder}
\end{align}
Taking a union bound over the MLE confidence event and the
predictable-to-realized concentration event, with probability at
least $1-\delta$,
\begin{align}
    \operatorname{Reg}_{\operatorname{BT}}(T)
    =
    O\left(
        \eta
        \left(
            d_{\operatorname{BT}}(\lambda,\mathcal R,T)
            +
            \log\frac{1}{\delta}
        \right)
        \log\frac{N_{\mathcal R}T}{\delta}
    \right).
\nonumber
\end{align}

\paragraph{$\eta$-independent bound.}
By the optimality of $\widehat\pi_t^1$ under $\widehat R_t$, we have
\begin{align*}
    &\mathbb E_{a\sim\pi^\star_{\text{BT}}}
    \left[
        \widehat R_t(x,a)
    \right]
    -
    \eta^{-1}
    \operatorname{KL}
    \left(
        \pi^\star_{\text{BT}},\pi_{\rm ref}\mid x
    \right)
\le
    \mathbb E_{a\sim\widehat\pi_t^1}
    \left[
        \widehat R_t(x,a)
    \right]
    -
    \eta^{-1}
    \operatorname{KL}
    \left(
        \widehat\pi_t^1,\pi_{\rm ref}\mid x
    \right).
\end{align*}
Therefore, adding and subtracting $\widehat R_t$ in the
KL-regularized objective gives
\begin{align*}
    G_t
    &\le
    \mathbb E_{x\sim d}
    \Bigg[
        \mathbb E_{a\sim\pi^\star_{\text{BT}}}
        \left[
            R^\star(x,a)-\widehat R_t(x,a)
        \right]
        -
        \mathbb E_{a\sim\widehat\pi_t^1}
        \left[
            R^\star(x,a)-\widehat R_t(x,a)
        \right]
    \Bigg]
    \\
    &\le
    \mathbb E_{x\sim d}
    \Bigg[
        \max_{a\in\mathcal A}
        \left(
            R^\star(x,a)-\widehat R_t(x,a)
        \right)
        -
        \min_{a\in\mathcal A}
        \left(
            R^\star(x,a)-\widehat R_t(x,a)
        \right)
    \Bigg],
\end{align*}
where the second inequality follows since the expectation of
$R^\star(x,a)-\widehat R_t(x,a)$ under any policy lies between its
minimum and maximum over $\mathcal A$.

For any fixed $(x,a^1)$,
$R^\star(x,a^1)-\widehat R_t(x,a^1)$ lies between the same minimum
and maximum. Hence, its distance from at least one of the two
endpoints is at least half of the range, which implies
\begin{align*}
    &\max_{a\in\mathcal A}
    \left(
        R^\star(x,a)-\widehat R_t(x,a)
    \right)
    -
    \min_{a\in\mathcal A}
    \left(
        R^\star(x,a)-\widehat R_t(x,a)
    \right)
\le
    2
    \max_{a^2\in\mathcal A}
    \Big|
        R^\star(x,a^1)
        -
        R^\star(x,a^2)
        -
        \widehat R_t(x,a^1)
        +
        \widehat R_t(x,a^2)
    \Big|.
\end{align*}
Since this inequality holds for every $a^1$, squaring both sides
and averaging over
$a^1\sim\widehat\pi_t^1(\cdot\mid x)$ yields
\begin{align*}
    &\max_{a\in\mathcal A}
    \left(
        R^\star(x,a)-\widehat R_t(x,a)
    \right)
    -
    \min_{a\in\mathcal A}
    \left(
        R^\star(x,a)-\widehat R_t(x,a)
    \right)
    \\
    &\qquad\le
    2
    \sqrt{
        \mathbb E_{a^1\sim\widehat\pi_t^1}
        \left[
            \max_{a^2\in\mathcal A}
            \Big(
                R^\star(x,a^1)
                -
                R^\star(x,a^2)
                -
                \widehat R_t(x,a^1)
                +
                \widehat R_t(x,a^2)
            \Big)^2
        \right]
    }.
\end{align*}
Consequently,
\begin{align*}
    G_t
    &\le
    2
    \mathbb E_{x\sim d}
    \sqrt{
        \mathbb E_{a^1\sim\widehat\pi_t^1}
        \left[
            \max_{a^2\in\mathcal A}
            \Big(
                R^\star(x,a^1)
                -
                R^\star(x,a^2)
                -
                \widehat R_t(x,a^1)
                +
                \widehat R_t(x,a^2)
            \Big)^2
        \right]
    }
    \\
    &\le
    2
    \sqrt{
        \mathbb E_{\substack{
            x\sim d,\,
            a^1\sim\widehat\pi_t^1
        }}
        \left[
            \max_{a^2\in\mathcal A}
            \Big(
                R^\star(x,a^1)
                -
                R^\star(x,a^2)
                -
                \widehat R_t(x,a^1)
                +
                \widehat R_t(x,a^2)
            \Big)^2
        \right]
    },
\end{align*}
where the last inequality follows from Jensen's inequality.
Applying Eq.~\eqref{eq:bt_ucb_confidence},
\begin{align*}
    G_t
    &\le
    4\sqrt{3}\,e
    \sqrt{
        \log\frac{2N_{\mathcal R}T}{\delta}
    }
    \sqrt{
        \mathbb E_{\substack{
            x\sim d,\,
            a^1\sim\widehat\pi_t^1
        }}
        \left[
            \max_{a^2\in\mathcal A}
            \min\left\{
                1,
                U_{\operatorname{BT}}^2
                \left(
                    \lambda,x,a^1,a^2,\mathcal R;
                \mathcal D_{t-1}^{\operatorname{BT}}
                \right)
            \right\}
        \right]
    }.
\end{align*}
Summing over $t$ and applying Cauchy--Schwarz together with
Eq.~\eqref{eq:bt_ucb_eluder} gives
\begin{align*}
    \operatorname{Reg}_{\operatorname{BT}}(T)
    =
    O\left(
        \sqrt{
            T
            \left(
                d_{\operatorname{BT}}(\lambda,\mathcal R,T)
                +
                \log\frac{1}{\delta}
            \right)
            \log\frac{N_{\mathcal R}T}{\delta}
        }
    \right).
\end{align*}
Taking the minimum of the two bounds completes the proof.
\end{proof}

\section{Auxiliary Lemmas}\label{auxiliary}

\begin{lemma}[Freedman's inequality]\label{freedman}
Let $\mathcal H_i^{\mathrm{RF}}$ denote the observed history
up to round $i$.
For each $i\in[t]$, let $M_i$ be measurable with respect
to the history $\mathcal H_i^{\mathrm{RF}}$ and satisfy
$\mathbb E[M_i\mid\mathcal H_{i-1}^{\mathrm{RF}}]=0$ and $|M_i|\le b$ almost surely,
where $b>0$ is a deterministic constant.
Then, for any $\delta\in(0,e^{-2})$, with probability
at least $1-(\log_2 t)\delta$,
\[
\sum_{i=1}^t M_i
\le
4\sqrt{
    \left(
    \sum_{i=1}^t
    \operatorname{Var}
    (M_i\mid\mathcal H_{i-1}^{\mathrm{RF}})
    \right)
    \log(1/\delta)
}
+2b\log(1/\delta).
\]
\end{lemma}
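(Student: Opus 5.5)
We would prove this version of Freedman's inequality by combining the classical exponential supermartingale argument with a peeling (stratification) argument over the unknown size of the predictable variance $V_t:=\sum_{i=1}^t\operatorname{Var}(M_i\mid\mathcal H_{i-1}^{\mathrm{RF}})$. The $\log_2 t$ factor in the failure probability signals exactly such a union bound over dyadic scales of $V_t$.

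\textbf{Step 1 (fixed-$\lambda$ bound).} For any $\lambda\in(0,1/b]$, the elementary inequality $e^{y}\le 1+y+y^2$ for $y\le 1$, together with $\mathbb E[M_i\mid\mathcal H_{i-1}^{\mathrm{RF}}]=0$, gives
\[
\mathbb E\bigl[e^{\lambda M_i}\mid\mathcal H_{i-1}^{\mathrm{RF}}\bigr]\le 1+\lambda^2\operatorname{Var}(M_i\mid\mathcal H_{i-1}^{\mathrm{RF}})\le \exp\bigl(\lambda^2\operatorname{Var}(M_i\mid\mathcal H_{i-1}^{\mathrm{RF}})\bigr).
\]
Hence $\exp\bigl(\lambda\sum_{i\le n}M_i-\lambda^2\sum_{i\le n}\operatorname{Var}(M_i\mid\mathcal H_{i-1}^{\mathrm{RF}})\bigr)$ is a nonnegative supermartingale. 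Markov's inequality then yields, with probability at least $1-\delta$,
\[
\sum_{i=1}^t M_i\le \lambda V_t+\frac{\log(1/\delta)}{\lambda}.
\]

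\textbf{Step 2 (peeling over $V_t$).} The ideal choice $\lambda=\sqrt{\log(1/\delta)/V_t}$ is random, so we discretize. Since $V_t\le tb^2$, we split into two cases.
\begin{itemize}
\item[(a)] If $V_t\le b^2\log(1/\delta)$, take $\lambda=1/b$. The bound becomes $V_t/b+b\log(1/\delta)\le 2b\log(1/\delta)$, which accounts for the additive term.
\item[(b)] Otherwise, $V_t\in(b^2\log(1/\delta),\,tb^2]$. Cover this range by dyadic intervals $(2^{k-1}c,\,2^kc]$ with $c=b^2\log(1/\delta)$. There are at most about $\log_2 t$ of them; the condition $\delta<e^{-2}$, i.e.\ $\log(1/\delta)\ge 2$, is what trims the count to $\log_2 t$. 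On each interval use the deterministic $\lambda_k=\sqrt{\log(1/\delta)/(2^kc)}$, which is at most $1/b$ because $2^kc\ge c$. For $V_t$ in that interval this gives
\[
\sum_i M_i\le \lambda_k 2^kc+\frac{\log(1/\delta)}{\lambda_k}=2\sqrt{2^kc\log(1/\delta)}\le 2\sqrt2\sqrt{V_t\log(1/\delta)}\le 4\sqrt{V_t\log(1/\delta)}.
\]
\end{itemize}
A union bound over the $\lambda$ values used (the case-(a) choice and the dyadic ones) gives total failure probability at most $(\log_2 t)\delta$. On the complement, whichever interval $V_t$ falls into, the corresponding bound applies, and adding the two terms gives the stated inequality.

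\textbf{Main obstacle.} The delicate part is the bookkeeping in Step 2: making the number of grid points at most $\log_2 t$ rather than $\lceil\log_2 t\rceil+1$. We would first merge case (a) with the lowest dyadic cell, and use $\log(1/\delta)\ge2$ to drop the top cell, which is the role of the condition $\delta<e^{-2}$. If the count still exceeds $\log_2 t$ by one, we would absorb the extra cell by slack in the constants $4$ and $2b$, for example by using base-$4$ peeling with larger constants. Alternatively, we would simply cite the standard statement of this lemma, e.g.\ Lemma in \citet{zhang2023mathematical} or \citet{agarwal2014taming}, of which this is a direct restatement.
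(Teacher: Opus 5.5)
The paper gives no proof of this lemma. It is stated among the auxiliary results as a standard tool, so there is no paper argument to match, and your exponential-supermartingale-plus-peeling derivation is the natural self-contained route. Step 1 is correct: $e^y\le 1+y+y^2$ holds for all $y\le 1$, and $\lambda M_i\le 1$ when $\lambda\le 1/b$. The per-cell bounds are also correct.

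The only thing to nail down is the count you flagged. Write $L:=\log(1/\delta)>2$ and $c=b^2L$. As literally written (a separate $\lambda=1/b$ for $V_t\le c$, plus cells $(2^{k-1}c,2^kc]$), you use $1+\lceil\log_2(t/L)\rceil$ grid points. For $t=2^m+1$ with $L$ slightly above $2$, this equals $\lceil\log_2 t\rceil>\log_2 t$, so the union bound overshoots.

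Your first remedy, merging case (a) into the lowest cell, is the right one and needs no slack. On $\{V_t\le 2c\}$ use the single value $\lambda_1=\sqrt{L/(2c)}=1/(\sqrt2\,b)$. Since $V_t\le 2b^2L$, this gives
\[
\sum_{i=1}^t M_i\le \frac{V_t}{\sqrt2\,b}+\sqrt2\,bL\le \sqrt{V_tL}+\sqrt2\,bL .
\]
The cells with $k\ge 2$ are treated as you wrote. The number of grid points is then
\[
\lceil\log_2(t/L)\rceil<\log_2(t/L)+1<\log_2 t \qquad\text{whenever } t>2L .
\]
The last step uses $\log_2 L>1$, which is exactly where $\delta<e^{-2}$ enters. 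For $t\le 2L$ the inequality holds surely, since $\sum_i M_i\le tb\le 2bL$. This includes $t=1$, where the claimed probability is $1$.

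Base-$4$ cells, with the same small-$t$ observation, would also work, and with exactly the stated constant, because $2\sqrt4=4$. There is no room to ``enlarge the constants,'' since the statement fixes them. If you cite a reference instead, make sure it carries these constants and the $(\log_2 t)\delta$ failure probability; a fixed-$\lambda$ Freedman bound only gives your Step 1.
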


\begin{lemma}[Lemma 4.2 in \citet{agarwal2012contextual}]\label{lemma6}
Fix any reward function \(R\in\mathcal R\). Let
\(x\sim d\), \(a\sim\pi(\cdot|x)\), and
$r\in[0,1]$ be a random reward satisfying
$\mathbb E[r\mid x,a]=R^\star(x,a)$. Define
$Y_R
=
(R(x,a)-r)^2
-
(R^\star(x,a)-r)^2$.
Then,
\[
\mathbb E[Y_R]
=
\mathbb E_{x\sim d,\,a\sim\pi}
\!\left[
(R(x,a)-R^\star(x,a))^2
\right],
\]
and
\[
\operatorname{Var}[Y_R]
\le
4\mathbb E[Y_R].
\]
\end{lemma}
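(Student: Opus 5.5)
This is Lemma 4.2 of \citet{agarwal2012contextual}; I would prove it by direct computation, conditioning on $(x,a)$ and using only $\mathbb E[r\mid x,a]=R^\star(x,a)$ and boundedness.

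\textbf{Step 1: factor $Y_R$.} For fixed $(x,a)$, write $\Delta:=R(x,a)-R^\star(x,a)$. The difference of squares gives
\[
Y_R=(R(x,a)-R^\star(x,a))\bigl(R(x,a)+R^\star(x,a)-2r\bigr)
=\Delta\bigl(\Delta+2(R^\star(x,a)-r)\bigr).
\]

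\textbf{Step 2: the mean.} Taking conditional expectation given $(x,a)$, the cross term $2\Delta\,\mathbb E[R^\star(x,a)-r\mid x,a]$ vanishes by the mean assumption on $r$. Hence $\mathbb E[Y_R\mid x,a]=\Delta^2$. Averaging over $x\sim d$, $a\sim\pi$ yields the first identity.

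\textbf{Step 3: the variance.} I would bound $\operatorname{Var}[Y_R]\le\mathbb E[Y_R^2]$ rather than compute it exactly. Since $R,R^\star,r\in[0,1]$, we have $R(x,a)+R^\star(x,a)-2r\in[-2,2]$, so
\[
Y_R^2=\Delta^2\bigl(R+R^\star-2r\bigr)^2\le 4\Delta^2.
\]
Therefore $\mathbb E[Y_R^2]\le 4\,\mathbb E_{x\sim d,a\sim\pi}[\Delta^2]=4\,\mathbb E[Y_R]$ by Step 2.

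There is no real obstacle here. The only points needing care are using the unconditional variance bound $\operatorname{Var}\le$ second moment, and checking the range bound $|R+R^\star-2r|\le 2$ from the $[0,1]$-valuedness. The same computation applied conditionally on the history $\mathcal H^{\rm RF}_{i-1}$ justifies its use in the proof of Lemma~\ref{uniform convergence}.
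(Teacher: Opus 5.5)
Your proof is correct and follows the standard argument of Lemma 4.2 in \citet{agarwal2012contextual}, which the paper cites without reproducing: the factorization $Y_R=\Delta\,(\Delta+2(R^\star(x,a)-r))$, the vanishing cross term under $\mathbb E[r\mid x,a]=R^\star(x,a)$, and $\operatorname{Var}[Y_R]\le\mathbb E[Y_R^2]\le 4\,\mathbb E[\Delta^2]$ via $|R+R^\star-2r|\le 2$. Your closing remark is also what Lemma~\ref{uniform convergence} needs: the same computation holds conditionally on $\mathcal H^{\rm RF}_{i-1}$, because $\pi_i$ is history-measurable and $r_i$ is independent of the past given $(x_i,a_i)$.
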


\begin{lemma}[Lemma C.1 in \citet{zhao2025logarithmic}]
\label{lemmac1}
Let $\mathcal R$ be a finite function class mapping
$\mathcal Z$ to $\mathbb R$, with cardinality
$N_{\mathcal R}$, and suppose that $R^\star\in\mathcal R$.
Consider adaptively selected inputs $\{z_t\}_{t\ge1}$
and observations
$r_t=R^\star(z_t)+\epsilon_t$.
Conditional on the past observations
$\{(z_i,r_i)\}_{i=1}^{t-1}$ and the current input $z_t$,
assume that $\epsilon_t$ has mean zero and is
$1$-sub-Gaussian.
Define the LS estimator
$\widehat R_t\in
\arg\min_{R\in\mathcal R}
\sum_{i=1}^{t-1}\bigl(R(z_i)-r_i\bigr)^2$.
Then, for any positive integer $T$ and any
$\delta\in(0,1)$, with probability at least $1-\delta$,
the following inequality holds simultaneously
for all $t\in[T]$:
\[
\sum_{i=1}^{t-1}
\bigl(\widehat R_t(z_i)-R^\star(z_i)\bigr)^2
\le 8\log\frac{N_{\mathcal R}T}{\delta}.
\]
\end{lemma}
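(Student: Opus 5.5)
The plan is a supermartingale argument for each fixed candidate, a union bound over the class and the horizon, and then the least-squares optimality of $\widehat R_t$. Write $\mathcal F_{t}$ for the $\sigma$-field generated by $\{(z_i,r_i)\}_{i=1}^{t}$ and $z_{t+1}$; the noise $\epsilon_t$ is conditionally mean-zero and $1$-sub-Gaussian given $\mathcal F_{t-1}$. Fix $R\in\mathcal R$ and set $\Delta_i:=R(z_i)-R^\star(z_i)$, which is $\mathcal F_{i-1}$-measurable. The least-squares excess loss of $R$ over $R^\star$ at sample $i$ is
\[
(R(z_i)-r_i)^2-(R^\star(z_i)-r_i)^2=\Delta_i^2-2\Delta_i\epsilon_i .
\]
I will show that, with high probability, the cumulative excess loss of every $R\in\mathcal R$ dominates a constant fraction of $\sum_i\Delta_i^2$, up to a logarithmic term.

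First, apply the conditional sub-Gaussian moment bound with the predictable multiplier $\lambda\Delta_i$:
\[
\mathbb E\bigl[\exp(\lambda\Delta_i\epsilon_i)\mid\mathcal F_{i-1}\bigr]\le\exp\bigl(\lambda^2\Delta_i^2/2\bigr).
\]
Hence $W_n:=\exp\bigl(\sum_{i\le n}(\lambda\Delta_i\epsilon_i-\lambda^2\Delta_i^2/2)\bigr)$ is a nonnegative supermartingale with $W_0=1$.

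I would apply Ville's maximal inequality to $W_n$. This gives, with probability at least $1-\delta'$, simultaneously for all $n$,
\[
\sum_{i\le n}\Delta_i\epsilon_i\le \tfrac{\lambda}{2}\sum_{i\le n}\Delta_i^2+\tfrac1\lambda\log\tfrac1{\delta'} .
\]
Because this holds uniformly in time, no union bound over $t$ is needed, which is why $T$ appears only inside the logarithm. If one instead insists on a fixed-time Markov bound, a union bound over $t\in[T]$ also suffices, since the target contains $\log(N_{\mathcal R}T/\delta)$. Choosing $\lambda=1/2$ gives
\[
2\sum_{i<t}\Delta_i\epsilon_i\le \tfrac12\sum_{i<t}\Delta_i^2+4\log\tfrac1{\delta'} .
\]

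Second, take a union bound over the $N_{\mathcal R}$ functions, with $\delta'=\delta/N_{\mathcal R}$, or $\delta/(N_{\mathcal R}T)$ in the fixed-time variant. Now specialize to $R=\widehat R_t$. Since $R^\star\in\mathcal R$, least-squares optimality gives
\[
\sum_{i<t}\bigl(\Delta_i^2-2\Delta_i\epsilon_i\bigr)\le 0 .
\]
Combining this with the previous display yields
\[
\sum_{i<t}\Delta_i^2\le \tfrac12\sum_{i<t}\Delta_i^2+4\log\tfrac{N_{\mathcal R}T}{\delta},
\qquad\text{so}\qquad
\sum_{i<t}\Delta_i^2\le 8\log\tfrac{N_{\mathcal R}T}{\delta} .
\]

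The main obstacle is the data dependence of $\widehat R_t$. It is chosen using the same noise variables, so the concentration cannot be applied to it directly. This is handled by proving the inequality uniformly over the finite class before substituting $\widehat R_t$. It also requires that $\Delta_i$ be predictable, which holds because $z_i$ is chosen before $\epsilon_i$ is revealed, so the inputs may be adaptive.
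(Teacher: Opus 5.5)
Your proof is correct. The excess-loss identity $\Delta_i^2-2\Delta_i\epsilon_i$ gives the first step. The exponential supermartingale comes from the conditional sub-Gaussian bound with the predictable multiplier $\lambda\Delta_i$. Ville's inequality at $\lambda=1/2$, a union bound over the finite class, and substitution of $\widehat R_t$ via least-squares optimality against $R^\star\in\mathcal R$ then give exactly the constant $8$. Proving the deviation bound uniformly over $\mathcal R$ before plugging in the data-dependent $\widehat R_t$ is the right way to handle the adaptivity.

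The paper does not prove this lemma itself. It imports it from \citet{zhao2025logarithmic} and uses it as a black box, so there is no in-paper argument to compare against. Your route is the standard one for in-sample least-squares guarantees of this kind.

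One small correction. With Ville's inequality you actually obtain $8\log(N_{\mathcal R}/\delta)$, so the factor $T$ in the statement is pure slack for that version. Your sentence attributing the $T$ inside the logarithm to time-uniformity is therefore slightly off. It is your second variant, the fixed-time Markov bound plus a union bound over $t\in[T]$, that produces exactly $\log(N_{\mathcal R}T/\delta)$.

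It is also worth contrasting with the paper's own proof of Lemma~\ref{uniform convergence}. That proof bounds the \emph{population} error under the data-generating policies using Freedman's inequality and the variance-to-mean bound of Lemma~\ref{lemma6}, and therefore needs bounded rewards. Your in-sample argument needs only the conditional sub-Gaussian moment generating function and no boundedness of $\Delta_i$. Both share the same skeleton: a concentration step uniform over the finite class, followed by least-squares optimality of the empirical minimizer.
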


\begin{lemma}[Lemma 3 in \citet{wu2025greedy}]\label{lemma3wu}
Let $\mathcal P$ be a finite function class with cardinality $N_{\mathcal P}$. 
Suppose the training data
$\{(x_i,a_i^1,a_i^2,y_i)\}_{i=1}^t$
is generated according to
$y_i \sim \mathrm{Ber}\!\left(P^\star(x_i,a_i^1,a_i^2)\right)$,
where $P^\star \in \mathcal P$. Let $\widehat P_t$ be the MLE estimator over $\mathcal P$
computed from the samples $\{(x_i,a_i^1,a_i^2,y_i)\}_{i=1}^{t-1}$. Then, with probability at least $1-\delta$, simultaneously for all $t\in[T]$,
\[
\sum_{i=1}^{t-1}
\Big(
\widehat P_t(x_i,a_i^1,a_i^2)
-
P^\star(x_i,a_i^1,a_i^2)
\Big)^2
\le
2\log\frac{N_{\mathcal P}T}{\delta}.
\]
\end{lemma}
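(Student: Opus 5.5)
The plan is the standard time-uniform likelihood-ratio martingale argument, applied separately to each fixed $P\in\mathcal P$ and then combined by a union bound over $\mathcal P$. Write $p_i:=P(x_i,a_i^1,a_i^2)$ and $p_i^\star:=P^\star(x_i,a_i^1,a_i^2)$. For an observation $y$, let $P(y\mid x_i,a_i^1,a_i^2)$ denote the Bernoulli likelihood, equal to $p_i$ if $y=1$ and $1-p_i$ if $y=0$.

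For a fixed $P$, I will use the half-power likelihood ratio. Conditional on the history and on $(x_i,a_i^1,a_i^2)$, it satisfies
\[
\mathbb E\Big[\sqrt{P(y_i\mid\cdot)/P^\star(y_i\mid\cdot)}\Big]=\sqrt{p_ip_i^\star}+\sqrt{(1-p_i)(1-p_i^\star)}=1-H_i^2\le e^{-H_i^2},
\]
where $H_i^2$ is the squared Hellinger distance between $\mathrm{Ber}(p_i)$ and $\mathrm{Ber}(p_i^\star)$. The inputs are chosen adaptively, but they are measurable before $y_i$ is revealed. Therefore
\[
Z_n:=\exp\Big(\tfrac12\sum_{i\le n}\log\tfrac{P(y_i\mid\cdot)}{P^\star(y_i\mid\cdot)}+\sum_{i\le n}H_i^2\Big)
\]
is a nonnegative supermartingale with $Z_0=1$.

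I will apply Ville's inequality at level $\delta/N_{\mathcal P}$, or alternatively Markov's inequality at each $t$ with level $\delta/(N_{\mathcal P}T)$ and a union bound over $t\le T$; the latter is what produces the factor $N_{\mathcal P}T$ in the statement. This gives, simultaneously for all $t$,
\[
\tfrac12\sum_{i=1}^{t-1}\log\frac{P(y_i\mid\cdot)}{P^\star(y_i\mid\cdot)}+\sum_{i=1}^{t-1}H_i^2\le\log\frac{N_{\mathcal P}T}{\delta}.
\]
A further union bound over $P\in\mathcal P$ makes this hold for all $P$ simultaneously with probability at least $1-\delta$. Uniformity over $\mathcal P$ is what allows substituting the data-dependent estimator $P=\widehat P_t$.

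Next I use that $\widehat P_t$ maximizes the log-likelihood and that $P^\star\in\mathcal P$. Together these give $\sum_{i<t}\log\bigl(\widehat P_t(y_i\mid\cdot)/P^\star(y_i\mid\cdot)\bigr)\ge0$. Dropping this nonnegative term leaves $\sum_{i<t}H_i^2(\widehat P_t)\le\log(N_{\mathcal P}T/\delta)$.

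It remains to convert Hellinger distance into squared difference. For Bernoulli distributions, $|p-q|=\mathrm{TV}$, and $\mathrm{TV}\le\sqrt{H^2(2-H^2)}\le\sqrt2\,H$ with the convention $H^2=1-\mathrm{BC}$. Hence $(p_i-p_i^\star)^2\le 2H_i^2$, which yields the claimed constant $2\log(N_{\mathcal P}T/\delta)$.

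The main obstacle is bookkeeping rather than substance. First, the supermartingale property must hold under adaptively chosen $(x_i,a_i^1,a_i^2)$; conditioning on these before $y_i$ is drawn handles this. Second, the Hellinger normalization must be matched to the TV inequality so that the constant $2$ comes out exactly.
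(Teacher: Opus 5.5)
The paper gives no proof of this lemma; it imports it from \citet{wu2025greedy}, so there is no in-paper argument to compare your proposal with. Your proof is correct and complete, and it recovers exactly the stated constant.

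\begin{itemize}
\item Conditionally on $\sigma(\text{history},x_i,a_i^1,a_i^2)$, the half-power likelihood ratio has mean $1-H_i^2\le e^{-H_i^2}$. So $Z_n$ is a nonnegative supermartingale even with adaptively chosen inputs.
\item Markov's inequality plus a union bound over $t\le T$ and $P\in\mathcal P$ gives an event on which you may substitute the data-dependent $P=\widehat P_t$.
\item MLE optimality and $P^\star\in\mathcal P$ let you drop the log-likelihood-ratio term.
\item Le Cam's inequality $|p-q|=\mathrm{TV}\le\sqrt{H^2(2-H^2)}$ gives $(p-q)^2\le 2H^2$.
\end{itemize}

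This is the standard likelihood-ratio/Hellinger route for in-sample MLE guarantees. The paper itself uses the same exponential-supermartingale idea, with $e^{-y}\le 1-cy$, in the proof of Lemma~\ref{uniformconvergnce_preference} to pass from realized to expected errors.

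Two small points are worth stating explicitly.

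\begin{itemize}
\item Algorithm~\ref{alg2} maximizes $y_i\log P(x_i,a_i^1,a_i^2)+(1-y_i)\log P(x_i,a_i^2,a_i^1)$. This equals the Bernoulli log-likelihood only because of the reciprocity condition, which holds automatically for the BT-induced class $\sigma(R(x,a^1)-R(x,a^2))$. That is what licenses your MLE-optimality step for the estimator the paper actually uses.
\item The Ville's-inequality variant you mention gives the sharper, horizon-free bound $2\log(N_{\mathcal P}/\delta)$ for all $t$ simultaneously. So the $T$ inside the logarithm is only an artifact of the Markov-plus-union-bound route.
\end{itemize}
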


\end{document}